\newcommand*{\NEURIPS}{}
\newcommand*{\CAMREADY}{}
	\renewcommand{\cite}[1]{\citep{#1}}
	\definecolor{mydarkblue}{rgb}{0,0.08,0.5}
	\def\footnoterule{\kern-3pt \hrule width 12pc \kern 2.6pt }
	\renewenvironment{abstract}%
	{%
		\vskip 0in%
		\centerline%
		{\large\bf Abstract}%
		\vspace{-1ex}%
		\begin{quote}%
		}
		{
			\par%
		\end{quote}%
		\vskip 0ex%
	}
	\title{\vskip -4ex \bf{Paper Title} \vskip 0ex}
	\author{%
		\textbf{Author 1}\\ 
		Some Institute\\
		\texttt{\normalsize author1@email} \\
		\and
		\textbf{Author 2}\\
		Some Institute\\
		\texttt{\normalsize author2@email} \\
	}	
	\date{}
	\newtheorem{claim}[theorem]{Claim}
	\newtheorem{fact}[theorem]{Fact}
	\newtheorem{procedure}{Procedure}
	\newtheorem{conjecture}{Conjecture}	
	\newtheorem{hypothesis}{Hypothesis}	
	\newcommand{\qed}{\hfill\ensuremath{\blacksquare}}
	\newtheorem{lemma}{Lemma}
	\newtheorem{corollary}{Corollary}
	\newtheorem{theorem}{Theorem}
	\newtheorem{proposition}{Proposition}
	\newtheorem{remark}{Remark}
	\theoremstyle{definition}
	\newtheorem{definition}{Definition}
\newcommand\blue[1]{{\color{blue}#1}}
\definecolor{green}{rgb}{0.0, 0.5, 0.0}
\newcommand\green[1]{{\color{green}#1}}
\definecolor{xcolor-gray}{gray}{0.95}
\DeclareFontFamily{U}{mathx}{\hyphenchar\font45}
\DeclareFontShape{U}{mathx}{m}{n}{<-> mathx10}{}
\DeclareSymbolFont{mathx}{U}{mathx}{m}{n}
\DeclareMathAccent{\widebar}{0}{mathx}{"73}
\definecolor{darkspringgreen}{rgb}{0.09, 0.45, 0.27}
	\renewcommand{\endnote}[1]{\null} 
\let\note\footnote
	\newcommand*{\ABBR}{}
	\newcommand*{\ABBR}{}
	\newcommand*{\ABBR}{}
	\newcommand*{\ABBR}{}
	\newcommand*{\ABBR}{}
	\newcommand{\eg}{{\it e.g.}}
	\newcommand{\ie}{{\it i.e.}}
\begin{document}
	
	\def\be{\begin{equation}}
	\def\ee{\end{equation}}
\def\beas{\begin{eqnarray*}}
	\def\eeas{\end{eqnarray*}}
\def\bea{\begin{eqnarray}}
	\def\eea{\end{eqnarray}}
\newcommand{\hbf}{{\mathbf h}}
\newcommand{\xbf}{{\mathbf x}}
\newcommand{\ybf}{{\mathbf y}}
\newcommand{\zbf}{{\mathbf z}}
\newcommand{\ubf}{{\mathbf u}}
\newcommand{\vbf}{{\mathbf v}}
\newcommand{\rr}{{\mathbf r}}
\newcommand{\wbf}{{\mathbf w}}
\newcommand{\sbf}{{\mathbf s}}
\newcommand{\ebf}{{\mathbf e}}
\newcommand{\fbf}{{\mathbf f}}
\newcommand{\abf}{{\mathbf a}}
\newcommand{\bbf}{{\mathbf b}}
\newcommand{\cbf}{{\mathbf c}}
\newcommand{\dbf}{{\mathbf d}}
\newcommand{\obf}{{\mathbf o}}
\newcommand{\pbf}{{\mathbf p}}
\newcommand{\bnu}{{\boldsymbol \nu}}
\newcommand{\1}{{\mathbf 1}}
\newcommand{\0}{{\mathbf 0}}
\newcommand{\Abf}{{\mathbf A}}
\newcommand{\Bbf}{{\mathbf B}}
\newcommand{\Hbf}{{\mathbf H}}
\newcommand{\Ibf}{{\mathbf I}}
\newcommand{\Wbf}{{\mathbf W}}
\newcommand{\Xbf}{{\mathbf X}}
\newcommand{\Ybf}{{\mathbf Y}}
\newcommand{\Tbf}{{\mathbf T}}
\newcommand{\Qbf}{{\mathbf Q}}
\newcommand{\Obf}{{\mathbf O}}
\newcommand{\Rbf}{{\mathbf R}}
\newcommand{\Sbf}{{\mathbf S}}
\newcommand{\Ubf}{{\mathbf U}}
\newcommand{\Vbf}{{\mathbf V}}
\newcommand{\Kbf}{{\mathbf K}}
\newcommand{\Zbf}{{\mathbf Z}}
\newcommand{\Acal}{{\mathcal A}}
\newcommand{\Bcal}{{\mathcal B}}
\newcommand{\Ccal}{{\mathcal C}}
\newcommand{\Dcal}{{\mathcal D}}
\newcommand{\Ecal}{{\mathcal E}}
\newcommand{\Fcal}{{\mathcal F}}
\newcommand{\Gcal}{{\mathcal G}}
\newcommand{\Hcal}{{\mathcal H}}
\newcommand{\Kcal}{{\mathcal K}}
\newcommand{\Mcal}{{\mathcal M}}
\newcommand{\Ncal}{{\mathcal N}}
\newcommand{\Pcal}{{\mathcal P}}
\newcommand{\Rcal}{{\mathcal R}}
\newcommand{\Scal}{{\mathcal S}}
\newcommand{\Tcal}{{\mathcal T}}
\newcommand{\Ucal}{{\mathcal U}}
\newcommand{\Vcal}{{\mathcal V}}
\newcommand{\Wcal}{{\mathcal W}}
\newcommand{\Xcal}{{\mathcal X}}
\newcommand{\Ycal}{{\mathcal Y}}
\newcommand{\Zcal}{{\mathcal Z}}
\newcommand{\Ocal}{{\mathcal O}}
\newcommand{\Ical}{{\mathcal I}}
\newcommand{\Lcal}{\mathcal{L}}
\newcommand{\C}{{\mathbb C}}
\newcommand{\R}{{\mathbb R}}
\newcommand{\N}{{\mathbb N}}
\newcommand{\K}{{\mathbb K}}
\newcommand{\Z}{{\mathbb Z}}
\newcommand{\alphabf}{{\boldsymbol{\alpha}}}
\newcommand{\betabf}{{\boldsymbol{\beta}}}
\newcommand{\gammabf}{{\boldsymbol{\gamma}}}
\newcommand{\lambdabf}{{\boldsymbol{\lambda}}}
\newcommand{\mubf}{{\boldsymbol{\mu}}}
\newcommand{\sigmabf}{{\boldsymbol{\sigma}}}
\newcommand{\psibf}{{\boldsymbol{\psi}}}
\newcommand{\epsbf}{{\boldsymbol{\epsilon}}}
\newcommand{\rI}{\text{I}}
\newcommand{\rII}{\text{II}}
\newcommand{\normbig}[1]{\big \| #1 \big \|}
\newcommand{\normnoflex}[1]{\| #1 \|}
\newcommand{\indc}[1]{\mathbbm{1}\left[#1\right]}
\newcommand{\setprod}[2]{\underset{#1}{{\overset{#2}{\times}}}}
\newcommand{\inprod}[2]  {\left\langle{#1},{#2}\right\rangle}
\newcommand{\inprodbig}[2]  {\big \langle{#1},{#2} \big \rangle}
\newcommand{\inprodBig}[2]  {\Big \langle{#1},{#2} \Big \rangle}
\newcommand{\inprodbigg}[2]  {\bigg \langle{#1},{#2} \bigg \rangle}
\newcommand{\inprodnoflex}[2]{\langle{#1},{#2}\rangle}
\newcommand{\shortminus}{\scalebox{0.8}[1]{-}}
\newcommand{\shortcdots}{\scalebox{0.5}[1]{\text{ \(\cdots \)}}}
\newcommand{\mexu}[2]  {\underset{#2}{MEX_{#1}}}
\newcommand{\cupdot}{\mathbin{\mathaccent\cdot\cup}}
\newcommand{\otimesuo}[2]  {\underset{#1}{\overset{#2}{\otimes}}}
\newcommand{\odotuo}[2]  {\underset{#1}{\overset{#2}{\odot}}}
\newcommand\shorteq{~\rule[.4ex]{6pt}{0.4pt}\llap{\rule[.8ex]{6pt}{0.4pt}}~}
\newcommand{\vect}[1]{\text{vec}\brk*{#1}} 
\newcommand{\vectbig}[1]{\text{vec}\brk1{#1}} 
\newcommand{\vectnoflex}[1]{\text{vec}\brk{#1}} 
\newcommand{\mat}[2]{\delim{\llbracket}{\rrbracket}*{#1; #2}}
\newcommand{\matnoflex}[2]{\delim{\llbracket}{\rrbracket}{#1; #2}}
\newcommand{\otimesg}{\otimes_g}
\newcommand{\tenp}{\otimes}
\newcommand{\kronp}{\circ}
\newcommand{\kharp}{*}
\newcommand{\hadmp}{\odot}
\newcommand{\Tbar}{\bar{T}}
\newcommand{\phibar}{\bar{\phi}}
\newcommand{\nubar}{\bar{\nu}}
\newcommand{\abar}{\bar{a}}
\newcommand{\aaabar}{\bar{\aaa}}
\newcommand{\NNbar}{\bar{\NN}}
\newcommand{\Sbar}{\bar{S}}
\newcommand{\Qbar}{\bar{Q}}
\newcommand{\Ibar}{\bar{I}}
\newcommand{\Abar}{\bar{A}}
\newcommand{\sign}{\mathrm{sign}}
\newcommand{\reduce}[2]{#1|_{#2}}

\newcommand{\ep}{\epsilon}
\newcommand{\ra}{\rightarrow}
\newcommand{\MA}{\mathcal{A}}
\newcommand{\MB}{\mathcal{B}}
\newcommand{\MC}{\mathcal{C}}
\newcommand{\MD}{\mathcal{D}}
\newcommand{\MT}{\mathcal{T}}
\newcommand{\MH}{\mathcal{H}}
\newcommand{\BN}{\mathbb{N}}
\newcommand{\BR}{\mathbb{R}}
\newcommand{\BC}{\mathbb{C}}
\newcommand{\pr}{\mathbb{P}}
\newcommand{\ex}{\mathbb{E}}
\newcommand{\rank}{\mathrm{rank}}
\newcommand{\erank}{\mathrm{erank}}
\newcommand{\seprank}[2]{\mathrm{sep}\brk*{#1; #2}}
\newcommand{\seprankcond}[3]{\mathrm{sep}_{\mathrm{C}} \brk*{#1; #2, #3}}
\newcommand{\expect}{\mathop{\mathbb E}} 
\newcommand{\shallow}{\mathrm{sha}}
\newcommand{\deep}{\mathrm{deep}}
\newcommand{\opt}{\mathsf{OPT}}
\newcommand{\hada}{\odot}
\newcommand{\conv}{\ast}
\newcommand{\seriesnorm}[1]{\| #1 \|_1}
\newcommand{\rn}{n_\R}
\newcommand{\cn}{n_\C}
\newcommand{\kn}{n_\K}
\newcommand{\rA}{A_\R}
\newcommand{\cA}{A_\C}
\newcommand{\kA}{A_\K}
\newcommand{\rB}{B_\R}
\newcommand{\cB}{B_\C}
\newcommand{\kB}{B_\K}
\newcommand{\rC}{C_\R}
\newcommand{\cC}{C_\C}
\newcommand{\kC}{C_\K}

\newcommand{\NC}[1]{\green{[NC: #1]}}
\newcommand{\YM}[1]{\blue{[YM: #1]}}
\newenvironment{proofapp}
{%
  \pushQED{\qed}%
  \normalfont 
}
{%
  \popQED
}

	
	\ifdefined\ARXIV
		\maketitle
	\fi
	\ifdefined\NEURIPS
	  \title{Provable Benefits of Complex Parameterizations \\ for Structured State Space Models}
		\author{
			Yuval Ran-Milo \\
			Tel Aviv University \\	
			\texttt{yuvalmilo@mail.tau.ac.il} \\
			\And
			Eden Lumbroso \\
			Tel Aviv University \\	
			\texttt{edenlumbroso@mail.tau.ac.il} \\
			\And
			Edo Cohen-Karlik \\
			Tel Aviv University \\	
			\texttt{edocohen@mail.tau.ac.il} \\
			\And
			Raja Giryes \\
			Tel Aviv University \\	
			\texttt{raja@tauex.tau.ac.il} \\
			\And
			Amir Globerson \\
   			Tel Aviv University \& Google \\	
			\texttt{amir.globerson@gmail.com} \\
			\And
			Nadav Cohen \\
			Tel Aviv University \\	
			\texttt{cohennadav@tauex.tau.ac.il} \\
		}
		\maketitle
    \fi
	\ifdefined\CVPR
	  \title{Paper Title}
		\author{
			Author 1 \\
			Author 1 Institution \\	
			\texttt{author1@email} \\
			\and
			Author 2 \\
			Author 2 Institution \\
			\texttt{author2@email} \\	
			\and
			Author 3 \\
			Author 3 Institution \\
			\texttt{author3@email} \\
		}
		\maketitle
	\fi
	\ifdefined\AISTATS
		\twocolumn[
		\aistatstitle{Paper Title}
		\ifdefined\CAMREADY
			\aistatsauthor{Author 1 \And Author 2 \And Author 3}
			\aistatsaddress{Author 1 Institution \And Author 2 Institution \And Author 3 Institution}
		\else
			\aistatsauthor{Anonymous Author 1 \And Anonymous Author 2 \And Anonymous Author 3}
			\aistatsaddress{Unknown Institution 1 \And Unknown Institution 2 \And Unknown Institution 3}
		\fi
		]	
	\fi
	\ifdefined\ICML
		\icmltitlerunning{Paper Title}
		\twocolumn[
		\icmltitle{Paper Title} 
		\icmlsetsymbol{equal}{*}
		\begin{icmlauthorlist}
			\icmlauthor{Author 1}{inst} 
			\icmlauthor{Author 2}{inst}
		\end{icmlauthorlist}
		\icmlaffiliation{inst}{Some Institute}
		\icmlcorrespondingauthor{Author 1}{author1@email}
		\icmlkeywords{}
		\vskip 0.3in
		]
		\printAffiliationsAndNotice{} 
	\fi
	\ifdefined\ICLR
		\title{Paper Title}
		\author{
			Author 1 \\
			Author 1 Institution \\
			\texttt{author1@email}
			\And
			Author 2 \\
			Author 2 Institution \\
			\texttt{author2@email}
			\And
			Author 3 \\ 
			Author 3 Institution \\
			\texttt{author3@email}
		}
		\maketitle
	\fi
	\ifdefined\COLT
		\title{Paper Title}
		\coltauthor{
			\Name{Author 1} \Email{author1@email} \\
			\addr Author 1 Institution
			\And
			\Name{Author 2} \Email{author2@email} \\
			\addr Author 2 Institution
			\And
			\Name{Author 3} \Email{author3@email} \\
			\addr Author 3 Institution}
		\maketitle
	\fi

	\begin{abstract}
Structured state space models (SSMs), the core engine behind prominent neural networks such as S4 and Mamba, are linear dynamical systems adhering to a specified structure, most notably diagonal.
In contrast to typical neural network modules, whose parameterizations are real, SSMs often use complex parameterizations.
Theoretically explaining the benefits of complex parameterizations for SSMs is an open problem.
The current paper takes a step towards its resolution, by establishing formal gaps between real and complex diagonal SSMs.
Firstly, we prove that while a moderate dimension suffices in order for a complex SSM to express all mappings of a real SSM, a much higher dimension is needed for a real SSM to express mappings of a complex SSM.
Secondly, we prove that even if the dimension of a real SSM is high enough to express a given mapping, typically, doing so requires the parameters of the real SSM to hold exponentially large values, which cannot be learned in practice.
In contrast, a complex SSM can express any given mapping with moderate parameter values.
Experiments corroborate our theory, and suggest a potential extension of the theory that accounts for selectivity, a new architectural feature yielding state of the art performance.\note{%
  This paper is an extended version of~\cite{neurips2024provable}, published at the 38th Conference on Neural Information Processing Systems
  (NeurIPS 2024).
}
\end{abstract}

	\ifdefined\COLT
		\medskip
		\begin{keywords}
			\emph{TBD}, \emph{TBD}, \emph{TBD}
		\end{keywords}
	\fi
	
	\section{Introduction} \label{sec:intro}

\emph{Structured state space models} (\emph{SSMs}) are the core engine behind prominent neural network architectures such as S4~\cite{gu2022efficiently}, Mamba~\cite{gu2023mamba}, LRU~\cite{orvieto2023resurrecting}, Mega~\cite{ma2023mega}, S5~\cite{smith2023simplified} and more~\cite{gupta2022diagonal,lieber2024jamba,ma2024megalodonefficientllmpretraining,liu2024vmamba}.
In their typical form, SSMs can be thought of as single-input single-output linear dynamical systems, wherein the state transition matrix has a specified structure, most notably diagonal~\cite{gu2022parameterization, gupta2022diagonal, orvieto2023resurrecting, smith2023simplified, ma2023mega, gu2023mamba}.
A salient characteristic of SSMs is that their parameterizations are often \emph{complex} (take values in~$\C$), in contrast to typical neural network modules whose parameterizations are conventionally real (take values in~$\R$).

There has been mixed evidence regarding benefits of complex parameterizations over real parameterizations for SSMs. 
Some prior works have demonstrated that complex parameterizations are essential for strong performance~\cite{gu2022parameterization, orvieto2023resurrecting}, whereas others have shown that in various settings real parameterizations lead to comparable (and in some cases better) performance~\cite{ma2023mega, gu2023mamba}.
It was conjectured~\cite{gu2023mamba} that in the context of \emph{diagonal SSMs} (namely, SSMs with diagonal state transition matrix), complex parameterizations are preferable for continuous data modalities (\eg,~audio, video), whereas for discrete data modalities (\eg,~text, DNA) real parameterizations suffice.
Unfortunately, to date, formal support for this conjecture is lacking.
The extent to which complex parameterizations benefit diagonal SSMs remains to be an open question.

In this paper, we take a step towards theoretically addressing the foregoing question.
Specifically, we provide two theoretical contributions establishing provable benefits of complex parameterizations for diagonal SSMs. 
Our first contribution establishes that, although both real and complex diagonal SSMs are \emph{universal}---in the sense that both can precisely express any \emph{linear time-invariant} (\emph{LTI}) mapping up to any time~$t$ when their dimensions are equal to or greater than~$t$---there is a strong separation between the SSMs in terms of expressiveness.
Namely, denoting the dimensions of the real and complex SSMs by $\rn$ and~$\cn$, respectively, we prove that for any~$\cn$, there are various oscillatory mappings expressible by the complex SSM which cannot be approximately expressed up to time~$t$ by the real SSM unless $\rn$ is on the order of~$t$, which may be arbitrarily larger than~$\cn$.
This is in stark contrast to the fact that all mappings expressible by the real SSM can be precisely expressed (up to any time) by the complex SSM whenever $\cn \geq \rn$.

Given the prevalence of overparameterization in machine learning, one may question how consequential the above separation (between the real and complex SSMs) is in practice.
Indeed, in an overparameterized regime where a given LTI mapping is to be approximated up to a given time~$t$ and $\rn , \cn \geq t$, universality implies that both the real and complex SSMs can precisely express the mapping up to time~$t$. 
Accordingly, in this overparameterized regime, it is a priori unclear whether the complex SSM offers an advantage over the real SSM.
Our second contribution shows that it does.
Specifically, we prove a surprising result by which, if the given mapping satisfies a mild condition, then in order to approximately express the mapping up to time~$t$, the real SSM must have dimension or parameter magnitude exponential in~$t$.
This is in stark contrast to the complex SSM, which can precisely express the given mapping up to time~$t$ with dimension and parameter magnitudes that are at most linear in~$t$.
The aforementioned mild condition is satisfied by the canonical copy mapping, by a basic oscillatory mapping, and with high probability by a random (generic) mapping. 
In such important cases, practical learning of the given mapping necessitates using a complex SSM.

Our theory is corroborated by controlled experiments, demonstrating that complex parameterizations for SSMs significantly improve performance.
We also evaluate SSMs with \emph{selectivity}---a new architectural feature yielding state of the art performance~\cite{gu2023mamba,lieber2024jamba,anthony2024blackmamba, zhu2024vision}.
Our experiments with selectivity portray a more nuanced picture: complex parameterizations are beneficial for some tasks, whereas for others, selectivity allows real parameterizations to achieve comparable (and in some cases better) performance.
These findings align with the mixed evidence reported in the literature.
Moreover, they suggest a potential extension of our theory that accounts for selectivity and may elucidate this evidence, thereby fully delineating the benefits of complex parameterizations for SSMs.

	\section{Preliminaries} \label{sec:prelim}

\subsection{Notations} \label{sec:prelim:notation}

We use non-boldface lowercase letters for denoting scalars (\eg~$\alpha \in \R$, $c \in \C$, $n \in \N$), boldface lowercase letters for denoting vectors (\eg~$\xbf \in \R^n$, $\vbf \in \C^n$), and non-boldface uppercase letters for denoting matrices (\eg~$A \in \R^{n , m}$, $B \in \C^{n , m}$).
Series (finite or infinite) of scalars, vectors or matrices are viewed as functions of time and denoted accordingly (\eg~$( \xbf ( t ) \in \R^n )_{t \in \N}$, $( A ( t ) \in \C^{n , m} )_{t = 1, 2 , \ldots , k}$).
For series of scalars, we also use as notation boldface uppercase letters (\eg~$\Sbf = ( s ( t ) \in \R )_{t \in \N}$, $\Ibf = ( i ( t ) \in \C )_{t = 1, 2 , \ldots , k}$).
Given $k \in \N$ and a series of scalars~$\Sbf$ whose length is greater than or equal to~$k$, we use $\Sbf_k$ to denote the $k$th element of~$\Sbf$, and $\Sbf_{: k}$ to denote the truncation of~$\Sbf$ to length~$k$ (\ie~the series comprising the first~$k$ elements of~$\Sbf$), allowing ourselves to regard this truncated series as a vector of dimension~$k$.
For $k \in \N \cup \{ \infty \}$, we use~$[ k ]$ as shorthand for the set $\{ 1 , 2 , \ldots , k \}$.
Given a complex number $c \in \C$, we denote its magnitude by $\abs{c} \in \R_{\geq 0}$, its phase by $\arg(c) \in [ 0 , 2 \pi )$, its real part by $\Re(c) \in \R$, and its imaginary part by $\Im(c) \in \R$ (meaning $c = \abs{c} \exp ( i \arg(c) ) = \Re(c) + i \Im(c)$).
We let $\0$ and~$\1$ stand for vectors whose entries are all zeros and all ones, respectively, with dimension to be inferred from context.
The Hadamard (element-wise) product, defined between vectors or matrices of the same size, and between scalar series of the same length, is denoted by~$\hada$.
The convolution operator, defined between two (finite or infinite) scalar series, is denoted by~$\conv$.
Namely, given two scalar series $\Sbf = ( s ( t ) )_{t \in [ k ]}, \bar{\Sbf} = ( \bar{s} ( t ) )_{t \in [ \bar{k} ]}$ of lengths $k , \bar{k} \in \N \cup \{ \infty \}$ respectively, $\Sbf \conv \bar{\Sbf}$ is the scalar series of length $k + \bar{k} - 1$ whose $m$th element, for $m \in [k + \bar{k} - 1]$, is given by \smash{$\sum_{t = \max \{ m - \bar{k} + 1 , 1 \}}^{\min \{ m , k \}} s ( t ) \bar{s} ( m - t + 1 )$}. 

\subsection{Structured State Space Models} \label{sec:prelim:ssm}

Let $\K = \R$ or $\K = \C$.
A \emph{structured state space model} (\emph{SSM}) \emph{of dimension~$n \in \N$} is parameterized by three matrices:
$A \in \K^{n , n}$, a \emph{state transition matrix}, which adheres to a predefined structure (\eg~is constrained to be diagonal);
$B \in \K^{n , 1}$, an \emph{input matrix};
and $C \in \K^{1 , n}$, an \emph{output matrix}.\note{%
Various SSMs include \emph{discretization}~\cite{gu2023mamba, gu2022efficiently,gupta2022diagonal,dao2024transformersssmsgeneralizedmodels}, which amounts to replacing parameter matrices by certain transformations that depend on an additional parameter $\Delta \in \R_{> 0}$ (\eg, replacing~$A$ by $( I - \Delta/2 \cdot A )^{-1} ( I + \Delta/2 \cdot A )$~\cite{gupta2022diagonal,gu2022parameterization}).
With slight modifications, our main theoretical results apply to SSMs with common discretizations---see \cref{app:exten:disc} for details.
\label{note:disc}
}\note{%
Some SSMs include an additional \emph{feedthrough} parameter $D \in \K^{1 , 1}$~\cite{orvieto2023resurrecting}. 
With feedthrough, the expression for~$y ( t )$ in \cref{eq:ssm} becomes $\Re ( C \xbf ( t ) + D u ( t ) )$. 
Our theory essentially applies as is to SSMs with feedthrough---see \cref{app:exten:feed} for details.
}
Given the values of $A$, $B$ and~$C$, the SSM realizes a mapping $\phi_{n , ( A , B , C )} : \R^\N \to \R^\N$ which receives as input a real scalar series \smash{$( u ( t ) )_{t \in \N}$}, and produces as output a real scalar series \smash{$( y ( t ) )_{t \in \N}$} defined through the following recursive formula:
\be
\xbf ( t ) = A \xbf ( t - 1 ) + B u ( t ) 
~~ , ~~
y ( t ) = \Re \big( C \xbf ( t ) \big)
~~ , ~~
t \in \N
\text{\,,}
\label{eq:ssm}
\ee
where $( \xbf ( t ) \in \K^n )_{t \in \N}$ is a vector series of \emph{states}, and $\xbf ( 0 ) = \0 \in \K^n$. 
If $\K = \R$ we say that the SSM is \emph{real}, and if $\K = \C$ we say that it is \emph{complex}.\note{%
It is possible to consider a hybrid setting where $A$ is allowed to be complex while $B$ and~$C$ are restricted to be real.
This setting enjoys all provable benefits of the complex setting ($\K = \C$)---see \cref{app:exten:hybrid} for details.
}
We refer to the SSM as \emph{stable} if all eigenvalues of~$A$ have magnitude strictly smaller than one;
otherwise we refer to the SSM as \emph{unstable}.
For convenience, we often identify an SSM with the triplet $( A , B , C)$ holding its parameter matrices, and regard the (single column) matrices $B$ and~$C^\top$ as vectors.

Perhaps the most prominent form of structure imposed on SSMs is \emph{stable diagonality}, \ie~stability combined with diagonality~\cite{gu2022parameterization, gupta2022diagonal, orvieto2023resurrecting, ma2023mega, gu2023mamba}.
Accordingly, unless stated otherwise, we assume that the state transition matrix~$A$ of an SSM is diagonal and has entries with magnitude strictly smaller than one.

\subsection{Linear Time-Invariant Mappings} \label{sec:prelim:lti}

Let $\phi : \R^\N \to \R^\N$ be a mapping from the space of (infinite) real scalar series to itself.
We say that $\phi ( \cdot )$ is \emph{linear} if for all $\alpha \in \R$ and $\Sbf , \bar{\Sbf} \in \R^\N$ it holds that $\phi ( \alpha \Sbf +  \bar{\Sbf} ) = \alpha \phi ( \Sbf ) + \phi ( \bar{\Sbf} )$.
For every $k \in \N$, define the \emph{$k$~step delay} $\delta_k : \R^\N \to \R^\N$ to be the operator that adds $k$~preceding zeros to the series it receives as input.\note{%
That is, for any $\Sbf = ( s ( t ) )_{t \in \N} \in \R^\N$, $\delta_k ( \Sbf ) \in \R^\N$ is the series whose $m$th element, for $m \in \N$, equals $s ( m - k )$ if $m > k$ and $0$ otherwise.
}
We say that the mapping $\phi ( \cdot )$ is \emph{linear time-invariant} (\emph{LTI}) if it is linear, and it commutes with $\delta_k ( \cdot )$ (meaning $\phi ( \delta_k ( \cdot ) ) = \delta_k ( \phi ( \cdot ) )$) for every $k \in \N$.
It is well known~\cite{Oppenheim1997Signals} that if $\phi ( \cdot )$ is LTI then it is given by $\phi ( \Sbf ) = \Sbf \conv \phi ( \Ibf )$, where $\Ibf := ( 1 , 0 , 0 , \ldots ) \in \R^\N$ is the \emph{impulse} series, and $\phi ( \Ibf )$~is referred to as the \emph{impulse response} of~$\phi ( \cdot )$.
Conversely, for any $\Rbf \in \R^\N$, the mapping defined by $\Sbf \mapsto \Sbf \conv \Rbf$ is LTI, and its impulse response is~$\Rbf$.

We will identify LTI mappings with their impulse responses.
More specifically, for any $k \in \N$, we identify an LTI mapping up to time~$k$, with the truncation of its impulse response to length~$k$.
Accordingly, for any LTI mappings $\phi ( \cdot ) , \bar{\phi} ( \cdot )$ and any $\epsilon \in \R_{\geq 0}$, we say that $\bar{\phi} ( \cdot )$ \emph{$\epsilon$-approximates~$\phi ( \cdot )$ up to time~$k$} if $\seriesnorm{\phi ( \Ibf )_{: k} - \bar{\phi} ( \Ibf )_{: k}} \leq \epsilon$.
If the latter inequality holds with $\epsilon = 0$, we also say that $\bar{\phi} ( \cdot )$ \emph{matches~$\phi ( \cdot )$ up to time~$k$}.

Let $( A , B , C )$ be an SSM of dimension~$n \in \N$, realizing the mapping $\phi_{n , ( A , B , C )} : \R^\N \to \R^\N$ (see \cref{sec:prelim:ssm}).
It is straightforward to see that $\phi_{n , ( A , B , C )} ( \cdot )$ is LTI, and that its impulse response is given by:
\be
\phi_{n , ( A , B , C )} ( \Ibf ) = \big( \Re ( C B ) , \Re ( C A B ) , \Re ( C A^2 B ) , \ldots \big)
\text{\,.}
\label{eq:ssm_ir}
\ee
For real and complex settings (\ie~for $\K = \R$ and~$\K = \C$), we will study the extent to which varying $( A , B , C )$ as well as~$n$, can lead $\phi_{n , ( A , B , C )} ( \cdot )$ to $\epsilon$-approximate different LTI mappings up to different~times.

	\section{Theoretical Analysis} \label{sec:analysis}

Throughout this section, we consider a real SSM $( \rA , \rB , \rC )$ of dimension~$\rn$ realizing the mapping $\phi_{\rn , ( \rA , \rB , \rC )} ( \cdot )$, and a complex SSM $( \cA , \cB , \cC )$ of dimension~$\cn$ realizing the mapping $\phi_{\cn , ( \cA , \cB , \cC )} ( \cdot )$ (see \cref{sec:prelim:ssm}).

\subsection{Universality} \label{sec:analysis:uni}

It is known (see, \eg,~\cite{cohen2023learning}) that the real SSM is \emph{universal}, in the sense that it can precisely express any LTI mapping up to any time~$t$ when its dimension is equal to or greater than~$t$.
Trivially, this implies the same for the complex SSM.
\cref{result:uni} below formalizes these facts for completeness.
\begin{proposition}
\label{result:uni}
Let $\phi : \R^\N \to \R^\N$ be an arbitrary LTI mapping, and let $t \in \N$.
Then, the following holds for both $\K = \R$ and $\K = \C$.
If $\kn \geq t$, there exist assignments for $( \kA , \kB , \kC )$ with which $\phi_{\kn , ( \kA , \kB , \kC )} ( \cdot )$ matches $\phi ( \cdot )$ up to time~$t$.\note{%
It is possible to strengthen this result, \eg, by showing that the requirement $\kn \geq t$ can be replaced by $\kn \geq \lceil ( t + 1 ) / 2 \rceil$ when $\K = \C$.
We omit details, as the current form of the result suffices for our purposes.
}
\end{proposition}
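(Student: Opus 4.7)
The plan is to give a unified constructive proof that works for both $\K = \R$ and $\K = \C$ by exhibiting an explicit real-valued triple $(A, B, C)$ of dimension $n \geq t$ whose SSM matches $\phi(\cdot)$ up to time $t$. Since $\R \subset \C$, any real construction will be valid as a complex SSM as well, and the $\Re(\cdot)$ in \cref{eq:ssm_ir} acts as the identity on real quantities.

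The first step is to fix the state transition matrix and input vector. I would take $A = \mathrm{diag}(\lambda_1, \ldots, \lambda_n)$ where $\lambda_1, \ldots, \lambda_n$ are $n$ distinct nonzero real numbers in $(-1, 1)$ (for instance, $\lambda_i = (i - 1/2)/n$), so that $A$ is diagonal and all its eigenvalues have magnitude strictly smaller than one, meeting the stable diagonality convention from \cref{sec:prelim:ssm}. I would set $B = \1 \in \R^n$. A direct computation then yields $A^{k-1} B = (\lambda_1^{k-1}, \ldots, \lambda_n^{k-1})^\top$ for every $k \in \N$, so by \cref{eq:ssm_ir} the first $t$ entries of the impulse response are $\bigl(\sum_{i=1}^n C_i \lambda_i^{k-1}\bigr)_{k=1}^{t} = V c$, where $V \in \R^{t, n}$ is defined by $V_{k,i} = \lambda_i^{k-1}$ and $c = C^\top \in \R^n$.

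The second step is to choose $C$ so that $V c$ equals the target vector $\phi(\Ibf)_{:t} \in \R^t$. Because $n \geq t$, the submatrix $\tilde V \in \R^{t, t}$ consisting of the first $t$ columns of $V$ is a Vandermonde matrix in the pairwise distinct nodes $\lambda_1, \ldots, \lambda_t$, and is therefore invertible. Thus I can set $C_i = (\tilde V^{-1} \phi(\Ibf)_{:t})_i$ for $i \in [t]$ and $C_i = 0$ for $t < i \leq n$, obtaining $V c = \phi(\Ibf)_{:t}$ as required. This immediately gives $\phi_{n, (A, B, C)}(\Ibf)_{:t} = \phi(\Ibf)_{:t}$, i.e.\ the SSM matches $\phi(\cdot)$ up to time~$t$, proving the real case ($\K = \R$). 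The complex case ($\K = \C$) follows by regarding the same $(A, B, C)$ as a complex triple; all computed entries remain real, so the real-part operation in \cref{eq:ssm,eq:ssm_ir} is inactive and the matching is preserved.

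There is no real obstacle to overcome here beyond verifying the Vandermonde invertibility and picking the $\lambda_i$'s to respect stability; both are standard. The mild subtlety worth flagging in the write-up is that the argument fundamentally uses $n \geq t$ to guarantee enough columns for the Vandermonde submatrix to be square and invertible, which is also why the improvement to $\cn \geq \lceil (t+1)/2 \rceil$ in the complex case (mentioned in the footnote) requires a different construction that exploits the real part operator to effectively double the number of degrees of freedom via conjugate pairs; this refinement is not needed for the stated proposition.
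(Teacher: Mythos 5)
Your proposal is correct and follows essentially the same route as the paper's proof: fix a stable diagonal $A$ with distinct entries, observe that the truncated impulse response is a Vandermonde matrix applied to $C^\top \hada B$, and use full rank (guaranteed by $n \geq t$ and distinctness) to solve for the remaining free vector, with the complex case handled by taking all parameters real. The only cosmetic difference is that you set $B = \1$ and solve for $C$ via an explicit inverse of the square Vandermonde submatrix, whereas the paper sets $C^\top = \1$ and solves for $B$; since the impulse response depends only on $C^\top \hada B$, these are interchangeable.
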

\begin{proof}[Proof sketch (proof in \cref{app:proof:uni})]
Beginning with the real SSM ($\K = \R$), the proof shows that $\phi_{\rn , ( \rA , \rB , \rC )} ( \cdot )$ matches $\phi ( \cdot )$ up to time~$t$ if $V ( \rA ) ( \rC^\top \hada \rB) = \phi ( \Ibf )_{: t}$, where $V ( \rA )$~is~a~Vandermonde matrix that has full rank when the diagonal entries of~$\rA$ are distinct.
Assigning~$\rA$ this way, there must exist $\vbf \in \R^{\rn}$ with which $V ( \rA ) \vbf = \phi ( \Ibf )_{: t}$.
Assigning $\rB = \vbf$ and $\rC^\top = \1$ concludes the proof for the real SSM.
The complex SSM ($\K = \C$) can be treated analogously.
\end{proof}

\subsection{Separation in Expressiveness} \label{sec:analysis:express}

\cref{result:c4r,result:r4c} below together establish that although both the real and complex SSMs are universal (see \cref{sec:analysis:uni}), there is a strong separation between the two in terms of expressiveness.
\cref{result:c4r} formalizes an obvious fact: all mappings expressible by the real SSM can be precisely expressed (up to any time) by the complex SSM whenever $\cn \geq \rn$ (\ie,~whenever the dimension of the complex SSM is equal to or greater than the dimension of the real SSM).
\cref{result:r4c} proves a much less obvious result: for any~$\cn$, there are various oscillatory mappings (\ie~mappings with oscillatory impulse responses) expressible by the complex SSM which cannot be approximately expressed up to time~$t$ by the real SSM unless $\rn$ is on the order of~$t$, which may be arbitrarily~larger~than~$\cn$.

\begin{proposition}
\label{result:c4r}
Consider an arbitrary assignment for $( \rA , \rB , \rC )$, and assume that $\cn \geq \rn$.
Then, there exist assignments for $( \cA , \cB , \cC )$ with which $\phi_{\cn , ( \cA , \cB , \cC )} ( \cdot ) = \phi_{\rn , ( \rA , \rB , \rC )} ( \cdot )$.
\end{proposition}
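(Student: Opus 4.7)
The plan is to construct $(\cA, \cB, \cC)$ by directly embedding the parameter matrices of the real SSM into the complex setting, padding any surplus dimensions with inert components. Concretely, define $\cA \in \C^{\cn , \cn}$ to be the diagonal matrix whose first $\rn$ diagonal entries coincide with those of $\rA$ (viewed as complex numbers) and whose remaining $\cn - \rn$ diagonal entries are $0$; define $\cB \in \C^{\cn , 1}$ to be the vector whose first $\rn$ entries equal those of $\rB$ and whose remaining entries are $0$; and define $\cC \in \C^{1 , \cn}$ analogously by extending $\rC$ with zeros. This $(\cA, \cB, \cC)$ belongs to the stable diagonal class assumed throughout \cref{sec:analysis}: $\cA$ is diagonal by construction, and its eigenvalues are either eigenvalues of $\rA$ (of magnitude strictly smaller than one, by stability of the real SSM) or $0$.

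Next I would verify that the two SSMs realize the same impulse response. Since $\cA$ is diagonal and the last $\cn - \rn$ entries of $\cB$ (equivalently, of $\cC^\top$) are zero, for every $k \in \N \cup \{ 0 \}$ the contribution of the padded coordinates vanishes and
\[
\cC \cA^k \cB \;=\; \rC \rA^k \rB \;\in\; \R \text{\,.}
\]
In particular $\Re ( \cC \cA^k \cB ) = \rC \rA^k \rB = \Re ( \rC \rA^k \rB )$. Substituting into the impulse response formula \cref{eq:ssm_ir} separately for each of the two SSMs therefore yields $\phi_{\cn , ( \cA , \cB , \cC )} ( \Ibf ) = \phi_{\rn , ( \rA , \rB , \rC )} ( \Ibf )$. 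Because both mappings are LTI, and because an LTI mapping is fully determined by its impulse response (as recorded in \cref{sec:prelim:lti}), this equality of impulse responses gives $\phi_{\cn , ( \cA , \cB , \cC )} ( \cdot ) = \phi_{\rn , ( \rA , \rB , \rC )} ( \cdot )$, as desired.

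There is no real obstacle in this argument beyond the two bookkeeping checks above, namely that the construction stays within the stable diagonal class and that $\cC \cA^k \cB = \rC \rA^k \rB$; both are immediate consequences of diagonality together with the zero padding. Thus the proposition reduces to a straightforward embedding of the real SSM into a complex SSM of dimension at least~$\rn$.
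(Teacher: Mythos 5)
Your proof is correct and follows essentially the same route as the paper: the paper likewise reduces to $\cn = \rn$ by zeroing out surplus entries of $\cB$ (or $\cC$) and then assigns $(\cA, \cB, \cC)$ the values of $(\rA, \rB, \rC)$. You merely spell out the padding and the verification that $\cC \cA^k \cB = \rC \rA^k \rB$ explicitly, which the paper leaves implicit.
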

\begin{proof}
It suffices to prove the sought after result for $\cn = \rn$, since one can effectively reduce the dimension of the complex SSM by zeroing out entries of~$\cB$ (or~$\cC$).
Assuming that $\cn = \rn$, we may assign to $( \cA , \cB , \cC )$ the values of $( \rA , \rB , \rC )$.
Under this assignment $\phi_{\cn , ( \cA , \cB , \cC )} ( \cdot ) = \phi_{\rn , ( \rA , \rB , \rC )} ( \cdot )$, as required.
\end{proof}

\begin{theorem}
\label{result:r4c}
Let $t \in \N$ and $\epsilon \in \R_{\geq 0}$.
Assume without loss of generality that $\cn = 1$,\note{%
This does not limit generality since one can effectively reduce the dimension of the complex SSM by zeroing out entries of~$\cB$ (or~$\cC$).
}
in which case $\cA$, $\cB$ and~$\cC$ can be regarded as scalars.
Suppose $\abs{\sin ( \arg ( \cA ) )} \geq 0.2$, \smash{$\abs{\cA} \geq 0.5^{1 / t}$} and~$\abs{\cB \cdot \cC} \geq 1$.
Then, if $\phi_{\rn , ( \rA , \rB , \rC )} ( \cdot )$ $\epsilon$-approximates \smash{$\phi_{\cn , ( \cA , \cB , \cC )} ( \cdot )$} up to time~$t$, it must be that $\rn \geq \lfloor t / 9 \rfloor - 1 - 4 \epsilon$.
\end{theorem}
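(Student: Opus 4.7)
My plan is to prove Theorem~\ref{result:r4c} via a sign-change counting argument that contrasts the oscillatory impulse response of the complex SSM with the limited sign-change capacity of real diagonal SSMs restricted to a subsequence of fixed parity. The key observation is that restricting the real SSM's impulse response $\phi_{\rn,(\rA,\rB,\rC)}(\Ibf)$ to odd (or even) indices collapses each $\lambda_j^{k-1}$ into $(\lambda_j^2)^{s-1}$, effectively killing the oscillation from negative eigenvalues and leaving a sum of at most $\rn$ exponentials with strictly positive bases---which, by a classical Descartes-type bound, admits at most $\rn - 1$ sign changes.

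First I would unpack the complex SSM. Since $\cn = 1$, \cref{eq:ssm_ir} gives $r_k := \phi_{\cn,(\cA,\cB,\cC)}(\Ibf)_k = \Re(\cC \cA^{k-1} \cB) = A q^{k-1} \cos((k-1)\theta + \varphi)$, with $A = \abs{\cB \cdot \cC}$, $q = \abs{\cA}$, $\theta = \arg(\cA)$, and $\varphi = \arg(\cB \cdot \cC)$. The hypotheses imply $A \geq 1$, $q^{k-1} \geq q^{t-1} = q^t / q \geq 1/2$ for $k \leq t$, and $\abs{\sin \theta} \geq 0.2$, so $|\cos((k-1)\theta + \varphi)| \geq 1/2$ automatically yields $|r_k| \geq 1/4$. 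The objective of this step is to extract indices $1 \leq k_1 < k_2 < \ldots < k_N \leq t$ of common parity---with the parity chosen based on $\varphi$ to avoid a degenerate case near $\theta \approx \pi/2$---such that $|r_{k_i}| \geq 1/4$, the signs $\text{sign}(r_{k_i})$ strictly alternate, and $N \geq \lfloor t/9 \rfloor$. I would partition the chosen-parity index set into blocks whose length is calibrated so that across each block the phase $(k-1)\theta + \varphi$ sweeps out an arc longer than the complement of the ``large-cosine'' region $\{\psi : |\cos\psi| \geq 1/2\}$; every block is then forced to contain both a ``large positive'' and a ``large negative'' entry, and selecting one per block with alternating signs yields the subsequence. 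The constant $\lfloor t / 9 \rfloor$ arises from the block length, which is controlled via $\abs{\sin\theta} \geq 0.2$.

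Next, I would bound the sign changes of the real SSM on the same parity. Writing $\bar r_k := \phi_{\rn,(\rA,\rB,\rC)}(\Ibf)_k = \sum_{j=1}^{\rn} \alpha_j \lambda_j^{k-1}$, where $\lambda_j \in (-1,1) \setminus \{0\}$ are the diagonal entries of $\rA$ and $\alpha_j := \rC_j \rB_j \in \R$, the substitution $k = 2s - 1$ yields $\bar r_{2s - 1} = \sum_j \alpha_j (\lambda_j^2)^{s-1}$; merging contributions of $\lambda_j$ and $-\lambda_j$ when both appear, this is a sum of at most $\rn$ exponentials with distinct strictly positive bases $\lambda_j^2 \in (0, 1)$. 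By the generalized Descartes rule for real exponential polynomials, such a sum has at most $\rn - 1$ sign changes in $s$; the even-indexed case is symmetric.

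Finally, I would combine the two via an $\ell^1$ perturbation argument. The hypothesis $\sum_{k=1}^t |r_k - \bar r_k| \leq \epsilon$, together with $|r_{k_i}| \geq 1/4$, forces that at most $\epsilon / (1/4) = 4\epsilon$ of the $k_i$'s can have $\bar r_{k_i}$ of sign opposite to $r_{k_i}$, since each such disagreement contributes $\geq 1/4$ to the error. The remaining $\geq N - 4\epsilon$ indices inherit the alternating sign pattern, producing $\geq N - 4\epsilon - 1$ sign changes in the chosen-parity subsequence of $\bar r$. Combined with the Step~2 bound, $\rn - 1 \geq N - 4\epsilon - 1$, whence $\rn \geq \lfloor t/9 \rfloor - 4\epsilon$; the extra $-1$ in the theorem's statement absorbs off-by-ones from the parity reduction. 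The main obstacle is Step~1: extracting exactly $\lfloor t/9 \rfloor$ alternating-sign, large-magnitude positions uniformly over all admissible $\theta, \varphi$ requires a careful case analysis, especially at the ``resonance'' $\theta \approx \pi/2$ (where one parity of the subsequence can vanish identically, forcing a switch of parity based on $\varphi$) and at the ``slow-oscillation'' boundary $\theta \approx \arcsin(0.2)$ (where the per-block phase sweep saturates the bound).
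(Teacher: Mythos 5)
Your proposal is correct and follows essentially the same route as the paper's proof: both arguments show that the complex SSM's impulse response, restricted to a fixed parity, is forced to alternate between values $\geq 1/4$ and $\leq -1/4$ roughly $\lfloor t/9\rfloor$ times (the paper packages your phase-sweep/block argument as a ``balancing number'' bound for $\cA^2$), both bound the sign changes of the real SSM on that parity by $\rn-1$ via the positive-base exponential-sum zero count, and both close with the same $\ell_1$ disagreement count yielding the $4\epsilon$ term. Even the step you leave slightly informal---that deleting the $4\epsilon$ disagreeing indices from the alternating subsequence still leaves $\approx N-4\epsilon-1$ sign changes---is treated at the same level of detail in the paper.
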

\begin{proof}[Proof sketch (proof in \cref{proof:result:r4c})]
The idea behind the proof is as follows.
The complex SSM realizes an oscillatory mapping, in the sense that elements $1 , 3 , \ldots , 2 \lceil t / 2 \rceil - 1$ of its impulse response alternate $\Theta ( t )$ times between being greater than or equal to~$1/4$, and being smaller than or equal to~$-1/4$.
The real SSM on the other hand is limited in its ability to realize oscillations, insofar as elements $1 , 3 , \ldots , 2 \lceil t / 2 \rceil - 1$ of the impulse response it gives rise to are a linear combination of decaying exponentials, and therefore can change sign at most $\Ocal ( \rn )$ times.
Combining these two observations leads to the desired result.
\end{proof}

\subsection{Separation in Practical Learnability} \label{sec:analysis:learn}

Let $\phi : \R^\N \to \R^\N$ be an LTI mapping with bounded impulse response, which we would like to $\epsilon$-approximate up to time~$t$ for some $\epsilon \in \R_{\geq 0}$ and~$t \in \N$.
Assume that the dimensions of the real and complex SSMs are greater than or equal to~$t$.
By \cref{result:uni}, both the real and complex SSMs can express mappings that match~$\phi ( \cdot )$ up to time~$t$, and in particular that achieve the desired approximation.
The current subsection establishes that despite this parity in terms of expressiveness, there is a strong separation between the real and complex SSMs in terms of \emph{practical learnability}.
\cref{sec:analysis:learn:real_exp} proves that under a mild condition on~$\phi ( \cdot )$, in order for the real SSM to achieve the desired approximation, either its dimension or the magnitude of its parameters must be exponential in~$t$.
\cref{sec:analysis:learn:exp_no_learn} then shows that such exponentiality impedes practical learning via gradient descent.
Finally, \cref{sec:analysis:learn:complex_no_exp} proves that in stark contrast to the real SSM, the complex SSM can achieve the desired approximation with dimension and parameter magnitudes that are at most linear in~$t$.

\subsubsection{Real Parameterizations Suffer from Exponentiality} \label{sec:analysis:learn:real_exp}

\cref{def:diff} below formalizes the notion of \emph{forward difference} for a real scalar series---a discrete analogue of derivative for a differentiable real function.
Our main theoretical result, \cref{result:r4general}, then establishes that if forward differences associated with~$\phi ( \Ibf )$---the impulse response of~$\phi ( \cdot )$---satisfy a certain condition, then in order for the real SSM to express a mapping that $\epsilon$-approximates~$\phi ( \cdot )$ up to time~$t$, either the dimension of the real SSM~$\rn$ or the magnitude of its parameters $( \rB , \rC )$ must be exponential in~$t$.
Roughly speaking, the aforementioned condition on forward differences associated with~$\phi ( \Ibf )$ is that there exists some $d \in \Theta ( t )$ such that the $d$th forward difference of the restriction of~$\phi ( \Ibf )$ to either odd or even elements has magnitude greater than~\smash{$2^d \epsilon$}.
Perhaps surprisingly, this condition is especially mild, as the magnitude of the $d$th forward difference of a real scalar series typically scales exponentially with~$d$.
Several important cases where the condition is satisfied are presented below.

\begin{definition}
\label{def:diff}
Let $\Sbf$ be a real scalar series of length $k \in \N \cup \{ \infty \}$.
The \emph{forward difference} of~$\Sbf$, denoted~$\Sbf^{( 1 )}$, is the scalar series of length~$k - 1$ whose $m$th element, for $m \in [ k - 1 ]$, is given by $\Sbf_{m + 1} - \Sbf_m$.
For $d \in \{ 2 , 3 , \dots , k - 1 \}$, the \emph{$d$th forward difference} of~$\Sbf$, denoted~$\Sbf^{( d )}$, is recursively defined to be the forward difference of~$\Sbf^{( d - 1 )}$.
\end{definition}

\begin{theorem}
\label{result:r4general}
Suppose $\phi_{\rn , ( \rA , \rB , \rC )} ( \cdot )$ $\epsilon$-approximates~$\phi ( \cdot )$ up to time~$t$.
Then:
\be
\rn \| {\rC}^\top \hada {\rB} \|_\infty \geq \max_{\substack{d, m \in \mathbb{N}, \, d + m \leq \lfloor t/2 \rfloor \\ \sigma \in \{\text{odd}, \, \text{even}\}}} \left\{ 2^{d + 2 \min \{ d, m \}} \left( 2^{-d} \big| (\phi(\mathbf{I}) |_{\sigma})^{(d)}_m \big| - \epsilon \right) \right\}
\text{\,,}
\label{eq:r4general_lb}
\ee
where:
$\phi ( \Ibf ) |_{odd}$ and~$\phi ( \Ibf ) |_{even}$ are the restrictions of the impulse response~$\phi ( \Ibf )$ to odd and even elements, respectively;
and 
\smash{$( \phi ( \Ibf ) |_{odd} )^{( d )}_m$} and \smash{$( \phi ( \Ibf ) |_{even} )^{( d )}_m$} stand for the $m$th element of the $d$th forward difference of $\phi ( \Ibf ) |_{odd}$ and~$\phi ( \Ibf ) |_{even}$, respectively.
\end{theorem}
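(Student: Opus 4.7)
The plan is to exploit the fact that the real SSM's impulse response is a sum of $\rn$ real geometric sequences, which constrains how large its forward differences can be. Writing $\bar R := \phi_{\rn,(\rA,\rB,\rC)}(\Ibf)$ and using diagonality, the $k$th element (for $k\ge 1$) is $\bar R_k = \sum_{j=1}^{\rn} w_j \lambda_j^{k-1}$, where $w_j := (\rC^\top\hada\rB)_j$ and $\lambda_j := \rA_{jj}\in\R$ with $|\lambda_j|<1$. Restricting to odd or even indices and setting $\mu_j := \lambda_j^{2}\in[0,1)$ turns $\bar R|_\sigma$ into a sum of geometric sequences in the variable $m$ with \emph{nonnegative} bases $\mu_j$: for $\sigma=\text{odd}$, $\bar R|_{odd,m} = \sum_j w_j \mu_j^{m-1}$, and for $\sigma=\text{even}$, $\bar R|_{even,m} = \sum_j (w_j\lambda_j)\mu_j^{m-1}$. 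In either case the coefficients are bounded in absolute value by $|w_j|\le\|\rC^\top\hada\rB\|_\infty$.

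Next I will compute the $d$th forward difference explicitly. Since the forward difference of $\mu^{m-1}$ in $m$ equals $(\mu-1)\mu^{m-1}$, iterating yields $(\bar R|_\sigma)^{(d)}_m = \sum_j a_j(\mu_j-1)^d \mu_j^{m-1}$ with $|a_j|\le\|\rC^\top\hada\rB\|_\infty$. The key elementary bound I need is $(1-\mu)^d \mu^{m-1}\le 4^{-\min\{d,m-1\}}$ for $\mu\in[0,1)$, which follows by factoring out $\min\{d,m-1\}$ copies of $(1-\mu)\mu$ and applying the AM--GM inequality $(1-\mu)\mu\le 1/4$; the remaining factors are bounded by~$1$. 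Combining, $\bigl|(\bar R|_\sigma)^{(d)}_m\bigr|\le \rn\,\|\rC^\top\hada\rB\|_\infty\cdot 2^{-2\min\{d,m-1\}}$, which up to a harmless constant (absorbable into an index shift) gives the $2^{-2\min\{d,m\}}$ factor appearing in the claim.

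To transfer this bound to the \emph{target} impulse response $\phi(\Ibf)$, I will use a standard amplification inequality for forward differences in $\ell_\infty$: since $\Sbf^{(d)}_m = \sum_{i=0}^{d}\binom{d}{i}(-1)^{d-i}\Sbf_{m+i}$, the triangle inequality gives $\|\Sbf^{(d)}\|_\infty\le 2^d\|\Sbf\|_\infty$, and in particular $\|\Sbf^{(d)}\|_\infty \le 2^d \|\Sbf\|_1$ for any truncation on which the difference is defined. Applying this to the difference series $\phi(\Ibf)_{:t}-\bar R_{:t}$ (which has $\ell_1$ norm $\le\epsilon$ by the $\epsilon$-approximation hypothesis), and after restricting to $\sigma\in\{\text{odd},\text{even}\}$ so that $d+m\le\lfloor t/2\rfloor$ ensures the $d$th difference at position $m$ is defined, I obtain $\bigl|(\phi(\Ibf)|_\sigma)^{(d)}_m - (\bar R|_\sigma)^{(d)}_m\bigr|\le 2^d\epsilon$.

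Combining the two bounds by the triangle inequality,
\[
\bigl|(\phi(\Ibf)|_\sigma)^{(d)}_m\bigr| \;\le\; \rn\,\|\rC^\top\hada\rB\|_\infty\cdot 2^{-2\min\{d,m\}} + 2^d\epsilon,
\]
and rearranging yields exactly the lower bound on $\rn\|\rC^\top\hada\rB\|_\infty$ asserted in the theorem, after maximizing over admissible $(d,m,\sigma)$. The main obstacle I anticipate is the elementary but delicate bound $(1-\mu)^d\mu^{m-1}\le 4^{-\min\{d,m-1\}}$; everything else is bookkeeping (correctly handling the odd/even split, the 1-indexing of forward differences, and ensuring the parity constraint $d+m\le\lfloor t/2\rfloor$ keeps the relevant quantities well-defined). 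The crucial structural input that makes the argument go through is that squaring produces \emph{nonnegative} bases $\mu_j\in[0,1)$, without which the AM--GM step would fail and one could not rule out oscillatory cancellation in the real parameterization.
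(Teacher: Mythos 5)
Your proposal follows essentially the same route as the paper's proof: restrict to odd/even indices so that the real SSM's impulse response becomes a combination of geometric sequences with nonnegative bases $\mu_j=\lambda_j^2$ and coefficients bounded by $\|\rC^\top\hada\rB\|_\infty$, bound the $d$th forward difference of each such sequence by a $2^{-2\min\{d,\cdot\}}$ factor, and transfer the conclusion to $\phi(\Ibf)$ via the $2^d$-amplification inequality applied to the $\ell_1$-small error series. Your AM--GM shortcut $(1-\mu)\mu\le 1/4$ replaces the paper's two-step argument (exact maximization of $\alpha^n(1-\alpha)^m$ followed by $(n+m)^2\ge 4nm$), and the exponent $\min\{d,m-1\}$ you obtain, rather than $\min\{d,m\}$, reflects the same off-by-one that is already latent in how the paper applies its zero-indexed lemma to the one-indexed forward difference, so it costs only the constant factor the paper itself silently absorbs.
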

\begin{proof}[Proof sketch (proof in \cref{proof:result:r4general})]
The idea behind the proof is as follows.
The restrictions of the impulse response of $\phi_{\rn , ( \rA , \rB , \rC )} ( \cdot )$ to odd and even elements---\ie, \smash{$\phi_{\rn , ( \rA , \rB , \rC )} ( \Ibf ) |_{odd}$} and \smash{$\phi_{\rn , ( \rA , \rB , \rC )} ( \Ibf ) |_{even}$}, respectively---are each a linear combination of~$\rn$ decaying exponentials, where the coefficients of the linear combination have absolute value no greater than \smash{$\| {\rC}^\top \hada {\rB} \|_\infty$}.
Forward differences of decaying exponentials are exponentially small.
Therefore, by linearity of forward differences, requiring \smash{$\phi_{\rn , ( \rA , \rB , \rC )} ( \Ibf ) |_{odd}$} or \smash{$\phi_{\rn , ( \rA , \rB , \rC )} ( \Ibf ) |_{even}$} to have a forward difference that is not exponentially small implies an exponentially large lower bound on \smash{$\rn \| {\rC}^\top \hada {\rB} \|_\infty$}.
When $\phi_{\rn , ( \rA , \rB , \rC )} ( \cdot )$ $\epsilon$-approximates~$\phi ( \cdot )$ up to time~$t$, forward differences of \smash{$\phi_{\rn , ( \rA , \rB , \rC )} ( \Ibf ) |_{odd}$} and \smash{$\phi_{\rn , ( \rA , \rB , \rC )} ( \Ibf ) |_{even}$} are close to those of \smash{$\phi ( \Ibf ) |_{odd}$} and~\smash{$\phi ( \Ibf ) |_{even}$}, respectively.
We thus conclude that if \smash{$\phi ( \Ibf ) |_{odd}$} or \smash{$\phi ( \Ibf ) |_{even}$} has a forward difference that is not especially small, then \smash{$\rn \| {\rC}^\top \hada {\rB} \|_\infty$} must be exponentially large.
This conclusion is formalized via \cref{eq:r4general_lb}, the sought after result.
\end{proof}

\paragraph{Special cases.}
\cref{result:r4general} implies that the real SSM suffers from exponentiality (namely, its dimension~$\rn$ or the magnitude of its parameters $( \rB , \rC )$ must be exponential in~$t$ in order for it to express a mapping that $\epsilon$-approximates $\phi ( \cdot )$ up to time~$t$) in various important cases.
Indeed, \cref{result:r4delay,result:r4random} below respectively show that the real SSM suffers from exponentiality when $\phi ( \cdot )$ is a canonical copy (delay) mapping, and with high probability when $\phi ( \cdot )$ is a random (generic) mapping.
In light of \cref{result:r4c} (namely, of the inability of the real SSM to compactly approximate various oscillatory mappings expressible by the complex SSM), it is natural to ask if the real SSM suffers from exponentiality in cases where $\phi ( \cdot )$ is oscillatory, \ie~where its impulse response oscillates.
\cref{result:r4osc} below shows that exponentiality indeed transpires in a case where $\phi ( \cdot )$ is a basic oscillatory mapping.
On the other hand, there are simple cases where $\phi ( \cdot )$ is oscillatory yet exponentiality does not take place, \eg~the case where the impulse response of~$\phi ( \cdot )$ is $( +1 , -1 , +1 , -1 , \ldots )$.\note{%
To see that exponentiality does not take place when $\phi ( \Ibf ) = ( +1 , -1 , +1 , -1 , \ldots )$, note that in this case, with any~$\rn$, $\phi_{\rn , ( \rA , \rB , \rC )} ( \cdot )$ $\epsilon$-approximates~$\phi ( \cdot )$ up to time~$t$ whenever: 
\smash{${\rC}^\top \hada {\rB}$} holds one in its first entry and zeros elsewhere;
and
$\rA$~holds \(\min\{0,(-1+\epsilon/t^2)\}\) in its first diagonal entry.
}
Precise delineation of the type of oscillations that lead to exponentiality is deferred to future work (see \cref{sec:limit}).

\begin{corollary}
\label{result:r4delay}
Suppose $t \geq 9$ and $\phi ( \cdot ) = \delta_{\lfloor ( t - 1 ) / 2 \rfloor} ( \cdot )$, where as defined in \cref{sec:prelim:lti}, \smash{$\delta_{\lfloor ( t - 1 ) / 2 \rfloor} ( \cdot )$} is the \smash{$\lfloor ( t - 1 ) / 2 \rfloor$}~step delay mapping.
Assume also that \smash{$\epsilon \leq 1 / \big( 8 \sqrt{t} \, \big)$}.
Then, if $\phi_{\rn , ( \rA , \rB , \rC )} ( \cdot )$ $\epsilon$-approximates~$\phi ( \cdot )$ up to time~$t$, it must hold that:
\be
\rn \| \rC^\top \hada \rB \|_\infty \geq 2^{t / 2} / \big( 32 \sqrt{t} \, \big)
\text{\,.}
\label{eq:r4delay_lb}
\ee
\end{corollary}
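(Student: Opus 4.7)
Plug the delay's one-hot impulse response directly into \cref{result:r4general}, reducing the relevant forward differences to signed binomial coefficients, whose central values are exponentially large. Since $\phi(\cdot) = \delta_k(\cdot)$ with $k = \lfloor (t-1)/2 \rfloor$, the series $\phi(\Ibf)_{:t}$ has a single $1$ at position $p := k + 1$ and zeros elsewhere. Choose $\sigma \in \{\text{odd}, \text{even}\}$ matching the parity of $p$, so that the $1$ appears in $\phi(\Ibf)|_{\sigma}$ at position $p' := \lceil p / 2 \rceil$; a short $t \bmod 4$ case split confirms $p' \geq \lceil t/4 \rceil$ for every $t \geq 4$.

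\textbf{Plugging in.} Because $\phi(\Ibf)|_{\sigma}$ is one-hot at index $p'$, its $d$-th forward difference at index $m$ equals $(-1)^{d - (p' - m)} \binom{d}{p' - m}$ when $0 \leq p' - m \leq d$ and $0$ otherwise. The magnitude is therefore maximized at $m = m^* := p' - \lfloor d/2 \rfloor$, taking the value $\binom{d}{\lfloor d/2 \rfloor} \geq 2^d / \sqrt{2 d}$ (valid for even $d \geq 2$ by the standard Stirling-type bound). Pick $d$ to be an even integer in the range $[\lceil 2 p' / 3 \rceil, \min \{ 2 p' - 2 , 2 ( \lfloor t/2 \rfloor - p' ) \}]$; for every $t \geq 9$ this range contains at least one even integer (the binding case $t = 9$ pins $d = 2$), and the ancillary conditions $m^* \geq 1$, $d + m^* \leq \lfloor t/2 \rfloor$ and $m^* \leq d$ all hold. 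The last condition gives $\min \{ d , m^* \} = m^*$, so the exponent in the bound of \cref{result:r4general} equals $d + 2 m^* = 2 p' \geq t / 2$. Substituting into \cref{eq:r4general_lb}:
\begin{equation*}
\rn \, \| \rC^\top \hada \rB \|_\infty \geq 2^{2 p'} \bigl( 2^{-d} \binom{d}{\lfloor d/2 \rfloor} - \epsilon \bigr) \geq 2^{t/2} \bigl( 1 / \sqrt{2 d} - \epsilon \bigr) \text{\,.}
\end{equation*}
Since $d \leq 2 ( \lfloor t/2 \rfloor - p' ) \leq t$, we have $1 / \sqrt{2 d} \geq 1 / (2 \sqrt{t})$, and combined with $\epsilon \leq 1 / (8 \sqrt{t})$ this yields $1 / \sqrt{2 d} - \epsilon \geq 3 / (8 \sqrt{t}) \geq 1 / (32 \sqrt{t})$, establishing \cref{eq:r4delay_lb}.

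\textbf{Main obstacle.} The conceptual content is light: once one notices that forward differences of a one-hot series are signed binomial coefficients, the exponential blow-up in \cref{result:r4general} follows immediately from the central binomial coefficient. The effort lies in the bookkeeping: (i) a clean $t \bmod 4$ split to identify the restriction $\sigma$ carrying the $1$ and verify $p' \geq \lceil t/4 \rceil$; (ii) confirming that the admissible window for $d$ is non-empty at the threshold $t = 9$, where all constraints are simultaneously tight and pin $d = 2$; and (iii) tracking floors and ceilings through the exponent so that $d + 2 \min \{ d, m^* \} = 2 p' \geq t/2$ holds with the stated constant rather than merely $\Theta(t)$. With those in place, the hypothesis $\epsilon \leq 1 / (8 \sqrt{t})$ comfortably absorbs the polynomial $1 / \sqrt{d}$ factor from the central binomial and delivers the stated $1 / (32 \sqrt{t})$.
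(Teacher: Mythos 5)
Your proposal is correct and follows essentially the same route as the paper's proof: both plug the delay's one-hot impulse response into \cref{result:r4general}, observe that its forward differences are signed binomial coefficients, and choose $(d,m,\sigma)$ so that the relevant entry is a central binomial coefficient bounded below by $2^d/\sqrt{2d}$, yielding the $2^{t/2}/(32\sqrt{t})$ bound. The only (immaterial) differences are that you select $d$ from an admissible window and center $m^*$ to make the exponent exactly $2p'$, and you handle the parity of the one-hot position explicitly, whereas the paper fixes $d=(k+1)/2$, $m=\lfloor(k+1)/4\rfloor$ and disposes of parity with a ``without loss of generality'' on $k$.
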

\begin{proof}[Proof sketch (proof in \cref{proof:result:r4delay})]
The proof computes forward differences associated with \smash{$\delta_{\lfloor ( t - 1 ) / 2 \rfloor} ( \Ibf )$} (impulse response of \smash{$\delta_{\lfloor ( t - 1 ) / 2 \rfloor} ( \cdot )$}), and plugs them into \cref{result:r4general} (\cref{eq:r4general_lb}).
\end{proof}

\begin{corollary}
\label{result:r4random}
Let $\alpha \in \R_{> 0}$, and let $\Rbf \in \R^\N$ be generated by a random process where each element of~$\Rbf$ is independently drawn from a uniform distribution over the interval~$[ -\alpha , \alpha ]$.
Suppose that $t \geq 8$ and that $\phi ( \cdot )$ is the mapping whose impulse response is~$\Rbf$ (\ie, $\phi ( \cdot )$ is defined by $\phi ( \Sbf ) = \Sbf \conv \Rbf$). 
Let $p \in ( 0 , 1 )$, and assume that \smash{$\epsilon \leq \alpha \sqrt{p/t} $}.
Then, with probability at least $1 - p$, if $\phi_{\rn , ( \rA , \rB , \rC )} ( \cdot )$ $\epsilon$-approximates~\smash{$\phi ( \cdot )$} up to time~$t$, it must hold that:
\be
\rn \| \rC^\top \hada \rB \|_\infty \geq 2^{t / 2} \alpha \sqrt{p} / \big( 8 \sqrt{t} \big)
\text{\,.}
\label{eq:r4random_lb}
\ee
\end{corollary}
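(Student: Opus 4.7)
\begin{proofapp}
The plan is to deduce \cref{result:r4random} from \cref{result:r4general} by instantiating $d$, $m$, $\sigma$ and establishing a high-probability anti-concentration bound on the associated forward difference of the random impulse response. Since $\Rbf$ has iid Uniform$[-\alpha, \alpha]$ entries, both $\phi(\Ibf)|_{odd}$ and $\phi(\Ibf)|_{even}$ themselves consist of iid Uniform$[-\alpha,\alpha]$ entries. Consequently, for any fixed $d, m \in \N$ and $\sigma \in \{odd, even\}$, the forward difference
\[
X := (\phi(\Ibf)|_\sigma)^{(d)}_m = \sum_{j=0}^{d} (-1)^{d-j}\binom{d}{j} R'_{m+j}
\]
(where the $R'_\cdot$ are iid Uniform$[-\alpha,\alpha]$) is a mean-zero symmetric random variable with $\mathbb{E}[X^2] = (\alpha^2/3)\binom{2d}{d}$, which by Stirling is of order $4^d \alpha^2/\sqrt{d}$.

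The key probabilistic step is to show that, with probability at least $1-p$, $|X|$ is bounded below by a threshold of order $\alpha \sqrt{p} \cdot 2^d / \sqrt{d}$. The natural route is a Paley--Zygmund-style argument: for linear combinations of iid mean-zero symmetric variables with bounded kurtosis (as uniforms have), a direct computation yields $\mathbb{E}[X^4] \leq 3(\mathbb{E}[X^2])^2$, so $\Pr[X^2 \geq \theta \,\mathbb{E}[X^2]] \geq (1-\theta)^2/3$ for any $\theta \in [0,1]$; a suitable choice of $\theta$ as a function of $p$ yields the $\sqrt{p}$ scaling after appropriate calibration. A complementary sanity check is the density bound obtained by conditioning on all summands except that corresponding to the largest binomial coefficient $\binom{d}{\lfloor d/2\rfloor}$: this reduces $X$ to a shifted scaled uniform of range $2\alpha \binom{d}{\lfloor d/2\rfloor}$, and gives $\Pr[|X| \leq s] \leq s/(\alpha \binom{d}{\lfloor d/2\rfloor})$, yielding an independent exponential lower bound for cross-checking constants.

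Finally, I would choose $d = m = \lfloor t/6 \rfloor$, so that $d + m = 2\lfloor t/6 \rfloor \leq \lfloor t/2 \rfloor$ and the exponential factor $2^{d + 2\min\{d,m\}} = 2^{3\lfloor t/6 \rfloor} \geq 2^{t/2}/8$ in \eqref{eq:r4general_lb} (which accounts for the $8$ in the stated denominator). Inserting the anti-concentration lower bound $|X| \gtrsim \alpha \sqrt{p}\, 2^d/\sqrt{d}$ and invoking the hypothesis $\epsilon \leq \alpha \sqrt{p/t}$ to absorb the subtracted $\epsilon$ term, the bracketed quantity $2^{-d}|X| - \epsilon$ remains positive and of order $\alpha \sqrt{p}/\sqrt{t}$, delivering $\rn \|\rC^\top \hada \rB\|_\infty \geq 2^{t/2} \alpha \sqrt{p}/(8 \sqrt{t})$. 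The main obstacle is obtaining the correct $\sqrt{p}$ (rather than linear $p$) scaling in the anti-concentration step: a naive density/CLT argument produces only linear $p$, so the proof must carefully leverage the kurtosis bound $\mathbb{E}[X^4] \leq 3(\mathbb{E}[X^2])^2$ via Paley--Zygmund, while simultaneously calibrating $\theta$, $d$, and $m$ so that the resulting threshold dominates the $\epsilon$-induced error under the hypothesis $\epsilon \leq \alpha\sqrt{p/t}$.
\end{proofapp}
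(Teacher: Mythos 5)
There is a genuine gap in the probabilistic step. Paley--Zygmund with the kurtosis bound $\mathbb{E}[X^4]\le 3(\mathbb{E}[X^2])^2$ gives $\Pr\bigl[X^2>\theta\,\mathbb{E}[X^2]\bigr]\ge(1-\theta)^2/3$, i.e.\ an \emph{upper} bound on the smallness probability of only $1-(1-\theta)^2/3\le 2/3+O(\theta)$. This can never be driven below the constant $2/3$, so it cannot produce a statement that holds ``with probability at least $1-p$'' for small $p$. Your fallback, the density bound $\Pr[|X|\le s]\le s/\bigl(\alpha\binom{d}{\lfloor d/2\rfloor}\bigr)$ (obtained by bounding the density of the sum by that of the summand with the largest binomial coefficient), is indeed the right tool and is what the paper uses --- but, as you yourself observe, it is linear in $s$: calibrating the threshold at $s\asymp\alpha\sqrt{p}\,2^d/\sqrt{t}$ yields failure probability $\asymp\sqrt{p}$, not $p$, while calibrating for failure probability $p$ yields a threshold $\asymp p\alpha 2^d/\sqrt{t}$ whose contribution $2^{-d}|X|\asymp p\alpha/\sqrt{t}$ is \emph{smaller} than the allowed error $\epsilon\le\alpha\sqrt{p/t}$ for small $p$, so the bracket $2^{-d}|X|-\epsilon$ in \cref{eq:r4general_lb} goes negative. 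Neither route closes the gap you correctly identified.

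The missing idea is not a sharper one-dimensional anti-concentration inequality but an independence argument across the two parity classes. The paper takes $d=\lfloor t/4\rfloor$, $m=\lfloor t/8\rfloor$ and applies the density bound to $(\Rbf_{:t}|_{\sigma})^{(d)}_m$ \emph{separately} for $\sigma=\text{odd}$ and $\sigma=\text{even}$, each with threshold $2^{\lfloor t/4\rfloor}\cdot 2\alpha\sqrt{p}/\sqrt{t}$ and failure probability at most $\sqrt{p}$. Since $\Rbf|_{\text{odd}}$ and $\Rbf|_{\text{even}}$ are built from disjoint sets of the i.i.d.\ entries of $\Rbf$, these two failure events are independent, and since \cref{eq:r4general_lb} takes a maximum over $\sigma$, it suffices that at least one of the two forward differences is large; the overall failure probability is therefore $(\sqrt{p})^2=p$. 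This is where the $\sqrt{p}$ in the final bound comes from. Your choice $d=m=\lfloor t/6\rfloor$ and the resulting exponential factor are fine, but without the two-$\sigma$ independence step the proof does not go through.
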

\begin{proof}[Proof sketch (proof in \cref{proof:result:r4random})]
The proof derives lower bounds (holding with probability at le- ast $1 - p$) on forward differences associated with~$\Rbf$, and plugs them into \cref{result:r4general} (\cref{eq:r4general_lb}).
\end{proof}

\begin{corollary}
\label{result:r4osc}
Suppose that $\phi ( \Ibf ) = ( +1 \,, ~0 \,, -1 \,, ~0 \,, +1 \,, ~0 \,, -1 \,, ~0 \,, \ldots )$ and $\epsilon \leq 0.5$.
Then, if $\phi_{\rn , ( \rA , \rB , \rC )} ( \cdot )$ $\epsilon$-approximates~$\phi ( \cdot )$ up to time~$t$, it must hold that:
\be
\rn \| \rC^\top \hada \rB \|_\infty \geq 2^{3t/4-4}
\text{\,.}
\label{eq:r4osc_lb}
\ee
\end{corollary}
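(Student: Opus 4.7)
\begin{proofapp}
The plan is to directly invoke \cref{result:r4general} with a carefully chosen pair $(d,m)$ and with $\sigma = \mathrm{odd}$. First, I would identify the restrictions of the impulse response to odd and even indices. Since $\phi(\Ibf) = (+1, 0, -1, 0, +1, 0, -1, 0, \ldots)$, the odd-indexed restriction is the purely oscillatory series $\phi(\Ibf)|_{\mathrm{odd}} = (+1, -1, +1, -1, \ldots)$, while the even-indexed restriction is identically zero. The even restriction therefore contributes nothing to the bound, and the entire argument hinges on the odd restriction.

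Next I would compute the forward differences of $\phi(\Ibf)|_{\mathrm{odd}}$. A straightforward induction on $d$ shows that $\big(\phi(\Ibf)|_{\mathrm{odd}}\big)^{(d)}_m = (-1)^m \cdot 2^d$; intuitively, each application of the forward-difference operator to a $\pm 1$-oscillating series multiplies every entry by~$2$ while preserving alternation. In particular, $\big|\big(\phi(\Ibf)|_{\mathrm{odd}}\big)^{(d)}_m\big| = 2^d$ for every valid pair $(d, m)$. Substituting this into \cref{eq:r4general_lb} yields
\begin{equation*}
\rn \| \rC^\top \hada \rB \|_\infty \;\geq\; 2^{d + 2\min\{d,m\}} \big( 2^{-d} \cdot 2^d - \epsilon \big) \;=\; 2^{d + 2\min\{d,m\}} (1 - \epsilon)
\end{equation*}
for all $d, m \in \N$ with $d + m \leq \lfloor t/2 \rfloor$. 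The assumption $\epsilon \leq 0.5$ gives $1 - \epsilon \geq 2^{-1}$.

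To maximize $d + 2\min\{d,m\}$ under the constraint $d + m \leq \lfloor t/2 \rfloor$, I would take $d = m = \lfloor t/4 \rfloor$, so that $d + 2\min\{d,m\} = 3\lfloor t/4 \rfloor$. Using $\lfloor t/4 \rfloor \geq t/4 - 1$, the resulting bound becomes
\begin{equation*}
\rn \| \rC^\top \hada \rB \|_\infty \;\geq\; 2^{3\lfloor t/4 \rfloor - 1} \;\geq\; 2^{3(t/4 - 1) - 1} \;=\; 2^{3t/4 - 4}\text{\,,}
\end{equation*}
which is exactly \cref{eq:r4osc_lb}.

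The proof is essentially mechanical once \cref{result:r4general} is applied, so there is no serious technical obstacle; the only mild subtlety is verifying that the closed-form expression for forward differences of an alternating series is $(-1)^m 2^d$, which follows by a short induction. The conceptual content lies in recognizing that a perfectly alternating pattern saturates the bound in \cref{result:r4general} at the exponential rate $2^d$, and in choosing $d$ and $m$ equal to optimally balance the two occurrences of $d$ in the exponent $d + 2\min\{d,m\}$ under the linear constraint $d + m \leq \lfloor t/2 \rfloor$.
\end{proofapp}
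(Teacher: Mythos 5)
Your proposal is correct and follows essentially the same route as the paper's proof: restrict to the odd-indexed subsequence, show by induction that the $d$th forward difference of the alternating series has magnitude $2^d$, and plug $d = m = \lfloor t/4 \rfloor$ with $\epsilon \leq 0.5$ into \cref{eq:r4general_lb} to get $2^{3\lfloor t/4\rfloor - 1} \geq 2^{3t/4-4}$. (Your sign formula $(-1)^m 2^d$ should be $(-1)^{m+d-1}2^d$, since each forward difference flips the alternation, but only the magnitude enters the bound, so nothing is affected.)
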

\begin{proof}[Proof sketch (proof in \cref{proof:result:r4osc})]
The proof computes forward differences associated with the series $( +1 \,, ~0 \,, -1 \,, ~0 \,, +1 \,, ~0 \,, -1 \,, ~0 \,, \ldots )$, and plugs them into \cref{result:r4general} (\cref{eq:r4general_lb}).
\end{proof}

\subsubsection{Exponentiality Impedes Practical Learning} \label{sec:analysis:learn:exp_no_learn}

For any value of~$t$ that is not especially small, exponentiality in~$t$ for the real SSM as put forth in \cref{sec:analysis:learn:real_exp}---\ie, exponentiality in~$t$ of the dimension of the real SSM~$\rn$ or the magnitude of its parameters $( \rB , \rC )$---impedes practical learning.
This impediment is obvious in the case where $\rn$ is exponential in~$t$ (in this case, it is impractical to even store the parameters of the real SSM, let alone learn them).
Below we treat the complementary case, \ie~we show that learning is impractical when the required values for the parameters $( \rB , \rC )$ are exponential in~$t$.
The results of \cref{sec:analysis:learn:real_exp} therefore imply that the real SSM cannot practically learn a mapping that $\epsilon$-approximates~$\phi ( \cdot )$ up to time~$t$ under important choices of~$\phi ( \cdot )$.

There are multiple aspects in which learning the parameters of the real SSM, \ie~$( \rA , \rB , \rC )$, is impractical when the required values for $( \rB , \rC )$ are exponential in~$t$.
We will account for two such aspects: 
the number of iterations required by gradient descent; 
and
the precision (number of bits) required for representing the parameters.
Note that we will not preclude the possibility of overcoming the impracticality through development of new techniques (\eg~new parameterizations for $\rB$ and~$\rC$).
We believe our account herein may assist in such developments.

\paragraph{Exponential number of iterations.}
For the sake of illustration, suppose that training the real SSM to realize a mapping $\phi_{\rn , ( \rA , \rB , \rC )} ( \cdot )$ that $\epsilon$-approximates~$\phi ( \cdot )$ up to time~$t$, is implemented via gradient descent over a loss function~$\ell ( \cdot )$ that measures the squared error of the output at time~$t$, where input elements are drawn independently from the standard normal distribution:
\be
\ell ( \rA , \rB , \rC ) := \expect\nolimits_{\Ubf \in \R^\N , \, \Ubf_1 , \Ubf_2 , \ldots \, \stackrel{\text{i.i.d.}}{\sim} \, \Ncal ( 0 , 1 )} \big[ ( \phi_{\rn , ( \rA , \rB , \rC )} ( \Ubf )_t - \phi ( \Ubf )_t )^2 \big]
\text{\,.}
\label{eq:loss}
\ee
By a simple derivation (see \cref{app:loss}) $\ell ( \rA , \rB , \rC ) = \| \phi_{\rn , ( \rA , \rB , \rC )} ( \Ibf )_{: t} - \phi ( \Ibf )_{: t} \|_2^2$, therefore sufficient minimization of the loss~$\ell ( \cdot )$ (namely, minimization of~$\ell ( \cdot )$ to or below the value~$\epsilon^2/t$) indeed guarantees that $\phi_{\rn , ( \rA , \rB , \rC )} ( \cdot )$ $\epsilon$-approximates~$\phi ( \cdot )$ up to time~$t$.
Assume the following regularity conditions on gradient descent:
\emph{(i)}~the step sizes (learning rates) for all iterations are upper bounded by a constant (independent of~$t$);
\emph{(ii)}~the step size for each iteration is \emph{stable}, in the sense that it is upper bounded by $2 / \lambda_{max}$, where $\lambda_{max}$ represents the maximum eigenvalue of the Hessian of~$\ell ( \cdot )$ at the respective iteration~\cite{Cohen2021GradientDO, arora2022understanding, Damian2022SelfStabilizationTI};
and
\emph{(iii)}~the values of~$\ell ( \cdot )$ throughout all iterations are upper bounded by a constant (independent of~$t$) times the value of~$\ell ( \cdot )$ at initialization.
\cref{result:gd_param_growth} below establishes that under said regularity conditions, during gradient descent, the magnitude of the parameters $( \rB , \rC )$ grows at most linearly in the number of iterations.
Accordingly, if the required values for $( \rB , \rC )$ are exponential in~$t$ and the initialization of $( \rB , \rC )$ is not, attaining the required values necessitates an exponential (in~$t$) number of iterations.

\begin{proposition}
\label{result:gd_param_growth}
Consider an application of gradient descent to the loss~$\ell ( \cdot )$ in \cref{eq:loss}:
\[
\big( \rA^{( i )} , \rB^{( i )} , \rC^{( i )} \big) = \big( \rA^{( i - 1 )} , \rB^{( i - 1 )} , \rC^{( i - 1 )} \big) - \eta^{( i )} \nabla \ell \big( \rA^{( i - 1 )} , \rB^{( i - 1 )} , \rC^{( i - 1 )} \big)
~ , ~
i \in \N
\text{\,,}
\]
where \smash{$\big( \rA^{( 0 )} , \rB^{( 0 )} , \rC^{( 0 )} \big)$} is a chosen initialization, and $\eta^{( i )} \in \R_{> 0}$ represents the step size selected for iteration $i \in \N$.
Assume $\exists c_1 \in \R_{> 0}$ such that $\forall i \in \N : \eta^{(i)} \leq c_1$.
Assume also:
\[
\forall i \in \N : \eta^{( i )} \leq 2 \Big/ \max \big\{ \lambda_{max} \big( \nabla^2  \ell \big( \rA^{( i - 1 )} , \rB^{( i - 1 )} , \rC^{( i - 1 )} \big) \big) \, , \, 0 \big\}
\text{\,,}
\]
where $\lambda_{max} ( M )$, for a symmetric matrix~$M$, is the maximum eigenvalue of~$M$.
Finally, assume:
\[
\exists c_2 \in \R_{> 0} \text{~~such that~~} \forall i \in \N : \ell \big( \rA^{( i )} , \rB^{( i )} , \rC^{( i )} \big) \leq c_2 \cdot \ell \big( \rA^{( 0 )} , \rB^{( 0 )} , \rC^{( 0 )} \big) 
\text{\,.}
\]
Then:
\begin{equation*}
\begin{aligned}
\forall i \in \mathbb{N} : 
\max \left\{ \big\| \rB^{(i)} \big\|_\infty \, , \, \big\| (\rC^{(i)})^\top \big\|_\infty \right\}
&\leq \max \left\{ \big\| \rB^{(0)} \big\|_\infty \, , \, \big\| (\rC^{(0)})^\top \big\|_\infty \right\} \\
&\qquad + i \cdot \big( 4 c_1 c_2 t \cdot \ell \big( \rA^{(0)}, \rB^{(0)}, \rC^{(0)} \big) \big)^{0.5}
\text{\,.}
\end{aligned}
\end{equation*}
\end{proposition}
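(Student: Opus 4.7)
The plan is to bound, per iteration, the change in each coordinate of $\rB^{(i)}$ and $(\rC^{(i)})^\top$, and then to sum those bounds over $i$. The starting point is the identity $\ell(\rA,\rB,\rC) = \sum_{k=1}^{t} f_k^2$ with $f_k := \rC \rA^{k-1} \rB - \phi(\Ibf)_k$, noted immediately above the statement. From this one obtains, using that $f_k$ is linear in $\rB$ so its second derivative vanishes,
\[
\partial_{\rB_j}\ell = 2\sum_{k=1}^{t} f_k\, (\rC \rA^{k-1})_j,
\qquad
\partial^2_{\rB_j,\rB_j}\ell = 2\sum_{k=1}^{t} ((\rC \rA^{k-1})_j)^{2} \;=:\; 2X_j \;\geq\; 0.
\]

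Next I would combine two complementary upper bounds on $\eta^{(i)}\,|\partial_{\rB_j}\ell^{(i-1)}|$. Cauchy--Schwarz gives $|\partial_{\rB_j}\ell| \leq 2\sqrt{\ell}\,\sqrt{X_j}$. The variational characterization $\lambda_{\max}(H) \geq e_j^\top H e_j$ applied to $H = \nabla^2\ell$ yields $\lambda_{\max}(\nabla^2\ell) \geq 2X_j$, so (when $X_j > 0$) the stability assumption forces $\eta^{(i)} \leq 1/X_j$; when $X_j = 0$ the partial $\partial_{\rB_j}\ell$ is automatically zero. Combining with $\eta^{(i)} \leq c_1$ via the elementary inequality $\min(c_1\sqrt{X_j},\,1/\sqrt{X_j}) \leq \sqrt{c_1}$ (a consequence of $\min(a,b) \leq \sqrt{ab}$) produces
\[
\eta^{(i)}\,|\partial_{\rB_j}\ell^{(i-1)}| \;\leq\; 2\sqrt{c_1\,\ell^{(i-1)}} \;\leq\; 2\sqrt{c_1 c_2\,\ell^{(0)}},
\]
where the last step invokes $\ell^{(i-1)} \leq c_2\,\ell^{(0)}$.

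Telescoping then gives $|\rB_j^{(i)} - \rB_j^{(0)}| \leq \sum_{k=1}^{i} \eta^{(k)}|\partial_{\rB_j}\ell^{(k-1)}| \leq 2i\sqrt{c_1 c_2\,\ell^{(0)}}$; maximizing over $j$ and invoking $\|\rB^{(i)}\|_\infty \leq \|\rB^{(0)}\|_\infty + \|\rB^{(i)}-\rB^{(0)}\|_\infty$ yields a bound on $\|\rB^{(i)}\|_\infty$ of the required form, in particular no larger than $\|\rB^{(0)}\|_\infty + i\cdot(4c_1 c_2 t\,\ell^{(0)})^{1/2}$ as stated. The argument for $(\rC^{(i)})^\top$ is identical after swapping the roles of $\rB$ and $\rC^\top$ and replacing $(\rC \rA^{k-1})_j$ by $(\rA^{k-1}\rB)_j$ throughout (the corresponding diagonal Hessian entry becomes $2\sum_k ((\rA^{k-1}\rB)_j)^{2}$), and the claim on the maximum of the two $\infty$-norms follows immediately.

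The main subtlety is the interplay between the two upper bounds on $\eta^{(i)}$: the constant bound $c_1$ dominates when the coordinate's Hessian diagonal $X_j$ is small, whereas the stability bound $1/X_j$ dominates when $X_j$ is large, and the AM--GM-style step is what allows these regimes to combine into a single clean bound independent of $X_j$. A secondary subtlety is the degenerate case $\lambda_{\max}(\nabla^2\ell) \leq 0$, in which the stability condition is vacuous; but in that case every nonnegative diagonal Hessian entry (in particular $2X_j$) must vanish, so $\partial_{\rB_j}\ell = 0$ and no update occurs in that coordinate. Once these edge cases are dispensed with, the remainder is a clean telescoping.
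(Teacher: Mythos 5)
Your proof is correct, and it follows the same overall architecture as the paper's (control the per-iteration change of each coordinate by playing the stability condition on $\eta^{(i)}$ against a gradient bound, then telescope), but the key inequality is genuinely different. The paper bounds the gradient coordinate by $\bigl|\partial_{\rB_k}\ell\bigr| \leq 2\sqrt{t}\,\max\{\|\rB\|_\infty,\|\rC^\top\|_\infty\}\sqrt{\ell}$ --- using $|(\rA)_{j,j}|\leq 1$ and the $\ell_1$-to-$\ell_2$ comparison, which is where the factor $\sqrt{t}$ in the statement comes from --- and separately proves a lemma lower-bounding $\lambda_{\max}(\nabla^2\ell)$ by $2\max\{\|\rB\|_\infty,\|\rC^\top\|_\infty\}^2$ via the diagonal Hessian entry at the coordinate achieving the max; the two bounds are then combined through the same $\min(c_1 a, 1/a)\leq\sqrt{c_1}$ case analysis you use. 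You instead pair each gradient coordinate with \emph{its own} diagonal Hessian entry: Cauchy--Schwarz gives $|\partial_{\rB_j}\ell|\leq 2\sqrt{\ell}\sqrt{X_j}$ with $2X_j$ exactly the $(B_j,B_j)$ Hessian entry, so the variational bound $\lambda_{\max}\geq 2X_j$ lets the factor $\sqrt{X_j}$ cancel cleanly. This buys two things: your per-step bound $2\sqrt{c_1 c_2\,\ell^{(0)}}$ is tighter than the stated one by a factor of $\sqrt{t}$, and your argument nowhere uses $|(\rA)_{j,j}|\leq 1$, so it would survive without the stability constraint on the state transition matrix. Your handling of the edge cases ($X_j=0$, and $\lambda_{\max}\leq 0$ making the stability condition vacuous) is also sound, since $X_j=0$ forces $\partial_{\rB_j}\ell=0$.
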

\begin{proof}[Proof sketch (proof in \cref{proof:result:gd_param_growth})]
Taking into account the form of~$\ell ( \cdot )$ (see \cref{app:loss}) and the assumptions made, the proof shows that at each iteration of gradient descent, each entry in $\rB$ or~$\rC$ changes by at most \smash{$( 4 c_1 c_2 t \cdot \ell ( \rA^{( 0 )} , \rB^{( 0 )} , \rC^{( 0 )} ) )^{0.5}$}.
This readily leads to the desired result.
\end{proof}

\paragraph{Prohibitive precision.}
With conventional floating-point representation, a real number~$\rho$ is represented as~$s \cdot k \cdot 2^m$, where:
$s \in \{ -1 , 1 \}$ (the sign) is represented by one bit;
$k \in \N \cup \{ 0 \}$ (the significand) is represented by $b_k \in \N$ bits;
and
$m \in \Z$ (the exponent) is represented by $b_m \in \N$ bits.
The precision of the floating-point representation is the total number of bits used for $s$, $k$ and~$m$, \ie~it is $1 + b_k + b_m$.
For example, with the widespread IEEE 754 standard~\cite{ieee19}:
single precision corresponds to $32$~bits with $b_k = 23, b_m = 8$;
and
double precision corresponds to $64$~bits with $b_k = 52, b_m = 11$.
It is customary to model quantization error as an additive random variable uniformly distributed over the interval $[ - \xi / 2 , \xi / 2 ]$, where $\xi \in \R_{> 0}$ is the quantization step size~\cite{Widrow_Kollár_2008}.
With the floating-point representation of~$\rho$, the best (lowest) achievable quantization step size is on the order of~$| \rho | \cdot 2^{- b_k}$, thus we may model quantization error as a multiplicative random variable uniformly distributed over $[ 1 - q / 2 , 1 + q /2 ]$, where $q = \Theta ( 2^{- b_k} )$.
\cref{def:robust_q} below employs this model to formalize the notion of robustness to quantization for values of $( \rA , \rB , \rC )$ with which $\phi_{\rn , ( \rA , \rB , \rC )} ( \cdot )$ $\epsilon$-approximates~$\phi ( \cdot )$ up to time~$t$.
Specifically, \cref{def:robust_q} defines \emph{robustness to $q$-quantization} for such values to be the probability that \smash{$\phi_{\rn , ( \rA , \rB , \rC )} ( \cdot )$} continues to $\epsilon$-approximate~$\phi ( \cdot )$ up to time~$t$, when the values are multiplied by random variables uniformly distributed over $[ 1 - q / 2 , 1 + q /2 ]$.
\cref{result:robust_q} below then proves that robustness to $q$-quantization is (at most) inversely proportional to $q$ times the magnitude of $( \rB , \rC )$.
Recalling that $q = \Theta ( 2^{- b_k} )$, we conclude that if the values of $( \rB , \rC )$ are exponential in~$t$, a non-negligible robustness to quantization necessitates having $b_k = \Omega ( t )$, meaning a floating-point precision that scales (at least) linearly in~$t$.
The latter requirement is prohibitive, since virtually all computing systems implementing neural networks entail fixed options for floating-point precision (typically $16$, $32$ and~$64$~\cite{tensorflow2015-whitepaper, pytorch}), all much smaller than what the time~$t$ can be (tens of thousands or more~\cite{gu2023mamba, schiff2024caduceusbidirectionalequivariantlongrange, goel2022its, benkish2024decimambaexploringlengthextrapolation})

\begin{definition}
\label{def:robust_q}
Let $q \in [ 0 , 1]$, and let $Q_{\rA}$, $Q_{\rB}$ and~$Q_{\rC}$ be random matrices of the same sizes as $\rA$, $\rB$ and~$\rC$, respectively, wherein each entry is independently drawn from the uniform distribution over $[ 1 - q / 2 , 1 + q / 2 ]$.
Let $( \rA^* , \rB^* , \rC^* )$ be values for $( \rA , \rB , \rC )$ with which $\phi_{\rn , ( \rA , \rB , \rC )} ( \cdot )$ $\epsilon$-approximates~$\phi ( \cdot )$ up to time~$t$.
The \emph{robustness to $q$-quantization} of $( \rA^* , \rB^* , \rC^* )$ is the probability that $\phi_{\rn , ( \rA , \rB , \rC )} ( \cdot )$ $\epsilon$-approximates~$\phi ( \cdot )$ up to time~$t$ when $( \rA , \rB , \rC )$ take the values $( \rA^* \hada Q_{\rA} , \rB^* \hada Q_{\rB} , \rC^* \hada Q_{\rC} )$.
\end{definition}

\begin{proposition}
\label{result:robust_q}
Suppose that $\phi_{\rn , ( \rA , \rB , \rC )} ( \cdot )$ $\epsilon$-approximates~$\phi ( \cdot )$ up to time~$t$.
Then, for any $q \in [ 0 , 1 ]$, the robustness to $q$-quantization of the values held by $( \rA , \rB , \rC )$ is at most:
\[
2\epsilon \big/ \big( q \| \rC^\top \odot \rB \|_\infty \big)
\text{\,.}
\]
\end{proposition}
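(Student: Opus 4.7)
The plan is to reduce the robustness bound to a one-dimensional small-ball (anti-concentration) inequality for just the first element of the perturbed impulse response, and then invoke elementary anti-concentration for a product of two independent uniforms.

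First I would invoke \cref{eq:ssm_ir} together with the definition of the multiplicative perturbation to write the first element of the perturbed impulse response as $\tilde h_1 := \sum_{i=1}^{\rn} \rC^*_i \rB^*_i \, Q_{\rC, i} Q_{\rB, i}$; note that $Q_{\rA}$ does not enter this expression. Because "the perturbed SSM $\epsilon$-approximates $\phi(\cdot)$ up to time $t$" entails $\|\tilde h_{: t} - \phi(\Ibf)_{: t}\|_1 \leq \epsilon$, and in particular $|\tilde h_1 - \phi(\Ibf)_1| \leq \epsilon$, the robustness to $q$-quantization is upper bounded by $\pr\bigl(\tilde h_1 \in J\bigr)$, where $J$ is the deterministic interval $[\phi(\Ibf)_1 - \epsilon, \phi(\Ibf)_1 + \epsilon]$ of length $2\epsilon$.

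Next, I would bound $\pr(\tilde h_1 \in J)$ by a conditioning argument. Let $i^* \in \arg\max_i |\rC^*_i \rB^*_i|$, so that $|\rC^*_{i^*} \rB^*_{i^*}| = \|\rC^\top \hada \rB\|_\infty$. Condition on all random entries of $Q_{\rA}, Q_{\rB}, Q_{\rC}$ except the pair $(Q_{\rC, i^*}, Q_{\rB, i^*})$; then $\tilde h_1$ becomes a (conditionally) deterministic shift of $\rC^*_{i^*} \rB^*_{i^*} \cdot Q_{\rC, i^*} Q_{\rB, i^*}$, so its conditional density equals $1 / |\rC^*_{i^*} \rB^*_{i^*}|$ times that of the product $Q_{\rC, i^*} Q_{\rB, i^*}$ of two independent uniforms on $[1 - q/2, 1 + q/2]$. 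A direct calculation yields the closed-form density $f_Z(z) = (1/q^2) \ln\bigl(\beta(z)/\alpha(z)\bigr)$, where $[\alpha(z), \beta(z)]$ is the intersection of $[1 - q/2, 1 + q/2]$ with $[z/(1 + q/2), z/(1 - q/2)]$; an elementary estimate then shows $\sup_z f_Z(z) \leq 1/q$ for $q \in (0, 1]$. Combining, the conditional density of $\tilde h_1$ is at most $1/(q \|\rC^\top \hada \rB\|_\infty)$, so $\pr(\tilde h_1 \in J \mid \text{conditioning}) \leq |J|\cdot\sup\text{-density} = 2\epsilon / (q \|\rC^\top \hada \rB\|_\infty)$, and taking the outer expectation preserves this bound.

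The main delicate step will be the density calculation for the product of two independent uniforms: the naive approach of conditioning on \emph{everything} except $Q_{\rC, i^*}$ (including $Q_{\rB, i^*}$) produces a density of the form $1/(q|\rC^*_{i^*}\rB^*_{i^*}||Q_{\rB, i^*}|)$, which carries an extra $1/|Q_{\rB, i^*}|$ factor that is at least $1$. To obtain the clean constant $2$ claimed by the proposition one must instead control the sup-density of the product $Q_{\rC, i^*} Q_{\rB, i^*}$ directly, reducing to bounding the elementary integral $(1/q^2)\int dx/x$ over an interval of length at most~$q$ lying in $[1-q/2, 1+q/2]$; the rest of the argument is routine bookkeeping.
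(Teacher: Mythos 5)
Your overall route is the same as the paper's: both arguments reduce the robustness to the probability that the \emph{first} entry of the perturbed impulse response, $\tilde h_1=\sum_{i}\rB^*_i\rC^*_i(Q_{\rB})_i(Q_{\rC})_i$, lands in a fixed interval of length $2\epsilon$, and both then bound this probability by $2\epsilon$ times a sup-density that is controlled through the largest product $|\rB^*_i\rC^*_i|=\|\rC^\top\hada\rB\|_\infty$. The paper obtains its density bound by viewing $\tilde h_1$ as a sum of independent terms and asserting that each term's density is dominated by that of a uniform on an interval of length $q\,|\rB_i\rC_i|$; you instead condition on all randomness except the single pair $((Q_{\rB})_{i^*},(Q_{\rC})_{i^*})$ and compute the exact density of a product of two independent uniforms. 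These are the same idea executed at different levels of explicitness.

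However, the pivotal inequality you invoke is false. For $Z=UV$ with $U,V$ i.i.d.\ uniform on $[1-q/2,\,1+q/2]$, the density you wrote down is maximized at $z=(1-q/2)(1+q/2)$, where the intersection interval is all of $[1-q/2,1+q/2]$ and hence $f_Z(z)=\tfrac{1}{q^2}\ln\tfrac{1+q/2}{1-q/2}$. Since $\ln\tfrac{1+x}{1-x}>2x$ for $x\in(0,1)$, taking $x=q/2$ gives $\sup_z f_Z(z)>1/q$ for every $q\in(0,1]$ (at $q=1$ the supremum is $\ln 3\approx 1.099$, not $1$). Your ``elementary estimate'' therefore goes the wrong way, and the argument as written only yields the stated bound inflated by the factor $\tfrac{1}{q}\ln\tfrac{1+q/2}{1-q/2}\in(1,\ln 3]$. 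The correct elementary estimate from your own formula is $f_Z(z)\le\tfrac{1}{q^2}\cdot\tfrac{\beta(z)-\alpha(z)}{\alpha(z)}\le\tfrac{1}{q(1-q/2)}\le\tfrac{2}{q}$, which salvages the proof only up to an absolute constant (and is no better than the ``naive'' single-variable conditioning you dismiss, which also gives the factor $\tfrac{1}{1-q/2}\le 2$). To be fair, the paper's own proof leans on the same over-optimistic comparison of the product term's density with a uniform of width $q\,|\rB_i\rC_i|$ and is loose at exactly the same point; your more explicit computation in fact exposes that the constant in the proposition cannot be extracted from the sup-density argument alone. So: right strategy, matching the paper's, but the one step you flagged as delicate is precisely the step that does not hold as claimed.
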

\begin{proof}[Proof sketch (proof in \cref{proof:result:robust_q})]
Denote by $( \rA^* , \rB^* , \rC^* )$ the values held by $( \rA , \rB , \rC )$, and let $Q_{\rA}$, $Q_{\rB}$ and~$Q_{\rC}$ be random matrices as in \cref{def:robust_q}.
When $( \rA , \rB , \rC )$ take the values $( \rA^* \hada Q_{\rA} , \rB^* \hada Q_{\rB} , \rC^* \hada Q_{\rC} )$, the first entry of the impulse response of $\phi_{\rn , ( \rA , \rB , \rC )} ( \cdot )$ equals $(\rC^* \hada Q_{\rC}) (\rB^* \hada Q_{\rB})$.
It suffices to restrict attention to this first entry, and show that $\abs{(\rC^* \hada Q_{\rC}) (\rB^* \hada Q_{\rB}) - \phi(\Ibf)_1} \leq \epsilon$ with probability at most \smash{$2\epsilon / ( q \| \rC^\top \odot \rB \|_\infty )$}.
The proof establishes the latter anti-concentration result.

\end{proof}

\subsubsection{Complex Parameterizations Do Not Suffer from Exponentiality} \label{sec:analysis:learn:complex_no_exp}

\cref{sec:analysis:learn:real_exp} established that under a mild condition on~$\phi ( \cdot )$, in order for the real SSM to express a mapping that $\epsilon$-approximates~$\phi ( \cdot )$ up to time~$t$, either the dimension of the real SSM~$\rn$ or the magnitude of its parameters $( \rB , \rC )$ must be exponential in~$t$.
\cref{result:c4general} below proves that in stark contrast, for any choice of~$\phi ( \cdot )$ (whose impulse response~$\phi ( \Ibf )$ is bounded), the complex SSM can express mappings that match~$\phi ( \cdot )$ up to time~$t$ with dimension~$\cn$ and magnitude of parameters $( \cB , \cC )$ that are at most linear in~$t$.

\begin{proposition}
\label{result:c4general}
For any choice of~$\cn$ greater than or equal to~$t$, there exist assignments for $( \cA , \cB , \cC )$ with which $\phi_{\cn , ( \cA , \cB , \cC )} ( \cdot )$ matches~$\phi ( \cdot )$ up to time~$t$, and wherein:
\emph{(i)}~$\| \cB \|_2 \leq 2\| \phi ( \Ibf )_{: t} \|_2$;
and
\emph{(ii)}~$\| \cC^\top \|_2 \leq 1$.\note{%
It is possible to develop a variant of this result that only requires $\cn \geq \lceil ( t + 1 ) / 2 \rceil$.
We omit details, as the current form of the result suffices for our purposes.
}
\end{proposition}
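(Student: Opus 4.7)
The plan is to construct a concrete complex SSM whose state transition matrix is a scaled DFT spectrum, and then invert the resulting Vandermonde system in closed form using the orthogonality of the DFT. First, I would reduce to the case $\cn = t$: if $\cn > t$, I will set the extra entries of $\cB$ and~$\cC$ to zero (choosing the corresponding diagonal entries of~$\cA$ arbitrarily in the open unit disk), which neither alters the impulse response nor worsens the norms obtained for the $t$-dimensional construction.

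With $\cn = t$, I would let $\omega = e^{2 \pi i / t}$ and pick a decay factor $r \in ( 0 , 1 )$ satisfying $r^{t - 1} \geq 1/2$ (for instance, $r = 2^{- 1 / (t - 1)}$); I then set $\cA$ to be the diagonal matrix with entries $\cA_{j j} = r \, \omega^{j - 1}$ for $j = 1 , \ldots , t$, and $\cC_j = 1 / \sqrt{t}$ for all~$j$, so that $\| \cC^\top \|_2 = 1$ and stability (all eigenvalues in the open unit disk) is automatic. Using \cref{eq:ssm_ir}, the $k$th element of the impulse response equals $\Re \bigl( r^{k - 1} \sum_{j = 1}^{t} (\cC_j \cB_j) \omega^{( j - 1 ) ( k - 1 )} \bigr)$, so introducing $\tilde{\cB}_j := \cC_j \cB_j = \cB_j / \sqrt{t}$ reduces the matching condition to $\Re \bigl( ( F \tilde{\cB} )_k \bigr) = \phi ( \Ibf )_k / r^{k - 1}$ for $k = 1 , \ldots , t$, where $F$ is the DFT matrix with entries $F_{k j} = \omega^{( j - 1 ) ( k - 1 )}$. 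I would then simply pick $\tilde{\cB} := F^{-1} y$ where $y_k := \phi ( \Ibf )_k / r^{k - 1}$ is real, which makes $( F \tilde{\cB} )_k = y_k$ (real), thereby matching $\phi ( \cdot )$ up to time~$t$ exactly.

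The main work is bounding $\| \cB \|_2$. I would invoke the unitarity of the scaled DFT, namely $F^* F = t I$, which gives $F^{-1} = F^* / t$ and hence $\| F^{-1} y \|_2 = \| y \|_2 / \sqrt{t}$. This yields $\| \cB \|_2 = \sqrt{t} \, \| \tilde{\cB} \|_2 = \| y \|_2$. Using $y_k = \phi ( \Ibf )_k / r^{k - 1}$ together with the uniform bound $r^{- ( k - 1 )} \leq r^{- ( t - 1 )} \leq 2$ coming from our choice of~$r$, I obtain $\| y \|_2 \leq 2 \| \phi ( \Ibf )_{: t} \|_2$, establishing claim~(i). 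Claim~(ii) and stability follow immediately from the construction.

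The main obstacle is nothing deep---just making sure the DFT-based bound is tight enough to meet the factor~$2$ in claim~(i). The tension is between stability (which forces $r < 1$, inflating each $y_k$ by $r^{- ( k - 1 )}$) and the approximation quality (which requires $r$ close enough to~$1$ that the worst case $r^{- ( t - 1 )}$ stays bounded by~$2$). The choice $r^{t - 1} \geq 1 / 2$ is precisely what balances these, and this is the only numerical care required in the proof.
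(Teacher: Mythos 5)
Your proposal is correct and follows essentially the same route as the paper's proof: scaled $t$th roots of unity on the diagonal of $\cA$ with decay factor chosen so that $r^{t-1} = 1/2$, $\cC_j = 1/\sqrt{t}$, $\cB$ obtained by inverting the DFT of the rescaled impulse response, and unitarity of the DFT (Plancherel) for the norm bound. The only difference is presentational—you work with the matrix $F$ and note that $F\tilde{\cB} = y$ is already real, whereas the paper expands the inverse DFT into sines and cosines and verifies the real-part computation explicitly.
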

\begin{proof}[Proof sketch (proof in \cref{proof:result:c4general})]
The proof employs the theory of \emph{discrete Fourier transform} (\emph{DFT}).
It begins by assigning (scaled versions of) the $t$th roots of unity to the diagonal entries of~$\cA$.
Then, it uses the inverse DFT formula to derive assignments for $\cB$ and~$\cC$ leading $\phi_{\cn , ( \cA , \cB , \cC )} ( \cdot )$ to match~$\phi ( \cdot )$ up to time~$t$.
Finally, the proof applies Plancheral theorem to show that the derived assignments for $\cB$ and~$\cC$ satisfy the desired criteria.
\end{proof}

	\section{Experiments} \label{sec:exp}

This section presents controlled experiments corroborating our theory.
\cref{sec:exp:theory} demonstrates that complex parameterizations significantly improve performance of SSMs in the theoretically analyzed setting.
\cref{sec:exp:world} shows that this improvement extends to a real-world setting beyond our theory.
Finally, \cref{sec:exp:select} evaluates SSMs with \emph{selectivity}---a new architectural feature yielding state of the art performance~\cite{gu2023mamba,lieber2024jamba,anthony2024blackmamba, zhu2024vision}.
The experiments with selectivity portray a nuanced picture: complex parameterizations are beneficial for some tasks, whereas for others, selectivity allows real parameterizations to achieve comparable (and in some cases better) performance.
These findings align with mixed evidence reported in the literature (see \cref{sec:intro}).
Moreover, they suggest a potential extension of our theory that accounts for selectivity and may elucidate this evidence, thereby fully delineating the benefits of complex parameterizations for SSMs.

For conciseness, we defer some of the details behind our implementation to \cref{app:impl}.
Code for reproducing our experiments is available at \url{https://github.com/edenlum/SSMComplexParamBenefits}.

\subsection{Theoretically Analyzed Setting} \label{sec:exp:theory}

To empirically demonstrate our theoretical findings, we trained the analyzed real and complex SSMs (see \cref{sec:prelim:ssm}) to approximate up to time~$t$ the mapping~$\phi ( \cdot )$ (see \cref{sec:prelim:lti}), with $t$ varying and with the following choices for~$\phi ( \cdot )$:
the canonical copy (delay) mapping from \cref{result:r4delay};
the random (generic) mapping from \cref{result:r4random};
and 
the basic oscillatory mapping from \cref{result:r4osc}.
Throughout, the dimension of the real or complex SSM ($\rn$ or~$\cn$, respectively) was set to at least~$t$, which, by universality (\cref{sec:analysis:uni}), implies that the SSM can express a mapping that precisely matches $\phi ( \cdot )$ up to time~$t$.
Our theory (\cref{sec:analysis:learn}) establishes that despite this parity between the real and complex SSMs in terms of expressiveness, there is a strong separation between the SSMs in terms of practical learnability.
In particular, with the above choices of~$\phi ( \cdot )$, there are exponential barriers that apply only to the real SSM, and prevent its training from being able to yield a mapping that closely approximates $\phi ( \cdot )$ up to time~$t$.
\cref{tab:theory:real,tab:theory:complex} present results obtained with the real and complex SSMs, respectively.
They confirm the predictions of our theory.

\begin{table}[h!]
\centering
\caption{
In accordance with our theory, the analyzed real SSM (see \cref{sec:prelim:ssm}) cannot practically learn to closely approximate~$\phi ( \cdot )$ up to time~$t$ under important choices of~$\phi ( \cdot )$, even when $t$ is moderate.
This table reports the approximation error attained by the real SSM, \ie~the minimum $\epsilon$ with which a mapping learned by the real SSM $\epsilon$-approximates $\phi ( \cdot )$ up to time~$t$ (see \cref{sec:prelim:lti}), when $t = 32$ and $\phi ( \cdot )$ varies over the following possibilities:
the canonical copy (delay) mapping from \cref{result:r4delay};
the random (generic) mapping from \cref{result:r4random};
and 
the basic oscillatory mapping from \cref{result:r4osc}.
Learning was implemented by applying one of three possible gradient-based optimizers---Adam~\cite{kingma2017adam}, AdamW~\cite{loshchilov2019decoupledweightdecayregularization} or RAdam~\cite{liu2021varianceadaptivelearningrate}---to a loss as in our theory (see \cref{app:loss}).
For each choice of~$\phi ( \cdot )$, reported approximation errors are normalized (scaled) such that a value of one is attained by the trivial zero mapping.
Each configuration was evaluated with five random seeds, and its reported approximation error is the minimum (best) that was attained.
The dimension of the real SSM~($\rn$) was set to~$1024$; other choices of dimension led to qualitatively identical results.
For further implementation details see \cref{app:impl:theory}.
}
\vspace{1mm}
\begin{tabular}{cccc}
\toprule
\textbf{Optimizer} &  \textbf{Approx. of Copy} & \textbf{Approx. of Random} & \textbf{Approx. of Oscillatory} \\
\midrule
Adam & $0.767$ & $0.536$ & $0.812$ \\ 
AdamW & $0.772$ & $0.541$ & $0.809$ \\ 
RAdam & $0.767$ & $0.538$ & $0.811$ \\
\bottomrule
\end{tabular}
\label{tab:theory:real}
\end{table}

\begin{table}[h!]
\centering
\caption{
In contrast to the analyzed real SSM, and in alignment with our theory, the analyzed complex SSM (see \cref{sec:prelim:ssm}) can practically learn to closely approximate~$\phi ( \cdot )$ up to time~$t$ under important choices of~$\phi ( \cdot )$ and various choices of~$t$.
This table reports approximation errors attained by the complex SSM.
It adheres to the description of \cref{tab:theory:real}, with the following exceptions (all designed to stress the superiority of the complex SSM over the real SSM):
\emph{(i)}~only Adam optimizer was used;
\emph{(ii)}~in addition to~$32$, $t$~also took the values $64$, $128$ and~$256$;
\emph{(iii)}~for each configuration, the reported approximation error is the maximum (worst) that was achieved across the random seeds;
and
\emph{(iv)}~the dimension of the complex SSM~($\cn$) was set to~$t$ (higher dimensions led to qualitatively identical results).
For further implementation details see \cref{app:impl:theory}.
}
\vspace{1mm}
\begin{tabular}{cccc}
\toprule
\textbf{$t$} &  \textbf{Approx. of Copy} & \textbf{Approx. of Random} & \textbf{Approx. of Oscillatory} \\
\midrule 
$32$ & $1.6 \times 10^{-5}$ & $6.3 \times 10^{-5}$ & $1.6 \times 10^{-4}$ \\ 
$64$ & $1.7 \times 10^{-5}$ & $1.6 \times 10^{-5}$ & $3.7 \times 10^{-4}$ \\ 
$128$ & $4.9 \times 10^{-5}$ & $4.8 \times 10^{-5}$ & $2.6 \times 10^{-3}$ \\ 
$256$ & $6.8 \times 10^{-4}$ & $7.6 \times 10^{-4}$ & $1.7 \times 10^{-3}$ \\ 
\bottomrule
\end{tabular}
\label{tab:theory:complex}
\end{table}

\subsection{Real-World Setting} \label{sec:exp:world}

To empirically demonstrate the benefits of complex parameterizations for SSMs in settings beyond our theory, we evaluated the prominent S4 neural network architecture~\cite{gu2022efficiently} on the real-world sequential CIFAR-10 dataset from the widely recognized Long Range Arena benchmark~\cite{tay2020longrangearenabenchmark}.
Our implementation is based on the official S4 repository,\note{
\url{https://github.com/state-spaces/s4} (Apache-2.0 license).
\label{note:s4_repo}
}
where unless stated otherwise, hyperparameters (pertaining to the neural network architecture and its training) were kept at their default values.
A single run with complex parameterization yielded a test accuracy of~$89.10\%$, significantly higher than the highest test accuracy of~
$78.27\%$ attained with real parameterization across three random seeds.
Modifying the optimizer and initialization scheme with the real parameterization did not improve the test accuracy---see \cref{app:impl:world} for details.
The overarching conclusion from our theory---namely, that SSMs benefit from complex parameterizations---thus extends to this real-world setting.

\subsection{Selectivity} \label{sec:exp:select}

A new architectural feature for SSMs that yields state of the art performance is \emph{selectivity}~\cite{gu2023mamba,lieber2024jamba,anthony2024blackmamba,zhu2024vision}.
In its original form---proposed as part of the Mamba neural network architecture~\cite{gu2023mamba}---selectivity amounts to replacing the parameters $B$ and~$C$ (see \cref{sec:prelim:ssm}), as well as an additional \emph{discretization} parameter~$\Delta \in \R_{> 0}$,\footref{note:disc} by certain functions of the input~\smash{$( u ( t ) )_{t \in \N}$}.
We empirically study the impact of complex parameterizations on SSMs with selectivity by evaluating a Mamba neural network on two synthetic tasks regarded as canonical in the SSM literature~\cite{jelassi2024repeatmetransformersbetter,gu2023mamba}:
\emph{(i)}~\emph{copy}, which was shown by our theory (\cref{sec:analysis:learn}) and earlier experiments (\cref{sec:exp:theory}) to pose difficulties for real parameterizations in SSMs with no selectivity;
and
\emph{(ii)}~\emph{induction-head}, which can be seen as a generalization of copy in which the delay is input-specified (rather than being fixed).
Our implementation is based on a widely adopted Mamba repository easily amenable to modification.\footnote{%
\label{note:mamba_repo}
\url{https://github.com/alxndrTL/mamba.py} (MIT license).
}
Unless stated otherwise, repository hyperparameters (pertaining to the neural network architecture and its training) were kept at their default values.
Further details concerning our implementation, including detailed descriptions of the copy and induction-head tasks, can be found in \cref{app:impl:select}.

Our first experiment with the Mamba neural network compared real and complex parameterizations for the underlying SSMs, with selectivity included.
On the copy task, across three random seeds for each configuration, the highest accuracy attained with the real parameterization was~$80.17\%$, whereas the lowest accuracy attained with the complex parameterization was~$93.05\%$.
This gap in performance in favor of the complex parameterization aligns with our theoretical and empirical findings for SSMs without selectivity.
In stark contrast, on the induction-head task, there is no such gap (in fact, there is a slight advantage to the real parameterization): across three random seeds for each configuration, accuracies attained with the real parameterization ranged between $97.35\%$ and $98.3\%$, whereas those attained with the complex parameterization ranged between $93.93\%$ and $97.64\%$.
These results align with mixed evidence reported in the SSM literature, by which complex parameterizations are essential for strong performance on some tasks~\cite{gu2022parameterization, orvieto2023resurrecting, gu2022parameterization}, whereas on others, real parameterizations lead to comparable (and in some cases better) performance~\cite{ma2023mega, gu2023mamba}.

To gain insight into the induction-head task not benefiting from the complex parameterization, we conducted an ablation experiment with partial versions of selectivity (\ie, where not all of the parameters $B$, $C$ and~$\Delta$ were replaced by functions of the input).
The results of this experiment, reported in \cref{tab:select}, reveal that when selectivity is fully or partially removed (more precisely, when $B$, $C$ or both are not replaced by functions of the input), the complex parameterization regains its advantage.
This suggests that selectivity, which is not covered by our theory, may be the key factor enabling real parameterizations to perform as well as complex parameterizations for SSMs on certain tasks.
In other words, selectivity may be the dominant factor behind the aforementioned evidence in the SSM literature being mixed.
In \cref{sec:discuss} we outline a potential extension of our theory that accounts for selectivity and may elucidate this evidence, thereby fully delineating the benefits of complex parameterizations for SSMs.

\begin{table}[t]
\centering
\caption{%
Ablation experiment demonstrating that real parameterizations can compare (favorably) to complex parameterizations for SSMs with selectivity, but complex parameterizations become superior when selectivity is fully or partially removed.
This table reports test accuracies attained by a Mamba neural network~\cite{gu2023mamba} on a synthetic induction-head task regarded as canonical in the SSM literature~\cite{jelassi2024repeatmetransformersbetter,gu2023mamba}.
Evaluation included multiple configurations for the SSMs underlying the neural network.
Each configuration corresponds to either real or complex parameterization, and to a specific partial version of selectivity---\ie, to a specific combination of parameters that are selective (replaced by functions of the input), where the parameters that may be selective are:
the input matrix~$B$;
the output matrix~$C$;
and
a discretization parameter~$\Delta$.
For each configuration, the highest and lowest accuracies attained across three random seeds are reported.
Notice that when both $B$ and~$C$ are selective, the real parameterization compares (favorably) to the complex parameterization, whereas otherwise, the complex parameterization is superior.
For further implementation details see \cref{app:impl:select}.
}
\vspace{1mm}
\begin{tabular}{ccccc}
\toprule
\textbf{Selective $B$} & \textbf{Selective $C$} & \textbf{Selective $\Delta$} & \textbf{Real Accuracy (\%)} & \textbf{Complex Accuracy (\%)} \\
\midrule
Yes & Yes & Yes & $97.35$ to $98.3$  & $93.93$ to $97.64$ \\
Yes & Yes & No & $97.9$ to $98.62$  & $90.18$ to $95.86$ \\
Yes & No & Yes & $61.82$ to $71.28$  & $91.93$ to $96.77$ \\
Yes & No & No & $49.91$ to $52.5$  & $58.93$ to $73.77$ \\
No & Yes & Yes & $56.78$ to $69.54$  & $92.52$ to $96.91$ \\
No & Yes & No & $41.01$ to $43.89$  & $57.44$ to $64.67$ \\
No & No & Yes & $15.48$ to $26.33$  & $68.54$ to $79.71$ \\
No & No & No & $23.86$ to $29.62$  & $37.61$ to $50.19$ \\
\bottomrule
\end{tabular}
\label{tab:select}  
\end{table}

	\section{Related Work} \label{sec:related}

SSMs are closely related to linear dynamical systems---a classic object of study in areas such as systems theory~\cite{alligood1998chaos} and control theory~\cite{sontag2013mathematical}.
Although there exists extensive literature concerning properties of real and complex linear dynamical systems~\cite{casti1986linear,Antsaklis97,aoki2013state,brockett2015finite}, this literature does not readily establish benefits of complex parameterizations for SSMs, primarily due to the following reasons:
\emph{(i)}~the output of a complex SSM is turned real by disregarding imaginary components (see \cref{sec:prelim:ssm}), therefore it differs from a complex linear dynamical system;
and
\emph{(ii)}~the structures typically imposed on state transition matrices of SSMs (\eg~stable diagonality; see \cref{sec:prelim:ssm}) are generally uncommon in the literature on linear dynamical systems.

SSMs can be viewed as a special case of recurrent neural networks~\cite{Sherstinsky_2020}, which received significant theoretical attention~\cite{schafer2006recurrent,Nakamura2009ApproximationCO,pmlr-v120-hanson20a,chen2019generalizationboundsfamilyrecurrent,Tu2020UnderstandingGI}.
In this context, several works focused specifically on SSMs~\cite{orvieto2024universality, li2022approximation,hanson2019universal,ali2024hidden,kanai2024evaluatingtimeseriestrainingdataset,zucchet2024recurrentneuralnetworksvanishing,cirone2024theoreticalfoundationsdeepselective, liu2024generalizationanalysisoptimizationdesigns, wang2023statespacemodelslayerwisenonlinearity}.
However, to our knowledge, the only prior work to formally and explicitly treat benefits of complex parameterizations for SSMs is~\cite{orvieto2024universality}.
The treatment of~\cite{orvieto2024universality} (see Section~4.1 therein) can be viewed as a special case of ours.
Indeed,~\cite{orvieto2024universality}~considered a task that, using our notation (see \cref{sec:prelim:ssm}), amounts to linearly reconstructing an input element~$u ( t )$ from the state~$\xbf ( t' )$ of an SSM, where $t , t' \in \N , t' \geq t$.
This is equivalent to assigning the output matrix of the SSM~$C$ such that the mapping realized by the SSM $\phi_{n , ( A , B , C )} ( \cdot )$ is a canonical copy (delay) mapping.
Roughly speaking,~\cite{orvieto2024universality}~showed that this task requires linear operations with exponential parameters if the SSM is real, whereas linear operations with moderate parameters suffice if the SSM is complex.
The same result follows from our \cref{result:r4delay,result:c4general}.
We stress that our theory goes far beyond this result, for example in that it covers various mappings beyond copy, including a random (generic) mapping---see \cref{sec:analysis:learn} for details.

With regards to related empirical work, the literature includes several experimental comparisons between real and complex parameterizations for SSMs~\cite{gu2023mamba, gu2022parameterization, orvieto2024universality}.
Nonetheless, to our knowledge, the controlled experiments we conducted (see \cref{sec:exp}) are reported herein for the first time.

	\section{Limitations} \label{sec:limit}

While this paper offers meaningful contributions regarding benefits of complex parameterizations for SSMs, it is important to acknowledge several of its limitations.
First, while we establish a separation between real and complex parameterizations in terms of expressiveness (\cref{sec:analysis:express}), we do not quantify how prevalent this separation is, \ie, what proportion of the mappings expressible with complex parameterizations cannot be compactly approximated with real parameterizations.
Second, while we prove that real parameterizations suffer from exponentiality that impedes practical learning (\cref{sec:analysis:learn:real_exp,sec:analysis:learn:exp_no_learn}), and that complex parameterizations do not suffer from exponentiality (\cref{sec:analysis:learn:complex_no_exp}), we do not formally establish practical learnability with complex parameterizations---our evidence for this is purely empirical (\cref{sec:exp}).
Third, our theory does not treat all fundamental aspects of learning where real and complex parameterizations may differ, for example it does not treat implicit bias of gradient-based optimization~\cite{ssm_imp_bias}.
Fourth, our experiments (\cref{sec:exp}) include only a single real-world setting.
Finally, while we establish that a separation between real and complex parameterizations in terms of practical learnability takes place in three important cases (see \cref{result:r4delay,result:r4random,result:r4osc}), these cases are likely far from being exhaustive.
Indeed, \cref{result:r4general} provides a mild sufficient condition for separation in terms of practical learnability---namely, that certain forward differences are not especially small---and we believe it is possible to apply analytical tools (\eg, the N{\o}rlund-Rice integral~\cite{FLAJOLET1995101}) for translating this condition into interpretable properties satisfied in various important cases beyond those considered.
Pursuing the latter direction, and more broadly, addressing the aforementioned limitations, are regarded as important directions for future work.

	\section{Discussion} \label{sec:discuss}

The extent to which complex parameterizations benefit SSMs is an important open question in machine learning.
Evidence in the literature is mixed: while some works demonstrate that complex parameterizations are essential for strong performance, others show that in various settings, real parameterizations lead to comparable (and in some cases better) performance.
It was conjectured by~\citet{gu2023mamba} that complex parameterizations are preferable for continuous data modalities (\eg,~audio, video), whereas for discrete data modalities (\eg,~text, DNA) real parameterizations~suffice.

Since a complex SSM includes twice as many parameters as a real SSM of the same dimension, a priori, one might expect that a real SSM would benefit from becoming complex similarly to how it would benefit from doubling its dimension.
Our theory showed that this is not the case, and in fact the former benefit far exceeds the latter.
Indeed, we established separations between real and complex SSMs, by which a real SSM can only match a complex SSM if either the dimension of the real SSM or the number of iterations required for its training is exponentially large.
Experiments corroborated our theory, and suggested that selectivity---a new architectural feature yielding state of the art performance---may be the dominant factor behind the aforementioned evidence in the literature being mixed.

We now outline a potential extension of our theory that accounts for selectivity.
Roughly speaking, the separations we established between real and complex SSMs arise from a gap in their ability to express oscillations, \ie, to express frequency components in their impulse response: while a complex SSM can easily express any frequency, a real SSM struggles to do so.
Adding selectivity to a real SSM makes its parameters input-dependent, resulting in what can be viewed as an input-dependent impulse response. 
We hypothesize that this dependence allows importing frequency components from the input to the impulse response. 
If confirmed, this hypothesis would imply that when the input data is sufficiently rich in frequency content, selectivity can endow real SSMs with all the benefits we proved for complex SSMs.
Such an outcome aligns with the conjecture of~\citet{gu2023mamba}: continuous data modalities often consist of only low frequencies, whereas discrete data modalities typically have a ``whiter spectrum,'' \ie, a more uniform mix of frequencies~\cite{Vetterli_Kovacevic_Goyal_2014}.

We believe that extending our theory as described may elucidate the mixed evidence in the literature, thereby fully delineating the benefits of complex parameterizations for SSMs.

	\ifdefined\NEURIPS
		\begin{ack}
			We thank Itamar Zimerman for illuminating discussions. 
This work was supported the European Research Council (ERC) grants HOLI 819080 and NN4C 101164614, a Google Research Scholar Award, a Google Research Gift, Meta, the Yandex Initiative in Machine Learning, the Israel Science Foundation (ISF) grant 1780/21, the Tel Aviv University Center for AI and Data Science, the Adelis Research Fund for Artificial Intelligence, Len Blavatnik and the Blavatnik Family Foundation, and Amnon and Anat Shashua.
		\end{ack}
	\else
		\newcommand{\ack}{}
	\fi
	\ifdefined\ARXIV
		\section*{Acknowledgements}
		\ack
	\else
		\ifdefined\COLT
			\acks{\ack}
		\else
			\ifdefined\CAMREADY
				\ifdefined\ICLR
					\newcommand*{\subsuback}{}
				\fi
				\ifdefined\NEURIPS
				\else
					\section*{Acknowledgements}
					\ack
				\fi
			\fi
		\fi
	\fi
	
	\section*{References}
	{\small
		\ifdefined\ICML
			\bibliographystyle{icml2021}
		\else
			\bibliographystyle{plainnat}
		\fi
		\bibliography{refs}

\begin{thebibliography}{58}
\providecommand{\natexlab}[1]{#1}
\providecommand{\url}[1]{\texttt{#1}}
\expandafter\ifx\csname urlstyle\endcsname\relax
  \providecommand{\doi}[1]{doi: #1}\else
  \providecommand{\doi}{doi: \begingroup \urlstyle{rm}\Url}\fi

\bibitem[Abadi et~al.(2015)Abadi, Agarwal, Barham, Brevdo, Chen, Citro, Corrado, Davis, Dean, Devin, Ghemawat, Goodfellow, Harp, Irving, Isard, Jia, Jozefowicz, Kaiser, Kudlur, Levenberg, Man\'{e}, Monga, Moore, Murray, Olah, Schuster, Shlens, Steiner, Sutskever, Talwar, Tucker, Vanhoucke, Vasudevan, Vi\'{e}gas, Vinyals, Warden, Wattenberg, Wicke, Yu, and Zheng]{tensorflow2015-whitepaper}
Mart\'{i}n Abadi, Ashish Agarwal, Paul Barham, Eugene Brevdo, Zhifeng Chen, Craig Citro, Greg~S. Corrado, Andy Davis, Jeffrey Dean, Matthieu Devin, Sanjay Ghemawat, Ian Goodfellow, Andrew Harp, Geoffrey Irving, Michael Isard, Yangqing Jia, Rafal Jozefowicz, Lukasz Kaiser, Manjunath Kudlur, Josh Levenberg, Dandelion Man\'{e}, Rajat Monga, Sherry Moore, Derek Murray, Chris Olah, Mike Schuster, Jonathon Shlens, Benoit Steiner, Ilya Sutskever, Kunal Talwar, Paul Tucker, Vincent Vanhoucke, Vijay Vasudevan, Fernanda Vi\'{e}gas, Oriol Vinyals, Pete Warden, Martin Wattenberg, Martin Wicke, Yuan Yu, and Xiaoqiang Zheng.
\newblock {TensorFlow}: Large-scale machine learning on heterogeneous systems, 2015.
\newblock URL \url{https://www.tensorflow.org/}.
\newblock Software available from tensorflow.org.

\bibitem[Ali et~al.(2024)Ali, Zimerman, and Wolf]{ali2024hidden}
Ameen Ali, Itamar Zimerman, and Lior Wolf.
\newblock The hidden attention of mamba models, 2024.

\bibitem[Alligood et~al.(1998)Alligood, Sauer, Yorke, and Chillingworth]{alligood1998chaos}
Kathleen~T Alligood, Tim~D Sauer, James~A Yorke, and David Chillingworth.
\newblock Chaos: an introduction to dynamical systems.
\newblock \emph{SIAM Review}, 40\penalty0 (3):\penalty0 732--732, 1998.

\bibitem[Anthony et~al.(2024)Anthony, Tokpanov, Glorioso, and Millidge]{anthony2024blackmamba}
Quentin Anthony, Yury Tokpanov, Paolo Glorioso, and Beren Millidge.
\newblock Blackmamba: Mixture of experts for state-space models, 2024.

\bibitem[Antsaklis(1997)]{Antsaklis97}
Panos~J. Antsaklis.
\newblock \emph{Linear systems / Panos J. Antsaklis, Anthony N. Michel.}
\newblock McGraw-Hill,, New York, 1997.
\newblock ISBN 0070414335 (alk. paper).

\bibitem[Aoki(2013)]{aoki2013state}
Masanao Aoki.
\newblock \emph{State space modeling of time series}.
\newblock Springer Science \& Business Media, 2013.

\bibitem[Arora et~al.(2022)Arora, Li, and Panigrahi]{arora2022understanding}
Sanjeev Arora, Zhiyuan Li, and Abhishek Panigrahi.
\newblock Understanding gradient descent on edge of stability in deep learning, 2022.

\bibitem[Ben-Kish et~al.(2024)Ben-Kish, Zimerman, Abu-Hussein, Cohen, Globerson, Wolf, and Giryes]{benkish2024decimambaexploringlengthextrapolation}
Assaf Ben-Kish, Itamar Zimerman, Shady Abu-Hussein, Nadav Cohen, Amir Globerson, Lior Wolf, and Raja Giryes.
\newblock Decimamba: Exploring the length extrapolation potential of mamba, 2024.
\newblock URL \url{https://arxiv.org/abs/2406.14528}.

\bibitem[Brockett(2015)]{brockett2015finite}
Roger~W Brockett.
\newblock \emph{Finite dimensional linear systems}.
\newblock SIAM, 2015.

\bibitem[Casti(1986)]{casti1986linear}
John~L Casti.
\newblock \emph{Linear dynamical systems}.
\newblock Academic Press Professional, Inc., 1986.

\bibitem[Chen et~al.(2019)Chen, Li, and Zhao]{chen2019generalizationboundsfamilyrecurrent}
Minshuo Chen, Xingguo Li, and Tuo Zhao.
\newblock On generalization bounds of a family of recurrent neural networks, 2019.
\newblock URL \url{https://arxiv.org/abs/1910.12947}.

\bibitem[Cirone et~al.(2024)Cirone, Orvieto, Walker, Salvi, and Lyons]{cirone2024theoreticalfoundationsdeepselective}
Nicola~Muca Cirone, Antonio Orvieto, Benjamin Walker, Cristopher Salvi, and Terry Lyons.
\newblock Theoretical foundations of deep selective state-space models, 2024.
\newblock URL \url{https://arxiv.org/abs/2402.19047}.

\bibitem[Cohen et~al.(2021)Cohen, Kaur, Li, Kolter, and Talwalkar]{Cohen2021GradientDO}
Jeremy~M. Cohen, Simran Kaur, Yuanzhi Li, J.~Zico Kolter, and Ameet Talwalkar.
\newblock Gradient descent on neural networks typically occurs at the edge of stability.
\newblock \emph{ArXiv}, abs/2103.00065, 2021.
\newblock URL \url{https://api.semanticscholar.org/CorpusID:232076011}.

\bibitem[Cohen-Karlik et~al.(2023)Cohen-Karlik, Menuhin-Gruman, Giryes, Cohen, and Globerson]{cohen2023learning}
Edo Cohen-Karlik, Itamar Menuhin-Gruman, Raja Giryes, Nadav Cohen, and Amir Globerson.
\newblock Learning low dimensional state spaces with overparameterized recurrent neural nets.
\newblock \emph{International Conference on Learning Representations}, 2023.

\bibitem[Damian et~al.(2022)Damian, Nichani, and Lee]{Damian2022SelfStabilizationTI}
Alexandru Damian, Eshaan Nichani, and Jason~D. Lee.
\newblock Self-stabilization: The implicit bias of gradient descent at the edge of stability.
\newblock \emph{ArXiv}, abs/2209.15594, 2022.
\newblock URL \url{https://api.semanticscholar.org/CorpusID:252668622}.

\bibitem[Dao and Gu(2024)]{dao2024transformersssmsgeneralizedmodels}
Tri Dao and Albert Gu.
\newblock Transformers are ssms: Generalized models and efficient algorithms through structured state space duality, 2024.
\newblock URL \url{https://arxiv.org/abs/2405.21060}.

\bibitem[Flajolet and Sedgewick(1995)]{FLAJOLET1995101}
Philippe Flajolet and Robert Sedgewick.
\newblock Mellin transforms and asymptotics: Finite differences and rice's integrals.
\newblock \emph{Theoretical Computer Science}, 144\penalty0 (1):\penalty0 101--124, 1995.
\newblock ISSN 0304-3975.
\newblock \doi{https://doi.org/10.1016/0304-3975(94)00281-M}.
\newblock URL \url{https://www.sciencedirect.com/science/article/pii/030439759400281M}.

\bibitem[Fu et~al.(2022)Fu, Dao, Saab, Thomas, Rudra, and Re]{fu2022hungry}
Daniel~Y Fu, Tri Dao, Khaled~Kamal Saab, Armin~W Thomas, Atri Rudra, and Christopher Re.
\newblock Hungry hungry hippos: Towards language modeling with state space models.
\newblock In \emph{The Eleventh International Conference on Learning Representations}, 2022.

\bibitem[Goel et~al.(2022)Goel, Gu, Donahue, and Ré]{goel2022its}
Karan Goel, Albert Gu, Chris Donahue, and Christopher Ré.
\newblock It's raw! audio generation with state-space models, 2022.

\bibitem[Gu and Dao(2023)]{gu2023mamba}
Albert Gu and Tri Dao.
\newblock Mamba: Linear-time sequence modeling with selective state spaces, 2023.

\bibitem[Gu et~al.(2022{\natexlab{a}})Gu, Goel, and Ré]{gu2022efficiently}
Albert Gu, Karan Goel, and Christopher Ré.
\newblock Efficiently modeling long sequences with structured state spaces, 2022{\natexlab{a}}.

\bibitem[Gu et~al.(2022{\natexlab{b}})Gu, Gupta, Goel, and Ré]{gu2022parameterization}
Albert Gu, Ankit Gupta, Karan Goel, and Christopher Ré.
\newblock On the parameterization and initialization of diagonal state space models, 2022{\natexlab{b}}.

\bibitem[Gupta et~al.(2022)Gupta, Gu, and Berant]{gupta2022diagonal}
Ankit Gupta, Albert Gu, and Jonathan Berant.
\newblock Diagonal state spaces are as effective as structured state spaces, 2022.

\bibitem[Hanson and Raginsky(2019)]{hanson2019universal}
Joshua Hanson and Maxim Raginsky.
\newblock Universal approximation of input-output maps by temporal convolutional nets, 2019.

\bibitem[Hanson and Raginsky(2020)]{pmlr-v120-hanson20a}
Joshua Hanson and Maxim Raginsky.
\newblock Universal simulation of stable dynamical systems by recurrent neural nets.
\newblock In Alexandre~M. Bayen, Ali Jadbabaie, George Pappas, Pablo~A. Parrilo, Benjamin Recht, Claire Tomlin, and Melanie Zeilinger, editors, \emph{Proceedings of the 2nd Conference on Learning for Dynamics and Control}, volume 120 of \emph{Proceedings of Machine Learning Research}, pages 384--392. PMLR, 10--11 Jun 2020.
\newblock URL \url{https://proceedings.mlr.press/v120/hanson20a.html}.

\bibitem[IEEE(2019)]{ieee19}
IEEE.
\newblock 754-2019 - ieee standard for floating-point arithmetic, July 2019.

\bibitem[Jelassi et~al.(2024)Jelassi, Brandfonbrener, Kakade, and Malach]{jelassi2024repeatmetransformersbetter}
Samy Jelassi, David Brandfonbrener, Sham~M. Kakade, and Eran Malach.
\newblock Repeat after me: Transformers are better than state space models at copying, 2024.
\newblock URL \url{https://arxiv.org/abs/2402.01032}.

\bibitem[Kanai et~al.(2024)Kanai, Ida, Adachi, Uchida, Yoshida, and Yamaguchi]{kanai2024evaluatingtimeseriestrainingdataset}
Sekitoshi Kanai, Yasutoshi Ida, Kazuki Adachi, Mihiro Uchida, Tsukasa Yoshida, and Shin'ya Yamaguchi.
\newblock Evaluating time-series training dataset through lens of spectrum in deep state space models, 2024.
\newblock URL \url{https://arxiv.org/abs/2408.16261}.

\bibitem[Kingma and Ba(2017)]{kingma2017adam}
Diederik~P. Kingma and Jimmy Ba.
\newblock Adam: A method for stochastic optimization, 2017.

\bibitem[Li et~al.(2022)Li, Han, E, and Li]{li2022approximation}
Zhong Li, Jiequn Han, Weinan E, and Qianxiao Li.
\newblock Approximation and optimization theory for linear continuous-time recurrent neural networks.
\newblock \emph{Journal of Machine Learning Research}, 23\penalty0 (42):\penalty0 1--85, 2022.
\newblock URL \url{http://jmlr.org/papers/v23/21-0368.html}.

\bibitem[Lieber et~al.(2024)Lieber, Lenz, Bata, Cohen, Osin, Dalmedigos, Safahi, Meirom, Belinkov, Shalev-Shwartz, Abend, Alon, Asida, Bergman, Glozman, Gokhman, Manevich, Ratner, Rozen, Shwartz, Zusman, and Shoham]{lieber2024jamba}
Opher Lieber, Barak Lenz, Hofit Bata, Gal Cohen, Jhonathan Osin, Itay Dalmedigos, Erez Safahi, Shaked Meirom, Yonatan Belinkov, Shai Shalev-Shwartz, Omri Abend, Raz Alon, Tomer Asida, Amir Bergman, Roman Glozman, Michael Gokhman, Avashalom Manevich, Nir Ratner, Noam Rozen, Erez Shwartz, Mor Zusman, and Yoav Shoham.
\newblock Jamba: A hybrid transformer-mamba language model, 2024.

\bibitem[Liu and Li(2024)]{liu2024generalizationanalysisoptimizationdesigns}
Fusheng Liu and Qianxiao Li.
\newblock From generalization analysis to optimization designs for state space models, 2024.
\newblock URL \url{https://arxiv.org/abs/2405.02670}.

\bibitem[Liu et~al.(2021)Liu, Jiang, He, Chen, Liu, Gao, and Han]{liu2021varianceadaptivelearningrate}
Liyuan Liu, Haoming Jiang, Pengcheng He, Weizhu Chen, Xiaodong Liu, Jianfeng Gao, and Jiawei Han.
\newblock On the variance of the adaptive learning rate and beyond, 2021.
\newblock URL \url{https://arxiv.org/abs/1908.03265}.

\bibitem[Liu et~al.(2024)Liu, Tian, Zhao, Yu, Xie, Wang, Ye, and Liu]{liu2024vmamba}
Yue Liu, Yunjie Tian, Yuzhong Zhao, Hongtian Yu, Lingxi Xie, Yaowei Wang, Qixiang Ye, and Yunfan Liu.
\newblock Vmamba: Visual state space model, 2024.

\bibitem[Loshchilov and Hutter(2017)]{loshchilov2017sgdrstochasticgradientdescent}
Ilya Loshchilov and Frank Hutter.
\newblock Sgdr: Stochastic gradient descent with warm restarts, 2017.
\newblock URL \url{https://arxiv.org/abs/1608.03983}.

\bibitem[Loshchilov and Hutter(2019)]{loshchilov2019decoupledweightdecayregularization}
Ilya Loshchilov and Frank Hutter.
\newblock Decoupled weight decay regularization, 2019.
\newblock URL \url{https://arxiv.org/abs/1711.05101}.

\bibitem[Ma et~al.(2023)Ma, Zhou, Kong, He, Gui, Neubig, May, and Zettlemoyer]{ma2023mega}
Xuezhe Ma, Chunting Zhou, Xiang Kong, Junxian He, Liangke Gui, Graham Neubig, Jonathan May, and Luke Zettlemoyer.
\newblock Mega: Moving average equipped gated attention, 2023.

\bibitem[Ma et~al.(2024)Ma, Yang, Xiong, Chen, Yu, Zhang, May, Zettlemoyer, Levy, and Zhou]{ma2024megalodonefficientllmpretraining}
Xuezhe Ma, Xiaomeng Yang, Wenhan Xiong, Beidi Chen, Lili Yu, Hao Zhang, Jonathan May, Luke Zettlemoyer, Omer Levy, and Chunting Zhou.
\newblock Megalodon: Efficient llm pretraining and inference with unlimited context length, 2024.
\newblock URL \url{https://arxiv.org/abs/2404.08801}.

\bibitem[Nakamura and Nakagawa(2009)]{Nakamura2009ApproximationCO}
Yuichi Nakamura and Masahiro Nakagawa.
\newblock Approximation capability of continuous time recurrent neural networks for non-autonomous dynamical systems.
\newblock In \emph{International Conference on Artificial Neural Networks}, 2009.
\newblock URL \url{https://api.semanticscholar.org/CorpusID:36553223}.

\bibitem[Oppenheim and with S.~Hamid~Nawab(1997)]{Oppenheim1997Signals}
Alan~V. Oppenheim and Alan S.~Willsky with S.~Hamid~Nawab.
\newblock \emph{Signals and Systems}.
\newblock Prentice Hall, Upper Saddle River, NJ, 2 edition, 1997.
\newblock ISBN 978-0138147570.

\bibitem[Orvieto et~al.(2023)Orvieto, Smith, Gu, Fernando, Gulcehre, Pascanu, and De]{orvieto2023resurrecting}
Antonio Orvieto, Samuel~L Smith, Albert Gu, Anushan Fernando, Caglar Gulcehre, Razvan Pascanu, and Soham De.
\newblock Resurrecting recurrent neural networks for long sequences, 2023.

\bibitem[Orvieto et~al.(2024)Orvieto, De, Gulcehre, Pascanu, and Smith]{orvieto2024universality}
Antonio Orvieto, Soham De, Caglar Gulcehre, Razvan Pascanu, and Samuel~L. Smith.
\newblock Universality of linear recurrences followed by non-linear projections: Finite-width guarantees and benefits of complex eigenvalues, 2024.

\bibitem[{PyTorch Contributors}(2023)]{pytorch}
{PyTorch Contributors}.
\newblock Pytorch documentation, 2023.
\newblock URL \url{https://pytorch.org/docs/stable/}.
\newblock Accessed: 2024-09-22.

\bibitem[Ran-Milo et~al.(2024)Ran-Milo, Lumbroso, Cohen-Karlik, Giryes, Globerson, and Cohen]{neurips2024provable}
Yuval Ran-Milo, Eden Lumbroso, Edo Cohen-Karlik, Raja Giryes, Amir Globerson, and Nadav Cohen.
\newblock Provable benefits of complex parameterizations for structured state space models.
\newblock In \emph{The Thirty-eighth Annual Conference on Neural Information Processing Systems}, 2024.

\bibitem[Robbins and Monro(1951)]{robbins1951stochastic}
Herbert Robbins and Sutton Monro.
\newblock A stochastic approximation method.
\newblock \emph{The annals of mathematical statistics}, pages 400--407, 1951.

\bibitem[Sch{\"a}fer and Zimmermann(2006)]{schafer2006recurrent}
Anton~Maximilian Sch{\"a}fer and Hans~Georg Zimmermann.
\newblock Recurrent neural networks are universal approximators.
\newblock In Stefanos~D. Kollias, Andreas Stafylopatis, W{\l}odzis{\l}aw Duch, and Erkki Oja, editors, \emph{Artificial Neural Networks -- ICANN 2006}, pages 632--640, Berlin, Heidelberg, 2006. Springer Berlin Heidelberg.
\newblock ISBN 978-3-540-38627-8.

\bibitem[Schiff et~al.(2024)Schiff, Kao, Gokaslan, Dao, Gu, and Kuleshov]{schiff2024caduceusbidirectionalequivariantlongrange}
Yair Schiff, Chia-Hsiang Kao, Aaron Gokaslan, Tri Dao, Albert Gu, and Volodymyr Kuleshov.
\newblock Caduceus: Bi-directional equivariant long-range dna sequence modeling, 2024.
\newblock URL \url{https://arxiv.org/abs/2403.03234}.

\bibitem[Sherstinsky(2020)]{Sherstinsky_2020}
Alex Sherstinsky.
\newblock Fundamentals of recurrent neural network (rnn) and long short-term memory (lstm) network.
\newblock \emph{Physica D: Nonlinear Phenomena}, 404:\penalty0 132306, March 2020.
\newblock ISSN 0167-2789.
\newblock \doi{10.1016/j.physd.2019.132306}.
\newblock URL \url{http://dx.doi.org/10.1016/j.physd.2019.132306}.

\bibitem[Slutzky et~al.(2024)Slutzky, Alexander, Razin, and Cohen]{ssm_imp_bias}
Yonatan Slutzky, Yotam Alexander, Noam Razin, and Nadav Cohen.
\newblock The implicit bias of structured state space models can be poisoned with clean labels.
\newblock \emph{arxiv preprint}, 2024.

\bibitem[Smith et~al.(2023)Smith, Warrington, and Linderman]{smith2023simplified}
Jimmy T.~H. Smith, Andrew Warrington, and Scott~W. Linderman.
\newblock Simplified state space layers for sequence modeling, 2023.

\bibitem[Sontag(2013)]{sontag2013mathematical}
Eduardo~D Sontag.
\newblock \emph{Mathematical control theory: deterministic finite dimensional systems}, volume~6.
\newblock Springer Science \& Business Media, 2013.

\bibitem[Tay et~al.(2020)Tay, Dehghani, Abnar, Shen, Bahri, Pham, Rao, Yang, Ruder, and Metzler]{tay2020longrangearenabenchmark}
Yi~Tay, Mostafa Dehghani, Samira Abnar, Yikang Shen, Dara Bahri, Philip Pham, Jinfeng Rao, Liu Yang, Sebastian Ruder, and Donald Metzler.
\newblock Long range arena: A benchmark for efficient transformers, 2020.
\newblock URL \url{https://arxiv.org/abs/2011.04006}.

\bibitem[Tu et~al.(2020)Tu, He, and Tao]{Tu2020UnderstandingGI}
Zhuozhuo Tu, Fengxiang He, and Dacheng Tao.
\newblock Understanding generalization in recurrent neural networks.
\newblock In \emph{International Conference on Learning Representations}, 2020.
\newblock URL \url{https://api.semanticscholar.org/CorpusID:214346647}.

\bibitem[Vetterli et~al.(2014)Vetterli, Kovačević, and Goyal]{Vetterli_Kovacevic_Goyal_2014}
Martin Vetterli, Jelena Kovačević, and Vivek~K Goyal.
\newblock \emph{Foundations of Signal Processing}.
\newblock Cambridge University Press, 2014.

\bibitem[Wang and Xue(2023)]{wang2023statespacemodelslayerwisenonlinearity}
Shida Wang and Beichen Xue.
\newblock State-space models with layer-wise nonlinearity are universal approximators with exponential decaying memory, 2023.
\newblock URL \url{https://arxiv.org/abs/2309.13414}.

\bibitem[Widrow and Kollár(2008)]{Widrow_Kollár_2008}
Bernard Widrow and István Kollár.
\newblock \emph{Quantization Noise: Roundoff Error in Digital Computation, Signal Processing, Control, and Communications}.
\newblock Cambridge University Press, 2008.

\bibitem[Zhu et~al.(2024)Zhu, Liao, Zhang, Wang, Liu, and Wang]{zhu2024vision}
Lianghui Zhu, Bencheng Liao, Qian Zhang, Xinlong Wang, Wenyu Liu, and Xinggang Wang.
\newblock Vision mamba: Efficient visual representation learning with bidirectional state space model, 2024.

\bibitem[Zucchet and Orvieto(2024)]{zucchet2024recurrentneuralnetworksvanishing}
Nicolas Zucchet and Antonio Orvieto.
\newblock Recurrent neural networks: vanishing and exploding gradients are not the end of the story, 2024.
\newblock URL \url{https://arxiv.org/abs/2405.21064}.

\end{thebibliography}
	}
	
	\clearpage
	\appendix
	
	\section{Extensions of Theoretical Analysis} \label{app:exten}

\subsection{Discretization} \label{app:exten:disc}

Various SSMs incorporate \emph{discretization}~\cite{gu2023mamba, gu2022efficiently,gupta2022diagonal,dao2024transformersssmsgeneralizedmodels}, which involves replacing parameter matrices with specific transformations that depend on an additional parameter \(\Delta \in \mathbb{R}_{> 0}\). With slight modifications, our main theoretical results, \cref{result:r4c}  and \cref{result:r4general}, apply directly to SSMs with common discretizations. Below, we exemplify this for one of the most common discretizations. Other discretizations can be treated similarly.

One of the most common discretizations is the bilinear discretization~\cite{gu2022efficiently,gu2022parameterization}, in which an SSM \( (\widebar{\kA},\widebar{\kB},\widebar{\kC}) \) is parameterized in the following way: 
\begin{align*}
    \widebar{\kA} &= \left( I - \Delta/2 \cdot \kA \right)^{-1} \left( I + \Delta/2 \cdot \kA \right), \quad
    \widebar{\kB} = \left( I - \Delta/2 \cdot \kA \right)^{-1} \Delta \kB , \quad \widebar{\kC}=\kC \text{\,,}
\end{align*}
where \(\Delta \in \mathbb{R}_{>0}\), the matrices \(\kA, \kB\) and \(\kC\) share the same dimensions and are defined over the same fields as \(\widebar{\kA}, \widebar{\kB}\) and \(\widebar{\kC}\), respectively, and the real parts of the diagonal entries of \(\kA\) are negative. Since the set of functions expressible by a real or complex SSM remains unchanged under this discretization, \cref{result:r4c} holds without modification. 
Furthermore, \cref{result:r4general} applies to this discretization if the left-hand side of \cref{eq:r4general_lb} is replaced by \( \rn \Delta \| {\rC}^\top \hada {\rB} \|_\infty \). This follows directly from \cref{lemma:bilinear-is-bounded}, which shows that for any \( i \in [\rn] \), we have \( | \Delta \left ( \rB \right )_i | \geq | \left ( \widebar{\rB} \right )_i | \). The latter inequality justifies replacing \( \widebar{\rB} \) with \( \Delta \rB \)  on the left-hand side of \cref{eq:r4general_lb}, leading to the desired result.

\subsection{Feedthrough} \label{app:exten:feed}

Some SSMs include an additional \emph{feedthrough} parameter $D \in \K^{1 , 1}$~\cite{orvieto2023resurrecting}. 
With feedthrough, the expression for~$y ( t )$ in \cref{eq:ssm} becomes $\Re ( C \xbf ( t ) + D u ( t ) )$.
Our theory essentially applies as is to SSMs with feedthrough, as explained below.

Obviously, an SSM of dimension \(n\) without a feedthrough parameter can be emulated by an SSM of dimension \(n\) with a feedthrough parameter \(D\) by setting \(D=0\). Conversely, an SSM \((A, B, C)\) of dimension \(n\) with a feedthrough parameter $D$ can be emulated by an SSM \((A', B', C')\) of dimension \(n+1\) without a feedthrough parameter. This is achieved, for example, by setting the diagonal entries of \( A' \) to be those of \( A \) followed by zero, the entries of \( B' \) to be those of \( B \) followed by one, and the entries of  \( C' \) to be those of \( C \) followed by \(D\). For any result where an SSM without feedthrough is assumed, a corresponding result for an SSM with feedthrough holds. Such a result can be attained by either emulating an SSM with feedthrough by an SSM without (while increasing dimension from $n$ to $n+1$) or by emulating an SSM without feedthrough by an SSM with feedthrough, (while maintaining the dimension) depending on the context.    

\subsection{Complex \texorpdfstring{\(A\)}{A} with Real \texorpdfstring{\(B\)}{B} and \texorpdfstring{\(C\)}{C}} \label{app:exten:hybrid}

It is possible to consider hybrid SSMs where \( A \) is allowed to be complex while \( B \) and \( C \) are restricted to be real.  Below we show that such hybrid SSMs enjoy all provable benefits of complex SSMs (\ie, of SSMs where \(A, B\) and \( C \) can all be complex).

There are four results concerning complex SSMs: \cref{result:uni}, \cref{result:c4r}, \cref{result:r4c} and \cref{result:c4general}. The first three results can be trivially extended to the setting where only \( A \) is complex, as their proofs do not assume complex \(B\) or  \( C \). An extension of \cref{result:c4general} is also straightforward. Indeed, the proof of \cref{result:c4general} utilizes the fact that for a complex SSM \((\cA,\cB,\cC)\) of dimension \( \cn \), the impulse response can be any linear combination of \( \cn \) arbitrary decaying sine and cosine waves, which, by Fourier theory, can approximate any sequence of length \( \cn \). If only \( \cA \) is allowed to be complex, the impulse  response can consist of any linear combination of \( \cn \) arbitrary decaying cosine waves, which can still represent any sequence of length \( \cn \) via the discrete cosine transform. Thus, the setting remains fundamentally similar, allowing for the extension of \cref{result:c4general} to hybrid SSMs.

	\section{Loss Functions} \label{app:loss}

Consider a loss function as in our theory (see \cref{eq:loss}), applied to both real and complex SSMs:
\[
    \ell(\kA,\kB,\kC) = \mathbb{E}_{\Ubf \in \R^\N , \, \Ubf_1 , \Ubf_2 , \ldots \, \stackrel{\text{i.i.d.}}{\sim} \, \Ncal ( 0 , 1 )} \left [ \left ( \phi_{\kn,(\kA,\kB,\kC)} (\Ubf)_t-\phi(\Ubf)_{t} \right ) ^2  \right ] \text{\,,}
\]
where \( \K=\R \) or \( \K=\C \). Below we prove that:
\[
    \ell(\kA,\kB,\kC) = \left\| \phi_{\kn, (\kA, \kB, \kC)}(\mathbf{I})_{:t} - \phi(\mathbf{I})_{:t} \right\|_2^2.  
\]

Indeed, as discussed in \cref{sec:prelim:lti}, for any \( \mathbf{U} \in \R^\N \):
\[
    \phi_{\kn, (\kA, \kB, \kC)}(\mathbf{U})_t = \left (\mathbf{U} * \phi_{\kn, (\kA, \kB, \kC)}(\mathbf{I}) \right )_t, \quad 
    \phi(\mathbf{U})_t = \left (\mathbf{U} * \phi(\mathbf{I}) \right )_t.
\]
Thus:
\[
    \phi_{\kn, (\kA, \kB, \kC)}(\mathbf{U})_t - \phi(\mathbf{U})_t
    = \sum_{j=1}^{t} {\mathbf{U}}_{j} \left( \phi_{\kn, (\kA, \kB, \kC)}(\mathbf{I})_{t-j+1} - \phi(\mathbf{I})_{t-j+1} \right) .
\]
It holds that:
\begin{align*}
    \ell(\rA,\rB,\rC)
    &= \mathbb{E}_{\Ubf \in \R^\N , \, \Ubf_1 , \Ubf_2 , \ldots \, \stackrel{\text{i.i.d.}}{\sim} \, \Ncal ( 0 , 1 )} \left [ \left ( \sum_{j=1}^{t} {\mathbf{U}}_{j} \left( \phi_{\kn, (\kA, \kB, \kC)}(\mathbf{I})_{t-j+1} - \phi(\mathbf{I})_{t-j+1} \right) \right ) ^ 2 \right ] \\
    &= \sum_{j=1}^{t} \left( \phi_{\kn, (\kA, \kB, \kC)}(\mathbf{I})_{t-j+1} - \phi(\mathbf{I})_{t-j+1}\right)^2 \mathbb{E}_{\mathbf{U}_j{\sim} \Ncal ( 0 , 1 )}\left[ \mathbf{U}_{j}^2 \right] \\
    &= \left\| \phi_{\kn, (\kA, \kB, \kC)}(\mathbf{I})_{:t} - \phi(\mathbf{I})_{:t} \right\|_2^2\text{\,,}
\end{align*}
as required.

	\section{Deferred Proofs} \label{app:proof}

\subsection{Proof of\texorpdfstring{~\cref{result:uni}}{Proof of Proposition 1}\label{app:proof:uni}} 

Beginning with the real SSM ($\K = \R$), assume that $\rn \geq t$.
We would like to show that there exist assignments for $( \rA , \rB , \rC )$ with which:
\[
\phi ( \Ibf )_{: t} = ( \rC \rB , \rC \rA \rB , \rC \rA^2 \rB , \ldots , \rC \rA^{t - 1} \rB )
\text{\,,}
\]
where $\phi ( \Ibf )_{: t}$~stands for the first $t$ elements of the impulse response of~$\phi ( \cdot )$.
Regarding $\phi ( \Ibf )_{: t}$, $\rB$ and~$\rC^\top$ as (column) vectors, we may write the sought after equality as $V ( \rA ) ( \rC^\top \hada \rB ) = \phi ( \Ibf )_{: t}$, where $V ( \rA ) \in \R^{t , \rn}$ is a matrix whose $( i , j)$th entry holds the $j$th diagonal entry of~$\rA$ exponentiated by~$i - 1$.
Notice that $V ( \rA )$ is a Vandermonde matrix comprising powers of the diagonal entries of~$\rA$.
Any assignment for~$\rA$ with distinct diagonal entries leads the Vandermonde matrix to have full rank, which in turn means that there exists a vector $\vbf \in \R^{\rn}$ satisfying $V ( \rA ) \vbf = \phi ( \Ibf )_{: t}$.
Assigning $\rB = \vbf$ and $\rC^\top = \1$ concludes the proof for the real SSM.
The complex SSM ($\K = \C$) can be treated analogously, while assigning real values to $\cA$, $\cB$ and~$\cC$ in order to allow disregarding the fact that only real parts of outputs are taken (\ie, disregarding the operator~$\Re ( \cdot )$~in~\cref{eq:ssm_ir}).
\qed

\subsection{Proof of\texorpdfstring{~\cref{result:r4c}}{Proof of Theorem 1}\label{proof:result:r4c}}

To facilitate the proof of the theorem, we first outline essential definitions and lemmas to quanitify a lower bound for the number of times the impulse response alternates between being greater than or equal to $1/4$, and being smaller than or equal to $-1/4$. Building on this groundwork, we then unpack the main theorem’s proof in~\cref{sec:proof:proof-infinite-approximation-result}.

\subsubsection{Setup for the proof~\label{sec:proof:prep-infinite-approximation-result}}

\begin{definition}
    For an angle \(  \theta  \) in \( \mathbb{R} \) we will say that \( \theta \) is in the \textbf{mostly-positive region} if 
     \[ \theta \bmod{2\pi} \in \left[0,\frac{\pi}{3}\right] \cup \left[\frac{5\pi}{3},2\pi\right]  \]
    and we say that \( \theta \) is in the \textbf{mostly-negative region} if 
     \[ \theta  \bmod{2 \pi} \in \ \left[\frac{2\pi}{3},\frac{4\pi}{3}\right]  \]
\end{definition}
\begin{remark}\label{remark:mostly_positive_abs_bound}
    Under this definition, a non-zero complex number \( x \) has that its argument is in the mostly-positive region if and only if \( \mathfrak{R}(x) \geq \frac{|x|}{2} \). Conversely, \( x \) has that its argument is in the mostly-negative region if and only if \( \mathfrak{R}(x) \leq -\frac{|x|}{2} \).
\end{remark}

\begin{definition}
    We say that a set is \textbf{balanced} if at least a sixth of its elements are in the mostly-positive region and at least a sixth of its elements are in the mostly-negative region.
\end{definition}

\begin{definition}
    The \textbf{balancing number} of a complex number \( x \) with argument \( \theta \) is defined as the minimum \( p \in\mathbb{N} \) such that for any non-zero real number \( b \) the set \( \{ b + \theta, b + 2\theta,\dots,b+p\theta \} \) is balanced. If no such \( p \) exists, we say that the balancing number of \( x \) is infinity. We denote the balancing number of \( x \) by \( \beta(x) \).
\end{definition}

We will begin by establishing an upper bound on \( \beta(x) \) for \( x \) in a specific region in the complex plane. Subsequently, we generalize this result to give an upper bound of the balancing number for almost all complex numbers.

\begin{lemma}\label{lemma:balancing-number-upper-bound}
    For any natural number \( p \geq 4 \), if a non-zero complex number \( x \) has an argument \( \theta \) such that \( \theta \) is in \( [\frac{2\pi}{p}, \frac{2\pi}{3}]\), then \( \beta(x) \leq p\). Moreover, for any natural number \( p \geq 9 \), if \( x \) has an argument \( \theta \) such that \( \theta \) is in \( [\pi + \frac{2\pi}{p-1}, \frac{4\pi}{3}] \) then \( \beta(x) \leq p\).
\end{lemma}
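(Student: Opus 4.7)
My plan is to prove the first part by lower-bounding the number \(N_A\) of elements of \(\{b + k\theta : k \in [p]\}\) that lie, modulo \(2\pi\), in the mostly-positive arc \(A := [-\pi/3, \pi/3]\). By the symmetry \(N_C(b) = N_A(b + \pi)\), the same lower bound will transfer to the mostly-negative arc \(C := [2\pi/3, 4\pi/3]\). The second part will then follow from the first via a subsequence decomposition.

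For the first part, two features of the hypothesis \(\theta \in [2\pi/p, 2\pi/3]\) are pivotal. Since \(\theta\) is at most the width \(2\pi/3\) of \(A\), the walk cannot hop over \(A\) in a single step; and since \(p\theta \geq 2\pi\), the walk performs at least \(r := \lfloor p\theta/(2\pi) \rfloor \geq 1\) complete revolutions. Lifting the walk to the real line, each lifted copy \([2\pi n - \pi/3, 2\pi n + \pi/3]\) of \(A\) that lies fully inside the walk's range captures at least \(\lfloor (2\pi/3)/\theta \rfloor\) samples; summing over such copies already suffices in the ``deep-interior'' regime where the product \(r \cdot \lfloor (2\pi/3)/\theta \rfloor\) meets the target \(\lceil p/6 \rceil\). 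The remaining ``transitional'' regime---most notably when \(r = 1\) and the walk is just short of two revolutions---is handled via the three-distance theorem: the \(p\) samples partition the circle into at most three distinct arc-lengths, forcing them to be approximately equally spaced, so that \(N_A \geq \lfloor p/3 \rfloor \geq \lceil p/6 \rceil\). For the very small cases \(p \leq 6\), the target \(\lceil p/6 \rceil = 1\) follows immediately from the no-skip property. This transitional case, bridged via the three-distance theorem (or an equivalent Ostrowski-type residue argument), is the main technical hurdle I anticipate.

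For the second part, when \(\theta \in [\pi + 2\pi/(p-1), 4\pi/3]\) the doubled step \(\theta' := 2\theta \bmod 2\pi\) lies in \([4\pi/(p-1), 2\pi/3]\). Splitting the original walk into its even- and odd-indexed subsequences yields two walks with common step \(\theta'\), of lengths \(\lfloor p/2 \rfloor\) and \(\lceil p/2 \rceil\). The hypothesis \(p \geq 9\) ensures both that each subsequence has length at least \(4\), and that \(\theta' \geq 2\pi/\lfloor p/2 \rfloor\) (the latter via \(2\lfloor p/2 \rfloor \geq p - 1\)), so the first part of the lemma applies to each subsequence. Each then contributes at least \(\lceil \lfloor p/2 \rfloor / 6 \rceil\) samples to each of \(A\) and \(C\), summing to \(\geq \lceil p/6 \rceil\) as required. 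A minor subtlety is that one of the subsequences could, for isolated choices of \(b\), have effective starting point zero; this is resolved by noting that the counting argument of the first part actually holds uniformly in \(b\), with the \(b \neq 0\) constraint in the definition of \(\beta\) playing no role in the proof.
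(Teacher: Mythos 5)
There is a genuine gap, and it lies at the very center of your plan. You set out to show that the \emph{full} $p$-element set $\{b+\theta,\dots,b+p\theta\}$ has at least $\lceil p/6\rceil$ elements in each of the mostly-positive and mostly-negative arcs. That statement is false under the hypotheses of the first part. Take $p=7$ and $\theta = 0.26\cdot 2\pi \in [2\pi/7,\,2\pi/3]$ with, say, $b=0.001\cdot 2\pi$: the orbit (in units of full turns) is $0.261, 0.521, 0.781, 0.041, 0.301, 0.561, 0.821$, and only the single point $0.041$ lands in the mostly-positive arc $[0,1/6]\cup[5/6,1]$, i.e.\ $1/7 < 1/6$ of the elements. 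The lemma is nonetheless true because $\beta(x)$ is defined as a \emph{minimum}: it suffices to exhibit \emph{some} $k\le p$ for which the $k$-element set is balanced for every $b$. The paper's proof takes $k$ minimal with $k\theta > 2\pi$ (so $4\le k\le p$ and $\theta\le 2\pi/(k-1)$), shows via \cref{lemma:coverage-of-equidistant-modulue} that every arc of length $\theta$ contains one of these $k$ points, hence every arc of length $2\pi/3$ contains at least $\lfloor(k-1)/3\rfloor \ge k/6$ of them, and concludes $\beta(x)\le k\le p$. Your target count $\lceil p/6\rceil$ out of $p$ cannot be recovered by your block/lifting argument precisely because a trailing partial revolution can contribute nothing to one of the arcs.

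Two further points. First, your ``deep-interior'' count fails in the main regime $\theta\approx 2\pi/p$: the lifted range has length $(p-1)\theta < 2\pi$, and for adversarial $b$ it need not contain even one complete lifted copy of the arc, so the product bound gives zero; and the fallback via the three-distance theorem is not carried out---that theorem bounds the number of \emph{distinct} gap lengths at three, but does not make the points ``approximately equally spaced,'' so $N_A\ge\lfloor p/3\rfloor$ does not follow from it. The only fact actually needed is the elementary covering property above (max gap $\le\theta$ once the walk completes a revolution), which handles all regimes uniformly with no case split. Second, your treatment of the second part (even/odd subsequences with doubled step $2\theta\bmod 2\pi$, each of length at least $\lfloor p/2\rfloor\ge 4$ and step at least $2\pi/\lfloor p/2\rfloor$) is exactly the paper's reduction and is sound in outline, but as written it inherits the false full-set counting from part one; once part one is restated at the scale $k\le\lfloor p/2\rfloor$ of one revolution, the union of the two balanced $k$-element subsets gives a balanced $2k$-element set and $\beta(x)\le 2k\le p$, as in the paper.
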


\begin{proof}
    Since the mostly-positive and mostly-negative regions are defined by looking at the angle of a complex number mod \( 2 \pi \), to prove that \( \beta(x) \leq p\), it is enough to show there exists some number \( k \leq p \) such that for any real \( b \) the following set
     \[ \{ ( b + \theta) \bmod 2\pi, (b + 2 \cdot \theta ) \bmod 2\pi, \dots , \left ( b + k \cdot \theta \right ) \bmod 2\pi \}  \]
    is balanced.

    Indeed, let there be some real \( b \).

    \begin{enumerate}
        \item 
            If \( \theta \) is in \( [\frac{2\pi}{p}, \frac{2\pi}{3}] \) and \( p \geq 4 \) , Let \( k \) be the natural number such that \(2 \pi < \theta k \leq 2 \pi + \theta \). It follows that \(4 \leq k \leq p \) and that \(\theta \leq \frac{2 \pi}{k-1}\). By~\cref{lemma:coverage-of-equidistant-modulue} we have that the set 
            \[ \{ ( b + \theta) \bmod 2\pi, (b + 2 \cdot \theta ) \bmod 2\pi, \dots , \left ( b + k \cdot \theta \right ) \bmod 2\pi \}  \]
            has at least a single point in any continuous interval (in mod \( 2 \pi \)) of length \( \theta \).
            As such, any continuous interval mod \( 2 \pi \) of length \( \frac{2 \pi}{3} \) must contain at least 
             \[ \left \lfloor \frac{\frac{2 \pi}{3}}{\theta} \right \rfloor \geq \left \lfloor  \frac{\frac{ 2 \pi}{3}}{\frac{\pi}{k-1}}  \right \rfloor = \left \lfloor \frac{k-1}{3} \right \rfloor \geq \frac{k}{6}\]
            points, where the last inequality is true for any \(k \geq 4\). Since the mostly-positive and mostly-negative regions are continuous intervals (mod \( 2 \pi \)) of length \( \frac{2 \pi}{3} \), we have that the set 
            \[ \{ ( b + \theta) \bmod 2\pi, (b + 2 \cdot \theta ) \bmod 2\pi, \dots , \left ( b + k \cdot \theta \right ) \bmod 2\pi \}  \]
            has at least a sixth of its points in the mostly-positive and mostly-negative regions, and is therefore balanced, as needed.

        \item 
            If \( \theta \) is in \( [\frac{\pi}{2} + \frac{2\pi}{p-1}, \frac{4\pi}{3}] \) and \( p \geq 9 \), then \( 2\theta \bmod \pi \) is in the interval 
            \( \left [ \frac{2 \pi}{\frac{p-1}{2}}, \frac{2 \pi}{3} \right ] \) which is contained in the interval 
            \( \left [\frac{2 \pi}{\left \lfloor \frac{p}{2} \right \rfloor}, \frac{2 \pi}{3} \right ] \). Therefore, by the previous section, there exists some \( k \leq \left \lfloor \frac{p}{2} \right \rfloor \) such that the following set:
             \[ \{ b+ 2 \theta \bmod {2 \pi}, \dots, b+  k \cdot 2 \theta \bmod {2 \pi} \}  \]
            and the set
             \[ \{ (b-\theta) + 2 \theta \bmod{ 2 \pi}, \dots, (b-\theta) +  k \cdot 2\theta \bmod{2 \pi} \}  \]
            are both balanced. 
            Overall, their union, the set 
              \[ \{ ( b + \theta) \bmod{2 \pi}, (b + 2 \cdot \theta ) \bmod{2 \pi}, \dots , \left ( b + 2 k \cdot \theta \right ) \bmod{2 \pi} \}  \]
            is also balanced, showing that the \( \beta(x) \leq 2k \leq p\) as needed.
    \end{enumerate}

\end{proof}

\begin{lemma}\label{lemma:balancing-number-symmetry}
    For any natrual numbers \( r, \theta \) the following holds:
    \[ \beta \left (e^{r+i\theta} \right ) = \beta \left (e^{r-i\theta} \right ) \]
\end{lemma}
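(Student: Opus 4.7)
\begin{proofapp}
The plan is to reduce the claim to a simple symmetry observation about the two target regions. First I would note that the balancing number depends only on the argument of its input. Indeed, \(\beta(x)\) is defined purely in terms of the argument \(\theta\) of \(x\), with no reference to \(|x|\). Hence \(\beta(e^{r+i\theta}) = \beta(e^{i\theta})\) and \(\beta(e^{r-i\theta}) = \beta(e^{-i\theta})\), and it suffices to prove \(\beta(e^{i\theta}) = \beta(e^{-i\theta})\).

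The key observation is that both the mostly-positive region \([0,\pi/3] \cup [5\pi/3,2\pi]\) and the mostly-negative region \([2\pi/3,4\pi/3]\) are invariant, modulo \(2\pi\), under the map \(\alpha \mapsto -\alpha\). Consequently, for any real \(b\) and any natural \(p\), the set \(S = \{b+\theta,\, b+2\theta,\, \ldots,\, b+p\theta\}\) is balanced if and only if the negated set \(-S = \{-b-\theta,\, -b-2\theta,\, \ldots,\, -b-p\theta\}\) is balanced, since negation swaps each region with itself (as a subset of the circle).

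To conclude, suppose \(p = \beta(e^{i\theta})\). For an arbitrary nonzero real \(b'\), apply the minimality property of \(\beta(e^{i\theta})\) with \(b := -b'\): the set \(\{-b'+k\theta : k=1,\ldots,p\}\) is balanced, hence by the symmetry observation so is its negation \(\{b' + k(-\theta) : k=1,\ldots,p\}\). Since this holds for every nonzero real \(b'\), we obtain \(\beta(e^{-i\theta}) \leq p = \beta(e^{i\theta})\). The reverse inequality follows by the same argument with the roles of \(\theta\) and \(-\theta\) interchanged, yielding the claimed equality.

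I do not anticipate a serious obstacle here: the only thing to double-check is the symmetry of the two regions under \(\alpha \mapsto -\alpha \pmod{2\pi}\), which is immediate from their definitions (the mostly-positive region is symmetric about \(0\), and the mostly-negative region is symmetric about \(\pi\)). Everything else is pure bookkeeping on the definition of balancing number.
\end{proofapp}
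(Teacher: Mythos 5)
Your proposal is correct and follows essentially the same route as the paper's proof: both rest on the observation that negation mod \(2\pi\) maps the family of sets \(\{b+k\theta\}_{k=1}^{p}\) to the corresponding family for \(-\theta\), combined with the invariance of the mostly-positive and mostly-negative regions under \(\alpha \mapsto -\alpha \pmod{2\pi}\). The only cosmetic difference is that the paper packages the argument via the set families \(S_{\theta,p}\) while you argue directly from the minimality in the definition of \(\beta\).
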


\begin{proof}
    For any rational numbers \( \theta \) and \( b \), and for any natural number \( p \), we will define the set \( S_{\theta, b, p} \) in the following way:
     \[ S_{\theta, b, p}=\{ ( b + \theta) \bmod {2 \pi}, (b + 2 \cdot \theta ) \bmod {2 \pi}, \dots , \left ( b + p \cdot \theta \right ) \bmod {2 \pi} \}  \]
    We will also define the set \( S_{\theta,p} \) as
     \[ S_{\theta,p} = \{S_{\theta, b, p}|b\in\mathbb{C}\} \text{\,.}\]

    Given this notation, we have that \(\beta \left (e^{r-i\theta} \right ) = p \) if and only if \( p \) is the minimum natural number such that every set in \( S_{\theta, p} \) is balanced. For any rational numbers \( \theta \) and \( b \) and for any natural number \( p \), we have
     \[ S_{-\theta, -b, p} = -S_{\theta, b, p}  \] 
    (Where minus is taken element-wise mod \( 2 \pi \)). This immediately gives:
     \[ S_{-\theta, p} = -S_{\theta, p} \text{\,.}  \] 
    Since the mostly-positive and mostly-negative regions are invariant to taking a minus mod \( 2 \pi \), we have that all the sets in \( S_{\theta, p} \) are balanced if and only if so are all the sets of \( S_{-\theta, p} \). This directly implies that \( \beta \left (e^{r+i\theta} \right ) = \beta \left (e^{r-i\theta} \right ) \) as needed.  
\end{proof}

\begin{corollary}~\label{corollary:regions-of-small-balancing-number}
    If \( x \in \mathbb{C} \) has that \( \arg(x) \bmod \pi \in [\frac{2\pi}{p-1},\pi-\frac{2\pi}{p-1}] \) for some \( p \geq 9 \) then \( \beta(x) \leq p \)
\end{corollary}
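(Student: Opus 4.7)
The plan is to reduce the corollary to \cref{lemma:balancing-number-upper-bound} by using \cref{lemma:balancing-number-symmetry} to cover the arguments not directly handled by the lemma. Writing $\theta := \arg(x) \bmod 2\pi$, the hypothesis $\arg(x) \bmod \pi \in [\tfrac{2\pi}{p-1}, \pi - \tfrac{2\pi}{p-1}]$ is equivalent to $\theta$ lying in the union
\[
\left[\tfrac{2\pi}{p-1},\, \pi - \tfrac{2\pi}{p-1}\right] \;\cup\; \left[\pi + \tfrac{2\pi}{p-1},\, 2\pi - \tfrac{2\pi}{p-1}\right].
\]
I would then split this union into four sub-intervals according to whether $\theta$ lies on the ``left'' or ``right'' half of each component, producing the pieces $[\tfrac{2\pi}{p-1}, \tfrac{2\pi}{3}]$, $[\tfrac{2\pi}{3}, \pi - \tfrac{2\pi}{p-1}]$, $[\pi + \tfrac{2\pi}{p-1}, \tfrac{4\pi}{3}]$, and $[\tfrac{4\pi}{3}, 2\pi - \tfrac{2\pi}{p-1}]$.

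Two of these pieces are directly covered by \cref{lemma:balancing-number-upper-bound}. Since $p \geq 9$, the inclusion $[\tfrac{2\pi}{p-1}, \tfrac{2\pi}{3}] \subseteq [\tfrac{2\pi}{p}, \tfrac{2\pi}{3}]$ lets the first part of the lemma deliver $\beta(x) \leq p$ on the first piece, and the third piece $[\pi + \tfrac{2\pi}{p-1}, \tfrac{4\pi}{3}]$ is exactly the hypothesis of the second part of the lemma, again yielding $\beta(x) \leq p$. For the remaining two pieces I would invoke \cref{lemma:balancing-number-symmetry}: conjugating $x$ replaces $\theta$ with $2\pi - \theta$, which maps $[\tfrac{2\pi}{3}, \pi - \tfrac{2\pi}{p-1}]$ to $[\pi + \tfrac{2\pi}{p-1}, \tfrac{4\pi}{3}]$ and $[\tfrac{4\pi}{3}, 2\pi - \tfrac{2\pi}{p-1}]$ to $[\tfrac{2\pi}{p-1}, \tfrac{2\pi}{3}]$, both already handled. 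Since $\beta$ is invariant under conjugation, this transfers the bound $\beta \leq p$ back to $x$.

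I do not anticipate a serious obstacle; this is essentially a clean-up corollary that unifies the two disjoint angular intervals of \cref{lemma:balancing-number-upper-bound} into one symmetric statement. The only points requiring a little care are checking the boundary values (in particular that $\tfrac{2\pi}{p-1} \geq \tfrac{2\pi}{p}$ so the first lemma applies on the first sub-interval) and verifying that the map $\theta \mapsto 2\pi - \theta$ correctly swaps the unhandled sub-intervals with the handled ones under the modular convention used in the definition of the mostly-positive and mostly-negative regions.
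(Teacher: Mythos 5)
Your proposal is correct and follows essentially the same route as the paper's proof: both combine \cref{lemma:balancing-number-upper-bound} with \cref{lemma:balancing-number-symmetry}, observing that the two intervals covered by the lemma together with their reflections under $\theta \mapsto -\theta \pmod{2\pi}$ cover the hypothesis region (the paper computes $I \cup (-I)$ directly, whereas you decompose the target region into four pieces and map two of them back by conjugation, which is the same argument organized in the opposite direction). The boundary checks you flag ($\tfrac{2\pi}{p-1} \geq \tfrac{2\pi}{p}$ and the interval images under reflection) all go through.
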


\begin{proof}
    Denote \(\theta \equiv \arg(x)\). From~\cref{lemma:balancing-number-upper-bound} if the angle \( \theta \) lies within the interval
     \[ I = \left[\frac{2 \pi}{p}, \frac{2 \pi}{3}\right] \cup \left[\pi + \frac{2 \pi}{p-1}, \frac{4 \pi}{3}\right] \]
    then \( \beta (x) \leq p \). Additionally,~\cref{lemma:balancing-number-symmetry} indicates that the same holds if \( \theta \) is in the interval \( -I \) (where operations are element-wise mod \( 2 \pi \)). Hence, we have that if \( \theta \) is contained in interval \( I' \), where 
     \[  
        \begin{aligned} 
            I'
            &=I\cup (- I) \\
            &=\left[\frac{2 \pi}{p}, \frac{2 \pi}{3}\right] \cup 
            \left[\pi + \frac{2 \pi}{p-1}, \frac{4 \pi}{3}\right] \cup 
            \left[\frac{4 \pi}{3}, 2 \pi - \frac{2 \pi}{p}\right] \cup 
            \left[\frac{2 \pi}{3}, \pi - \frac{2 \pi}{p-1}\right]\\
            &=\left[\frac{2\pi}{p}, \pi - \frac{2\pi}{p-1}\right] \cup \left[\pi + \frac{2\pi}{p-1},2\pi - \frac{2\pi}{p}\right] 
        \end{aligned}         
    \]
    then \( \beta (x) \leq p \). Overall, this means that if \( \theta \) has that 
     \[ \theta \bmod \pi \in \left[\frac{2\pi}{p-1},\pi-\frac{2\pi}{p-1}\right]   \]
    then \( \beta (x) \leq p \), which gives the Corollary.
\end{proof}

\subsubsection{The Proof~\label{sec:proof:proof-infinite-approximation-result}}

\begin{proofapp}
    Let \( \epsilon >0 \) be any positive number. For any sequence \( X = \{ X_1, X_2, \ldots, X_t \} \), let \( X|_1 \) denote the subsequence consisting of elements at odd indices, i.e., \( X|_1 = \{ X_1, X_3, \ldots \} \).
    
    First, we notice that from the assumption that \(\left|\sin(\arg(\cA)) \right| \ge 0.2\) we have that \(\arg(\cA) \in \left[\frac{\pi}{16},\pi-\frac{\pi}{16}\right]\), and so
    \[
        \arg(\cA^2) \bmod \pi = 2\arg(\cA) \bmod \pi \in 2 \cdot \left[\frac{\pi}{16},\pi-\frac{\pi}{16}\right] \bmod \pi = \left[\frac{2\pi}{9-1},\pi-\frac{2\pi}{9-1} \text{\,.}\right]
    \]
    This implies, by \cref{corollary:regions-of-small-balancing-number}, that the balancing number of \( \cA^2 \) is less than \( 9 \).
    Now, if \( \phi_{\rn , \left( \rA , \rB , \rC \right)} \left( \cdot \right) \) \( \epsilon \)-approximates \( \phi_{\cn , \left( \cA , \cB , \cC \right)} \left( \cdot \right) \) up to time \( t \), then,  by definition,
    \[
    \seriesnorm{\phi_{\rn , \left( \rA , \rB , \rC \right)} \left( \Ibf \right)_{: t} - \phi_{\cn , \left( \cA , \cB , \cC \right)} \left( \Ibf \right)_{: t}} \leq \epsilon \text{\,.}
    \]
    This implies
    \[
    \seriesnorm{\phi_{\rn , \left( \rA , \rB , \rC \right)} \left( \Ibf \right)_{: t}|_1 - \phi_{\cn , \left( \cA , \cB , \cC \right)} \left( \Ibf \right)_{: t}|_1} \leq \epsilon \text{\,.}
    \]

    Now, we have that
    \begin{align*}
        {\left( \phi_{1 , \left( \cA , \cB , \cC \right)} \left( \Ibf \right)_{: t}|_1 \right)}_k &= {\left( \phi_{1 , \left( \cA , \cB , \cC \right)} \left( \Ibf \right)_{: t} \right)}_{2k-1} \\
        &= \Re \left( \cB \cC \cA^{2k-2} \right)
    \end{align*}
    and because the balancing of \( \cA^2 \) is at most \( 9 \), this implies that there exists a subsequence of \( 1,2, \dots, \lfloor t / 2 \rfloor \) with increasing indices \( k_j \) of length at least \( \left \lfloor \frac{t}{9} \right \rfloor -1 \) such that if \(j\) is even then \(  \cB \cC \left(\cA^2\right) ^ {k_j-1} \) is in the mostly-positive region, which implies
    \[
        \Re \left( \cB \cC \left(\cA^2\right) ^ {k_j-1} \right) \geq  |\cB \cC | \left | \frac{\left(\cA^2\right) ^ {k_j-1}}{2}\right | \geq \frac{1}{4} \text{\,,}
    \] 
    where the first inequality is due to the definition of the mostly-positive region and the second is implied from the assumption that \(\left|\cB \cdot \cC\right| \ge 1\) and \( \left| \cA \right| \ge 0.5^{1/t} \). Conversely, if \( j \) is odd it is in the mostly-negative region, which implies
    \[
        \Re \left( \cB \cC \left(\cA^2\right) ^ {k_j-1} \right) \leq  -|\cB \cC | \left | \frac{\left(\cA^2\right) ^ {k_j-1}}{2}\right | \leq -\frac{1}{4}\text{\,.}
    \] 

    Next, we will show that the impulse response of any real system, when restricted to even indices, can change its sign at most \( \rn - 1 \) times. This will prove the theorem since
    \[
    \frac{1}{4} \left ( \left\lfloor \frac{t}{9} \right\rfloor - \rn  - 1 \right ) \leq \seriesnorm{\phi_{\rn , \left( \rA , \rB , \rC \right)} \left( \Ibf \right)_{: t}|_1 - \phi_{\cn , \left( \cA , \cB , \cC \right)} \left( \Ibf \right)_{: t}|_1} \leq \epsilon \text{\,.}
    \]
    The first inequality results from the fact that there would be at least $\left\lfloor \frac{t}{p} \right\rfloor - \rn - 1$ indices where \(\frac{1}{4} \le \left| {\left( \phi_{1 , \left( \cA , \cB , \cC \right)} \left( \Ibf \right)_{: t}|_1 \right)}_k\right| \) and \( \phi_{\rn , \left( \rA , \rB , \rC \right)}\left( \Ibf \right)_{: t}|_1 \) and \(\phi_{1 , \left( \cA , \cB , \cC \right)} \left( \Ibf \right)_{: t}|_1 \) disagree on sign, which yields the theorem.

    Indeed, let us upper bound the number of sign changes of the real impulse response when restricted to odd indices. We have that
    \begin{align*}
        {\left( \phi_{\rn , \left( \rA , \rB , \rC \right)} \left( \Ibf \right)_{: t}|_1 \right)}_k 
        &= {\left( \phi_{\rn , \left( \rA , \rB , \rC \right)} \left( \Ibf \right)_{: t} \right)}_{2k-1} \\
        &= \Re \left( \rC \rA^{2k-2} \rB \right) \\
        &= \rC \rA^{2k-2} \rB \\
        &= \sum_{j=1}^{\rn} {\left ( \rB \right )}_j  {\left ( \rC \right )}_j {\left ( \rA \right )}_{j,j}^{2k-2} \text{\,.}
    \end{align*}

    By the mean value theorem, the number of times \( \phi_{\rn , \left( \rA , \rB , \rC \right)} \left( \Ibf \right)_{: t}|_1 \) changes signs is bounded by the number of zeros of the following continuous function:
    \[ f(x) = \sum_{j=1}^{\rn} {\left ( \rB \right )}_j  {\left ( \rC \right )}_j \left( {\left( \rA \right)}_{j,j}^{2} \right) ^{x} \text{\,.} \]

    This function is a linear combination of \( \rn \) exponential functions with a positive base, and so by~\cref{lemma:combination-of-exponentials}, it has at most \( \rn - 1 \) zeros, as needed.

\end{proofapp}

\subsection{Proof of\texorpdfstring{~\cref{result:r4general}}{Proof of Theorem 2}\label{proof:result:r4general}}

\begin{proofapp}
    Let \( d , m \) be natural numbers such that \( d + m \leq \lfloor t/2 \rfloor \), and let \( \sigma \) be in \( \{ \text{odd} , \, \text{even} \} \). We will show that 
    \[
        \rn \|\rC^\top \hada \rB \|_\infty \geq 2^{d + 2 \min \left\{ d , m \right\}} \left( 2^{-d} \left| \left( \phi \left( \Ibf \right) |_{\sigma} \right)^{( d )}_m \right| - \epsilon \right) \text{\,,}
    \]
    thus proving the theorem.
    
    Let \( \phi_{\rn , \left( \rA , \rB , \rC \right)} \left( \cdot \right) \) be a real system that \( \epsilon \)-approximates \( \phi \left( \cdot \right) \) up to time \( t \). For convenience, we will denote \( \phi_{\rn , \left( \rA , \rB , \rC \right)} \left( \Ibf \right)_{: t} \) by \( \Ybf \) and \( \phi \left( \Ibf \right)_{: t} \) by \( \Tbf \). Then, by definition, we have that
    \[
    \seriesnorm{\Tbf - \Ybf} \leq \epsilon ,
    \]
    which implies that
    \[
    \left| \Tbf|_\sigma - \Ybf|_\sigma \right|_\infty \leq \epsilon .
    \]

    \cref{lemma:approximation_in_odd_even_indices} guarantees that there exist some parameters \( \left( \tilde{\rA} , \tilde{\rB} , \tilde{\rC} \right) \) where \( \tilde{\rA} \) is non-negative and the following two things hold:
    \[
    \|\rC^T \odot \rB\|_\infty \ge \|\tilde{\rC}^T \odot \tilde{\rB}\|_\infty 
    \] 
    \[
        \Ybf |_\sigma = \phi_{\rn , \left( \tilde{\rA} , \tilde{\rB} , \tilde{\rC} \right)} \left( \Ibf \right)_{: t} \text{\,.}
    \]
    Because \( \tilde{\rA} \) is non-negative, \( \Ybf |_\sigma \) can be viewed as a linear combination of decaying exponentials with positive coefficients. Since the forward difference is linear, \cref{lemma:bounded_forward_difference} ensures that the absolute value of the \( d \)th forward difference of \( \Ybf |_\sigma \) at index \( m \) is upper bounded by
    \[
        \left( \frac{m}{d+m} \right)^m \left( \frac{d}{d+m} \right)^d \cdot \rn \|\tilde{\rC}^T \odot \tilde{\rB}\|_\infty \text{\,.}
    \]
    Using \cref{lemma:bounds_on_multiplication_of_parts}, we get
    \[ 
        \left| \left( \Ybf |_\sigma \right)^{( d )}_m \right| \leq \frac{\rn \|\tilde{\rC}^T \odot \tilde{\rB} \|_\infty }{2^{2 \min \left( m , d \right)}} \text{\,.}
    \]
    \cref{lemma:apprximation-error_by_forward_diff} implies that
    \begin{align*}
        \left \| \Tbf|_{\sigma} - \Ybf|_{\sigma} \right \|_\infty 
        &\geq \frac{ \left \| \Tbf|_{\sigma}^{ \left( d \right)} - \Ybf|_{\sigma}^{ \left( d \right)} \right \|_{\infty} }{ 2^d } \\
        &\geq \frac{ \left| { \left( \Tbf|_{\sigma} \right) }^{ \left( d \right)}_{m} \right| - \left| \left( \Ybf|_{\sigma} \right)^{( d )}_{m} \right| }{ 2^d } \\
        &\geq \frac{ \left| { \left( \Tbf|_{\sigma} \right) }^{ \left( d \right)}_{m} \right|}{2^d} - \frac{\rn \|\tilde{\rC}^T \odot \tilde{\rB} \|_\infty }{2^{d+2\min \left( m , d \right)}} \text{\,.}
    \end{align*}
        
    Plugging in \( \left| \Tbf|_\sigma - \Ybf|_\sigma \right|_\infty \leq \epsilon \) and \( \Tbf|_\sigma \), we get 
    \[
        \epsilon \geq 2^{-d} \left| \left( \phi \left( \Ibf \right) |_{\sigma} \right)^{( d )}_m \right| - \frac{\rn \|\tilde{\rC}^T \odot \tilde{\rB} \|_\infty }{2^{d+2\min \left( m , d \right)}} \text{\,.}
    \]
    Reordering this equation yields the desired result.
\end{proofapp}

\subsection{Proof of\texorpdfstring{~\cref{result:r4delay}}{Proof of Corollary 1}\label{proof:result:r4delay}}

\begin{proofapp}
    For simplicity of notation, we will denote \( \left\lfloor \frac{t - 1}{2} \right\rfloor \) by \( k \) and assume without loss of generality that \( k \) is odd.
    \cref{result:r4general} indicates that 
    \[
        \rn \| \rC^\top \hada \rB \|_\infty \geq \max\nolimits_{d , m \in \N \, , \, d + m \le \lfloor t/2 \rfloor \, , \, \sigma \in \{ \text{odd} \, , \, \text{even} \}} \left\{ 2^{d + 2 \min \{ d , m \}} \left( 2^{-d} \left| ( \phi ( \Ibf ) |_{\sigma} )^{( d )}_m \right| - \epsilon \right) \right\}
    \]

    Substituting \( d =  \frac{k+1}{2} \), \( m = \left\lfloor  \frac{k+1}{4} \right\rfloor \), \(\sigma = \text{even} \), and \(\epsilon \leq \frac{1}{8 \sqrt{t}}\) yields:

    \begin{align}
        n \|C^T \odot B\|_\infty 
        &\geq 2^{k-1} \left( \frac{\left| {(\phi ( \Ibf ) |_{\text{even}})}^{\left (\frac{k+1}{2} \right ) }_{\lfloor \frac{k+1}{4} \rfloor} \right|}{2^{\frac{k+1}{2}}} - \frac{1}{8\sqrt{t}} \right) \label{eq:plug_in_theorem_2}
    \end{align}
    
    In the subsequent analysis, we will demonstrate that
    \[ 
        \left| {(\phi ( \Ibf ) |_{\text{even}})}^{\left (\frac{k+1}{2} \right ) }_{\lfloor \frac{k+1}{4} \rfloor} \right|
        \geq \frac{2^{\frac{k+1}{2}}}{4\sqrt{t}} \text{\,.}
    \] 
    Once established, the remainder of the proof will naturally follow since plugging this forthcoming result into \cref{eq:plug_in_theorem_2} will ensure that:
    
    \begin{align*}
        n \|C^T \odot B\|_\infty 
        &\geq \frac{2^{k-1}}{8 \sqrt{t}} =
        \frac{2^{k}}{16 \sqrt{t}} =
        \frac{2^{\lfloor (t - 1) / 2 \rfloor}}{16 \sqrt{t}} \ge
        \frac{2^{(t / 2) - 1}}{16 \sqrt{t}} = \frac{2^{(t / 2)}}{32 \sqrt{t}}
    \end{align*}  
    
    satisfying the requirements for our conclusion. Indeed, we have that for any \( j \leq k \)
    
    \[
        (\phi ( \Ibf ) |_{\text{even}})_j 
        = (\phi ( \Ibf ))_{2j} 
        = \delta_{k} ( \Ibf )_{2j}
        = \mathbbm{1}(2j=k+1)
        = \mathbbm{1}\left (j=\frac{k+1}{2} \right ) \text{\,.}
    \]
    \cref{lemma:forward_diff_closed_form} implies that 
    \begin{align*}
        {(\phi ( \Ibf ) |_{\text{even}})}^{\left (\frac{k+1}{2} \right ) }_{\lfloor \frac{k+1}{4} \rfloor } 
        &= \sum_{j=0}^{{ \frac{k+1}{2} } } \left( \phi ( \Ibf ) |_{\text{even}} \right)_{\left \lfloor \frac{k+1}{4} \right  \rfloor + j} {(-1)}^{ \frac{k+1}{2}  - j}{\binom{  \frac{k+1}{2}  }{j}} \\
        &= \sum_{j=0}^{  \frac{k+1}{2}  } \mathbbm{1}\left( j + \left \lfloor \frac{k+1}{4} \right \rfloor =  \frac{k+1}{2}  \right){(-1)}^{ \frac{k+1}{2}  - j}{\binom{ \frac{k+1}{2} }{j}} \\
        &= {(-1)}^{\left \lfloor \frac{k+1}{4} \right \rfloor}{\binom{  \frac{k+1}{2} }{  \frac{k+1}{2}  - \left \lfloor \frac{k+1}{4} \right \rfloor}} \\
        &={(-1)}^{\left \lfloor \frac{k+1}{4} \right \rfloor}{\binom{  \frac{k+1}{2} }{ \left \lfloor \frac{k+1}{4} \right \rfloor}} \text{\,.}
    \end{align*}
    By \cref{lemma:n_choose_half}, we can assert:
    \[ 
        \left| {(\phi ( \Ibf ) |_{\text{even}})}^{  \left (\frac{k+1}{2} \right ) }_{\left \lfloor \frac{k+1}{4} \right \rfloor} \right| 
        \geq \frac{2^{ \frac{k+1}{2} }}{4\sqrt{2 \frac{k+1}{2} }}
        \geq \frac{2^{ \frac{k+1}{2} }}{4\sqrt{t}} \text{\,,}
    \]
    which completes the proof.
\end{proofapp}

\subsection{Proof of\texorpdfstring{~\cref{result:r4random}}{Proof of Corollary 2}\label{proof:result:r4random}}
\begin{proofapp}
    \cref{result:r4general} indicates that 
    \[
        \rn \| \rC^\top \hada \rB \|_\infty \geq \max\nolimits_{d, m \in \N \, , \, d + m \le \lfloor t/2 \rfloor \, , \, \sigma \in \{ \text{odd} \, , \, \text{even} \}} \left\{ 2^{d + 2 \min \{ d, m \}} \left( 2^{-d} \left| ( \Rbf_{:t} |_{\sigma} )^{( d )}_m \right| - \epsilon \right) \right\} \text{\,.}
    \]

   For any \( \sigma \in \{ \text{odd} \, , \, \text{even} \} \), plugging in \( m = \left \lfloor \frac{t}{8} \right \rfloor \), \( d = \left \lfloor \frac{t}{4} \right \rfloor \), and \(\epsilon \leq \frac{\alpha \sqrt{\delta}}{\sqrt{t}} \) yields:

    \begin{align}
        n \|C^T \odot B\|_\infty 
        &\geq 2^{\frac{t}{2}-3} \left( \frac{\left| {(\Rbf_{:t} |_{\sigma})}^{\left (\left \lfloor \frac{t}{4} \right \rfloor \right )}_{\lfloor \frac{t}{8} \rfloor} \right|}{2^{\left \lfloor \frac{t}{4} \right \rfloor}} 
        -  \frac{\alpha \sqrt{\delta}}{\sqrt{t}} \right)\text{\,.}~\label{eq:plug_in_theorem_2-2}
    \end{align}
    
    In the subsequent analysis, we will demonstrate that for any \( \sigma \in \{ \text{odd} \, , \, \text{even} \} \), the following holds with a probability smaller than \( \sqrt{\delta} \):
    \[ 
        \left | {(\Rbf_{:t}|_\sigma)}^{ \left (\left \lfloor \frac{t}{4} \right \rfloor \right )}_{\lfloor \frac{t}{8} \rfloor} \right | 
        \leq   2^{\left \lfloor \frac{t}{4} \right \rfloor} \cdot \frac{2 \alpha \sqrt{\delta}}{\sqrt{t}} \text{\,.}
    \] 

    Once established, the remainder of the proof will naturally follow, as plugging this forthcoming result into \cref{eq:plug_in_theorem_2-2} will ensure that for any \( \sigma \in \{ \text{odd} \, , \, \text{even} \} \), with probability \( 1 - \sqrt{\delta} \) we have that:
    \begin{align*}
        n \|C^T \odot B\|_\infty 
        &\geq \frac{2^{\frac{t}{2}-3} \alpha \sqrt{\delta}}{\sqrt{t}} = \frac{2^{\frac{t}{2}} \alpha \sqrt{\delta}}{8\sqrt{t}} \text{\,,}
    \end{align*}
    satisfying the requirements for our conclusion. Since \( \Rbf_{:t}|_{\text{even}} \) and \( \Rbf_{:t}|_{\text{odd}} \) are independent random variables, we have that the previous statement holds with probability higher than \( 1 - \delta \) as required.

    Now let us proceed with proving the aforementioned result. Assume without loss of generality that \( \sigma = \text{even} \). \cref{lemma:forward_diff_closed_form} implies that 
    \begin{align*}
        \left |{(\Rbf|_\text{even})}^{(\left \lfloor \frac{t}{4} \right \rfloor)}_{\lfloor \frac{t}{8} \rfloor} \right | 
        &= \left |\sum_{j=0}^{ \left \lfloor \frac{t}{4} \right \rfloor } {\left ( \Rbf|_\text{even} \right )}_{\lfloor \frac{t}{8} \rfloor+j} {(-1)}^{\left \lfloor \frac{t}{4} \right \rfloor-j}{\binom{\left \lfloor \frac{t}{4} \right \rfloor}{j}} \right | \\
        &= \left | \sum_{j=0}^{\left \lfloor \frac{t}{4} \right \rfloor } {\left ( \Rbf \right )}_{2\lfloor \frac{t}{8} \rfloor+2j} {(-1)}^{\left \lfloor \frac{t}{4} \right \rfloor-j}{\binom{\left \lfloor \frac{t}{4} \right \rfloor}{j}} \right | \text{\,.}
    \end{align*}
    
    For each \( j \in \left[ \left \lfloor \frac{t}{4} \right \rfloor \right] \), define the random variable \( X_j \) as follows:
    \[
        X_j = {\left ( \Rbf \right )}_{2\lfloor \frac{t}{8} \rfloor+2j} {(-1)}^{\left \lfloor \frac{t}{4} \right \rfloor-j}{\binom{\left \lfloor \frac{t}{4} \right \rfloor}{j}} \text{\,,}
    \]
    and denote its PDF by \(p_{X_j}\). We have that 
    \[
        X_j \sim {\binom{\left \lfloor \frac{t}{4} \right \rfloor}{j}} \mathcal{U}(-\alpha,\alpha)=\mathcal{U}(-\alpha{\binom{\left \lfloor \frac{t}{4} \right \rfloor}{j}},\alpha{\binom{\left \lfloor \frac{t}{4} \right \rfloor}{j}}) \text{\,,}
    \] 
    which means that 
    \[
        p_{X_j}(x) = 
        \begin{cases} 
        \frac{1}{2\alpha{\binom{\left \lfloor \frac{t}{4} \right \rfloor}{j}}} & \text{if } -\alpha \leq x \leq \alpha \\
        0 & \text{otherwise}
        \end{cases} \text{\,.}
    \]
    Let \( X \) be the sum of these random variables \( X = \sum_{j=0}^{\left \lfloor \frac{t}{4} \right \rfloor} X_j \) and denote its PDF by \(p_X\). We note that
    \[
        P \left [\left |{(\Rbf|_\text{even})}^{(\left \lfloor \frac{t}{4} \right \rfloor)}_{\lfloor \frac{t}{8} \rfloor} \right | \leq \epsilon_0 \right ] = P[-\epsilon_0 < X < \epsilon_0]
        \leq 2 \epsilon_0 \max_{x \in \mathbb{R}} p_X(x) \text{\,.}
    \]
    As a sum of i.i.d. random variables, the maximum of \( p_X \) is upper bounded by the maximum of the PDF of each of the random variables it is a sum of. Which means that 
    \[
        \max_{x \in \mathbb{R}} p_X(x) 
        \leq \max_{x \in \mathbb{R}} p_{X_{\lfloor \frac{t}{8} \rfloor}}(x) 
        = \frac{1}{{2\alpha\binom{\left \lfloor t/4 \right \rfloor}{\left \lfloor t/8 \right \rfloor}}} 
        \leq \frac{\sqrt{\lfloor\frac{t}{8}\rfloor}}{\alpha2^{\lfloor\frac{t}{4}\rfloor}}
        \leq \frac{\sqrt{t}}{2 \alpha 2^{\lfloor\frac{t}{4}\rfloor}}
    \]
    (where the second inequality results from ~\cref{lemma:2n choose n}). Overall we have that 
    \begin{align*}
        P \left [\left |{(\Rbf|_\text{even})}^{(\left \lfloor \frac{t}{4} \right \rfloor)}_{\lfloor \frac{t}{8} \rfloor} \right | \leq \epsilon_0 \right ] & \leq 
        \frac{\epsilon_0\sqrt{t}}{2\alpha2^{ \lfloor\frac{t}{4}\rfloor}}.
    \end{align*}
    Plugging in \( \epsilon_0 = 2^{\left \lfloor \frac{t}{4} \right \rfloor} \cdot \frac{2\alpha\sqrt{\delta}}{\sqrt{t}} \) we have that
    \begin{align*}
        P \left [\left |{(\Rbf|_\text{even})}^{(\left \lfloor \frac{t}{4} \right \rfloor)}_{\lfloor \frac{t}{8} \rfloor} \right | \leq 
        2^{\left \lfloor \frac{t}{4} \right \rfloor} \cdot \frac{2\alpha\sqrt{\delta}}{\sqrt{t}} \right ] 
        & \leq \sqrt{\delta} 
    \end{align*}
    which concludes the proof.
\end{proofapp}

\subsection{Proof of\texorpdfstring{~\cref{result:r4osc}}{Proof of Corollary 3}}\label{proof:result:r4osc}
\begin{proofapp}
    \cref{result:r4general} demonstrated that 
    \[
        \rn \| \rC^\top \hada \rB \|_\infty \geq \max\nolimits_{d, m \in \N \, , \, d + m \le \lfloor t/2 \rfloor \, , \, \sigma \in \{ \text{odd} \, , \, \text{even} \}} \left\{ 2^{d + 2 \min \{ d , m \}} \left( 2^{-d} \left| \left( \phi ( \Ibf ) |_{\sigma} \right)^{( d )}_m \right| - \epsilon \right) \right\} \text{\,.}
    \]
    
    Plugging in \( \sigma = \text{odd} \), we have that for any \( m \in \left[ \lfloor t/2 \rfloor \right] \) the following holds:
    \[
         \left( \phi ( \Ibf ) |_{\sigma} \right)_m = (-1)^{m-1}.
    \]

    Using simple induction, it is easy to show that for any \( d, m \in \N \) such that \( d + m \le \lfloor t/2 \rfloor \),
    \[
        \left( \phi ( \Ibf ) |_{\sigma} \right)^{( d )}_m = (-1)^{( m + d - 1 )} 2^d \text{\,.}
    \]
    Plugging this into \cref{result:r4general} with \( \sigma = \text{odd} \), \( d = \lfloor t/4 \rfloor \), \( m = \lfloor t/4 \rfloor \), and \( \epsilon = 0.5 \) yields:
    \[
        \rn \| \rC^\top \hada \rB \|_\infty \geq 2^{\left\lfloor t/4 \right\rfloor + 2 \left\lfloor t/4 \right\rfloor} (1 - 0.5) = 2^{3\left\lfloor t/4 \right\rfloor-1} \geq 2^{3t/4-4}\text{\,,}
    \]
    as required.

\end{proofapp}

\subsection{Proof of\texorpdfstring{~\cref{result:c4general}}{Proof of Proposition 6}\label{proof:result:c4general}}

\begin{proofapp}
    It is sufficient to prove the proposition for \( \cn = t \) since the dimension of a diagonal SSM can always be effectively reduced by zeroing out elements of \( \cB \).
    
    According to \cref{sec:prelim:lti}, it is enough to show that there exist assignments for \( (\cA, \cB, \cC) \) such that the following conditions hold:
    \[
        \phi_{t, (\cA, \cB, \cC)}(\Ibf)_{:t} = \phi(\Ibf)_{:t}\text{\,,} \quad \| \cB \|_2 = 2 \|\phi(\Ibf)_{:t}\|_2\text{\,,} \quad \| \cC^T \|_2 = 1\text{\,.}
    \]
    For convenience, we will denote \( \cA, \cB, \cC \) by \( A, B, C \). 
    
    We begin by utilizing the theory of \emph{discrete Fourier transform} (\emph{DFT}) to allow us to write the truncated impulse response of \( \phi \) of length \(t \) as a linear combination of \( t \) decaying sine and cosine waves. Specifically, defining \(\left(\tilde{\phi}(\Ibf)_{:t}\right)_k = \frac{\left( \phi(\Ibf)_{:t} \right)_k}{\alpha^{k-1}}\), where \(\alpha \in \left(0,1\right)\) is some constant, and denoting the DFT of \(\tilde{\phi}(\Ibf)_{:t}\) by \(a+bi \in \mathbb{C}^t\) we get that 
    \[
        \forall k \in [t], \frac{\left( \phi(\Ibf)_{:t} \right)_k}{\alpha^{k-1}} = \frac{1}{t}\sum_{j=0}^{t-1} a_{j+1} \cos \left( 2 \pi (k-1) \frac{j}{t} \right) + \sum_{j=0}^{t-1} b_{j+1} \sin \left( 2 \pi (k-1) \frac{j}{t} \right) \text{\,.}
    \]
    
    Now, we will derive assignments for the SSM's parameters so that its impulse response will equate this sum. Indeed, we fix the diagonal entries of \( A \) to be a scaled version of the t roots of unity, and \( C \) to be the inverse of the square root of $t$ as follows:
    \[ A_{j,j} = \alpha e^{2 \pi i \frac{j-1}{t}}, \quad C_j = \frac{1}{\sqrt{t}} \]
    We know (see \cref{sec:prelim:lti}) that the impulse response of a complex system at index \( k \) is given by:
    \[ \left( \phi_{t, (A, B, C)}(\Ibf)_{:t} \right)_k = \Re \left( \sum_{j=1}^{t} A_{j,j}^{k-1} B_j C_j \right) \text{\,.} \]
    Which implies that
    \begin{align*} 
        &\Re \left( \sum_{j=1}^{t} A_{j,j}^{k-1} B_j C_j \right) 
        = \sum_{j=1}^{t} \Re(C_j A_{j,j}^{k-1}) \Re(B_j) - \Im(C_j A_{j,j}^{k-1}) \Im(B_j) \\
        &= \alpha^{k-1} \frac{1}{\sqrt{t}} \left( \sum_{j=0}^{t-1} \cos \left( 2 \pi (k-1) \frac{j}{t} \right) \Re(B_j) - \sum_{j=0}^{t-1} \sin \left( 2 \pi (k-1) \frac{j}{t} \right) \Im(B_j)\right)\text{\,.}
    \end{align*}
    
    By the Plancherel theorem the following holds:
    \[ \| a + bi \|_2 = \sqrt{t} \left \| {\tilde{\phi}(\Ibf)_{:t}} \right \|_2 \le  \frac{\sqrt{t}\|\phi(\Ibf)_{:t} \|_2}{\alpha^{t-1}} \text{\,.} \]
    Therefore, if we define \( B =\frac{a - ib}{\sqrt{t}} \), we get 
    \begin{align*} 
        \Re \left( \sum_{j=1}^{t} A_{j,j}^{k-1} B_j C_j \right) &= \frac{\alpha^{k-1}}{t} \left( \sum_{j=0}^{t-1} a_{j+1} \cos \left( 2 \pi (k-1) \frac{j}{t} \right) +  \sum_{j=0}^{t-1} b_{j+1} \sin \left( 2 \pi (k-1) \frac{j}{t} \right)\right) \\
        &= \left( \phi(\Ibf)_{:t} \right)_k 
    \end{align*}
    and
    \[ \| B \|_2 = \left \| \frac{a-bi}{\sqrt{t}} \right \|_2 \le \frac{\| \phi(\Ibf)_{:t} \|_2}{\alpha^{t-1}}, \quad \| C \|_2 = \sqrt{t\cdot \left(\frac{1}{\sqrt{t}}\right)^2}=1\text{\,,} \]
    and by choosing \(\alpha = \left(\frac{1}{2}\right)^\frac{1}{t-1}\) we get the required assignment.
\end{proofapp}

\subsection{Proof of\texorpdfstring{~\cref{result:gd_param_growth}}{Proof of Proposition 3}\label{proof:result:gd_param_growth}}

\begin{proofapp}
    For simplicity of notation, let \( \rA^{(i)}, \rB^{(i)}, \rC^{(i)} \) be denoted by \( A^{(i)}, B^{(i)}, C^{(i)} \), respectively, and \(\ell(A^{(i)},B^{(i)},C^{(i)})\) by \(\ell^{(i)}\). We begin by writing the loss explicitly (see \cref{app:loss}):
    \[
        \ell(A,B,C) = 
        \sum_{j=1}^{t}\left( \sum_{k=1}^{\rn} A_k^{j-1} B_k C_k - \phi \left( \Ibf \right)_j \right)^2 \text{\,.}
    \]
    Fixing any \( k \in [\rn] \), we have 
    \[
        \frac{d\ell(A,B,C)}{dB_k} = 
        2\sum_{j=1}^{t}A_k^{j-1} C_k \left( \sum_{m=1}^{\rn} A_m^{j-1} B_m C_m - \phi \left( \Ibf \right)_j \right) \text{\,,}
    \]
    which, combined with the fact that \( \left| A_m \right| \leq 1 \), leads to
    \begin{align*}
        \left| \frac{d\ell(A,B,C)}{dB_k} \right| 
        &\leq 2\left| C_k \right| \sum_{j=1}^{t} \left| \sum_{m=1}^{\rn} A_m^{j-1} B_m C_m - \phi \left( \Ibf \right)_j \right| \\
        &= 2\left| C_k \right| \left\| \phi_{n, (A, B, C)} \left( \Ibf \right)_{:t} - \phi \left( \Ibf \right)_{:t} \right\|_1 \\
        &\leq 2\sqrt{t} \left| C_k \right| \left\| \phi_{n, (A, B, C)} \left( \Ibf \right)_{:t} - \phi \left( \Ibf \right)_{:t} \right\|_2 \\
        &\leq 2\sqrt{t} \cdot \max\left( \left\| B \right\|_\infty , \left\| C \right\|_\infty \right) \sqrt{\ell(A,B,C)} \text{\,.}
    \end{align*}
    
    This implies 
    \begin{align*}
        \| B^{(i)} \|_\infty 
        &\leq \| B^{(i-1)} \|_\infty
        + \eta^{(i)} \cdot 2\sqrt{t \cdot \ell^{(i-1)}} \max\left( \left\| B^{(i-1)} \right\|_\infty , \left\| C^{(i-1)} \right\|_\infty \right) \text{\,.}
    \end{align*}
    
    Next, we will demonstrate that 
    \[
        \eta^{(i)} \max\left( \left\| B^{(i-1)} \right\|_\infty , \left\| C^{(i-1)} \right\|_\infty \right) \leq \sqrt{c_1},
    \]
    which concludes the proof since this aforementioned result will establish that for all \( i \in \mathbb{N} \), the following holds:
    \begin{align*}
        \| B^{(i)} \|_\infty 
        &\leq \| B^{(i-1)} \|_\infty + 2\sqrt{t\ell^{(i-1)} c_1} \\
        &\leq \| B^{(i-1)} \|_\infty + 2\sqrt{t c_2 \ell^{(0)} c_1 } \tag{1} \\
        &\leq \| B^{(0)} \|_\infty + 2i\sqrt{t c_2 \ell^{(0)} c_1 } \tag{2}.
    \end{align*} 
    Where (1) holds because we assume \(\ell^{(i-1)} \leq c_2 \ell^{(0)}\) and (2) is by recursion.

    Repeating the same argument for \( C^{(i)} \) completes the proof.
    
    Finally, let us show that  \( \eta^{(i)} \max\left( \left\| B^{(i-1)} \right\|_\infty , \left\| C^{(i-1)} \right\|_\infty \right) \leq \sqrt{c_1} \). Indeed, \cref{lemma:max-eigenval} combined with the assumptions that 
    \[
        \forall i \in \N : \eta^{( i )} \leq \frac{2} {\max \big\{ \lambda_{max} \nabla^2  \ell^{(i-1)} , 0 \big\}}
    \] 
    and $\forall i \in \N : \eta^{(i)} \leq c_1$ gives that 
    \[
        \eta^{(i)} \leq \min \left \{ \frac{2}{2\max\left\{ \left\| B^{(i-1)} \right\|_\infty , \left\| C^{(i-1)} \right\|_\infty \right\}^2}, c_1 \right \}.
    \]
    Considering the cases where \( \max\left\{ \left\| B^{(i-1)} \right\|_\infty , \left\| C^{(i-1)} \right\|_\infty \right\} \) is bigger and smaller than \( 1 /\sqrt{c_1} \) we immediately get the required result.
\end{proofapp}    

\begin{lemma}\label{lemma:max-eigenval}
    In the setting of \cref{result:gd_param_growth} we have that for any \(\rA, \rB, \rC \) the following holds:
    \[
        \lambda_{max} \big( \nabla^2  \ell \big( \rA , \rB , \rC \big) \big) \geq 2\max \left\{ \left\| \rB \right\|_\infty , \left\| \rC \right\|_\infty \right\} ^2
    \]
\end{lemma}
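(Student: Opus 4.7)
The plan is to lower bound $\lambda_{\max}(\nabla^2 \ell)$ by evaluating the quadratic form on standard basis vectors, using the identity $\lambda_{\max}(H) \ge e^\top H e$ for any unit coordinate direction $e$. It thus suffices to exhibit a single coordinate whose corresponding diagonal entry of the Hessian meets the claimed bound.

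From the explicit form of the loss derived in \cref{app:loss}, I would write $\ell(\rA,\rB,\rC) = \sum_{j=1}^t f_j^2$, where $f_j := \sum_{k=1}^{\rn} (\rA)_{k,k}^{\,j-1} (\rB)_k (\rC)_k - \phi(\Ibf)_j$. The key observation is that for any fixed $k \in [\rn]$, each $f_j$ is \emph{linear} in the single scalar $(\rC)_k$ when the other parameters are held fixed, so the second-order residual term vanishes and
\[
\frac{\partial^2 \ell}{\partial (\rC)_k^2} \;=\; 2 \sum_{j=1}^t \left( \frac{\partial f_j}{\partial (\rC)_k} \right)^{\!2} \;=\; 2 (\rB)_k^2 \sum_{j=1}^t (\rA)_{k,k}^{\,2(j-1)} \;\ge\; 2 (\rB)_k^2\,,
\]
where the final inequality discards all but the $j=1$ term (all summands being non-negative). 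An identical computation, swapping the roles of $\rB$ and $\rC$, gives $\partial^2 \ell / \partial (\rB)_k^2 \ge 2 (\rC)_k^2$.

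Finally, choosing $k$ to be an index achieving $\|\rB\|_\infty$ and then an index achieving $\|\rC\|_\infty$, each of the two bounds yields $\lambda_{\max}(\nabla^2 \ell) \ge 2\|\rB\|_\infty^2$ and $\lambda_{\max}(\nabla^2 \ell) \ge 2\|\rC\|_\infty^2$, respectively, and combining them gives the desired $\lambda_{\max}(\nabla^2 \ell) \ge 2\max\{\|\rB\|_\infty, \|\rC\|_\infty\}^2$. I do not anticipate any real obstacle: the only care required is in correctly computing the pure second partial of $f_j^2$ and noting that linearity of $f_j$ in each individual entry of $\rB$ or $\rC$ makes the relevant diagonal Hessian entry a clean sum of squares, with the $j=1$ term already supplying the required lower bound irrespective of the magnitudes of the diagonal entries of $\rA$.
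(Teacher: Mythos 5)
Your proposal is correct and follows essentially the same route as the paper's proof: both lower-bound $\lambda_{\max}(\nabla^2 \ell)$ by a diagonal Hessian entry, exploit the linearity of each residual $f_j$ in an individual entry of $\rB$ (or $\rC$) so that the pure second partial is a sum of squares, and keep only the $j=1$ term. The only cosmetic difference is that the paper reduces to one case via a WLOG ($\|\rC\|_\infty \geq \|\rB\|_\infty$) and differentiates only in $\rB_i$, whereas you handle both coordinates symmetrically.
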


\begin{proof}
    For simplicity of notation, let \( \rA, \rB, \rC \) be denoted by \( A, B, C \), respectively. We will assume, without loss of generality, that \( \|C\|_\infty \geq \|B\|_\infty \), and denote \( i = \arg\max_i C_i \) and by \( e_i \) the one-hot vector with 1 at index \( i \) and zero elsewhere.  We will also denote the following function
    \[
        I_i(x) = \phi_{\rn , ( A , B + x \cdot e_i, C )}(\Ibf)_{:t}
    \]
    by \( I_i(x) \), and the following loss function: 
    \[
        \tilde{\ell}_i(x) = \ell(I_i(x)) = \tilde{\ell}(A,B+x \cdot e_i ,C)
    \]
    by \( \tilde{\ell}_i \). Next, We notice that by definition of the Hessian, we have that
    \[
        {\left (\nabla^2 \ell (A,\cdot,C)  [B] \right )}_{i,i} = \tilde{\ell}_i''(0) \text{\,.}
    \]
    Therefore, \( \lambda_{max} \left ( \nabla^2 \ell \right )\) at \( A, B, C \) is lower-bounded by \( \tilde{\ell}_i''(0) \). It is therefore sufficient to show that \( |\tilde{\ell}_i''(0)| \geq 2 \|C\|_\infty^2 \). Indeed, by \cref{app:loss}, the following holds
    
    \begin{align*}
        |\tilde{\ell}_i''(0)| &= \left| \frac{d\ell(A,B,C)}{\left(dB_i\right)^2} \right| \\
        &= \left| \frac{d\left ( \sum_{j=1}^{t}\left( \sum_{m=1}^{\rn} A_m^{j-1} B_m C_m - \phi \left( \Ibf \right)_j \right)^2 \right )}{\left(dB_i\right)^2} \right| \\
        &= \left| \sum_{j=1}^{t} \frac{d}{dB_i} \left( \left( \sum_{m=1}^{\rn} A_m^{j-1} B_m C_m - \phi \left( \Ibf \right)_j \right) \cdot 2 A_i^{j-1} C_i \right) \right| \\
        &= \left| \sum_{j=1}^{t} 2 A_i^{j-1} C_i \cdot A_i^{j-1} C_i \right| \\
        &\geq 2 \| C \|_\infty^2 \text{\,,} 
    \end{align*}
    which concludes the proof.
\end{proof}

\subsection{Proof of\texorpdfstring{~\cref{result:robust_q}}{Proof of Proposition 6}\label{proof:result:robust_q}}

\begin{proofapp}
    For simplicity of notation, we will denote \( \rA, \rB, \rC, \rn \) by \( A, B, C, n \). The robustness to $q$-quantization is by definition the probability: 
    \begin{align*}
        &\hspace{1.3em} \text{P}\left[ \norm{\phi_{n , ( A \hada Q_{A} , B \hada Q_{B} , C \hada Q_{C})}( \Ibf)_{:t} - \phi ( \Ibf)_{:t}}_1 \leq \epsilon \right] \\
        &\le \text{P} \left[ \left| \phi_{n , ( A \hada Q_{A} , B \hada Q_{B} , C \hada Q_{C})}( \Ibf)_{1} - \phi ( \Ibf)_{1}\right| \leq {\epsilon} \right] \text{\,.}
    \end{align*}      
    
    Let us denote the random variable \( \phi_{n , ( A \hada Q_{A} , B \hada Q_{B} , C \hada Q_{C})}( \Ibf)_{1} \) by \( X \), and its PDF by \( p_X \). The probability $\text{P}\left[\abs{X - Y} \leq \epsilon \right]$ is equal to the integral $\int_{Y-\epsilon}^{Y+\epsilon} p_X(x) dx \le 2\epsilon \max_{x\in \mathbb{R}} p_X(x)$. We get that
    \begin{align*}
        \text{P}\left[\norm{\phi_{n , ( A \hada Q_{A} , B \hada Q_{B} , C \hada Q_{C})}( \Ibf) - \phi ( \Ibf)}_1 \leq \epsilon \right] 
        &\leq 2 {\epsilon} \max_{x \in \mathbb{R}} p_X(x) \text{\,.}
    \end{align*} 
    and so, to prove the theorem, it suffices to show that 
    \[
        \max_{x \in \mathbb{R}} p_X(x) \leq \frac{1}{q \norm{B \hada C}_\infty} \text{\,.}
    \]
    Let us investigate the random variable \( X \):
    \begin{align*}
        X & = \sum_{i=1}^{n} \left(B \hada Q_{B}\right)_i \left(C \hada Q_{C}\right)_i = \sum_{i=1}^{n} B_i C_i(1+{q_B}_i) (1+{q_C}_i) \\ & =\sum_{i=1}^n B_i C_i (1 + {q_B}_i + {q_C}_i + {q_B}_i {q_C}_i) \text{\,,}
    \end{align*}
    where ${q_B}_i, {q_C}_i \sim \mathcal{U}(-q/2, q/2)$. As a sum of independent random variables, the maximum of \( p_X \) is upper bounded by the maximum of the PDF of each of the random variables it is a sum of. Specifically, for any \( i \in [n]\), it is bounded by the maximum of the PDF of \( \mathcal{U}(-B_i C_i q/2, B_i C_i q/2) \) which is equal to \( 1/(B_i C_i q) \). Overall, this means that 
    \[
        \max_{x \in \mathbb{R}} p_X(x) \leq \min_i \frac{1}{B_i C_i q} = \frac{1}{q \norm{B \hada C}_\infty} \text{\,,}
    \]
    as needed.
\end{proofapp}

\section{Auxiliary Lemmas}

\begin{lemma}~\label{lemma:combination-of-exponentials}
    For any \( n \) non-negative numbers \( a_1,\dots,a_n \), and \( n \) real numbers \( b_1,\dots,b_n \), the function \( f \) defined below
     \[ f(x) = \sum_{i=1}^{n}b_i{a_i}^x  \]
    can have at most \( n-1 \) zeros.
\end{lemma}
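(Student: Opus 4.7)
The plan is to prove this by induction on $n$ using Rolle's theorem, after a few standard reductions. First, by combining terms with identical values of $a_i$ and discarding any term with $b_i = 0$, we may assume all surviving $a_i$ are distinct and all surviving $b_i$ are nonzero; combining terms only lowers $n$, and the resulting bound is stronger, so it suffices to prove the statement in this reduced form. Next, any term with $a_i = 0$ contributes $0$ to $f(x)$ for every $x > 0$ (and is in any case a single constant at $x = 0$), so we may further assume every $a_i$ is strictly positive. It therefore suffices to prove the claim when $0 < a_1 < a_2 < \cdots < a_n$ and all $b_i$ are nonzero.

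For the induction, the base case $n = 1$ gives $f(x) = b_1 a_1^x$, which never vanishes, so has $0 = n - 1$ zeros. For the inductive step, assume the bound holds for $n-1$. Dividing by the everywhere-positive function $a_n^x$ yields
\[
g(x) \;:=\; f(x) / a_n^x \;=\; b_n \;+\; \sum_{i=1}^{n-1} b_i \, (a_i / a_n)^x,
\]
which has exactly the same zeros as $f$. Differentiating,
\[
g'(x) \;=\; \sum_{i=1}^{n-1} b_i \, \ln(a_i / a_n) \, (a_i / a_n)^x.
\]
This is a linear combination of $n-1$ exponentials with distinct positive bases $a_i / a_n$ and nonzero coefficients $b_i \ln(a_i / a_n)$ (nonzero because $a_i \neq a_n$ and $b_i \neq 0$). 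By the inductive hypothesis, $g'$ has at most $n - 2$ zeros. By Rolle's theorem, between any two consecutive zeros of $g$ there lies at least one zero of $g'$, so if $g$ has $k$ zeros then $g'$ has at least $k - 1$. Combining gives $k - 1 \leq n - 2$, hence $k \leq n - 1$, as required.

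The inductive step itself is entirely routine; the only mildly delicate point is ensuring that the reductions (merging repeated bases, dropping zero coefficients, handling $a_i = 0$) do not artificially lose zeros. Since each reduction either produces a sum with strictly fewer exponentials or only affects the degenerate value $x = 0$, the bound $n - 1$ transfers correctly back to the original $f$. This matches the use of the lemma in the proof of \cref{result:r4c}, where the bases are $(\rA)_{i,i}^2 \geq 0$ and $x$ ranges over a continuous interval of positive reals, so degeneracies do not affect the sign-change count.
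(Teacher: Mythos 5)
Your proof is correct and takes essentially the same route as the paper's: induction on $n$ combined with Rolle's theorem, after normalizing by a single exponential so that differentiation eliminates one term and the inductive hypothesis applies to the derivative. The only differences are cosmetic (you divide by $a_n^x$ where the paper factors out $b_1 a_1^x$) together with your more explicit handling of the degenerate cases $a_i=0$, $b_i=0$, and repeated bases, which the paper dispatches in a single sentence.
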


\begin{proof}
    We will show this by induction. The claim is obvious for \( n=1 \), let us assume the induction claim is true for \( n = k \) and we will show that it is true for \( n=k+1 \).
    Indeed, let \( f \) be the function defined by
     \[ f(x) = \sum_{i=1}^{k+1}b_i{a_i}^x \text{\,.} \]
    If \( b_1=0 \) or \( a_1 = 0 \) we can immediately use the induction step on \( f \). Otherwise, we can write \( f \) in the following way:
     \[ f(x) = b_1a_1^x \left ( 1 + \sum_{i=2}^{k+1}\frac{b_i}{b_1} {\left ( \frac{a_i}{a_1} \right )} ^ x \right ) \text{\,.} \]
    Since \( b_1 a_1^x \) is always non-zero, the amount of roots of \( f \) is equal to the number of roots of the function \( g \) defined by 
     \[ g(x) = 1 + \sum_{i=2}^{n+1}\frac{b_i}{b_1} {\left ( \frac{a_i}{a_1} \right )} ^ x  \text{\,.} \]
    By Rolle's theorem, the number of roots of a function is bounded by the number of roots of its derivative plus one. We have that \( g '(x) \) is given by:
     \[ g '(x)=\sum_{i=2}^{k+1}\frac{b_i}{b_1}  {\left ( \frac{a_i}{a_1} \right )} ^ x \ln \left (\frac{a_i}{a_1} \right ) \text{\,.} \]
    Which, by the induction step, can have at most \( k \) roots, as needed.  
\end{proof}

\begin{lemma}~\label{lemma:coverage-of-equidistant-modulue}
    Let \( \theta \) be a real number in the open interval \( (0,{2 \pi}) \), let \( b \) be a real number in \( [0,2 \pi] \), and let \( p \) be a natural number such that \( \theta \cdot p \geq 2 \pi \). Then the set
     \[ \{ ( b + \theta) \bmod {2 \pi}, (b + 2\theta ) \bmod {2 \pi}, \dots , \left ( b + p \cdot \theta \right ) \bmod {2 \pi} \}  \]
    contains at least one point in any closed and continuous-mod-\( 2 \pi \) interval of length \( \theta \) within \( [0,2 \pi] \).
    \end{lemma}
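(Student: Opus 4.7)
The plan is to lift the problem from the circle to the real line by considering preimages under the quotient map. Without loss of generality, write the target arc as $I = [\alpha, \alpha + \theta] \bmod 2\pi$ for some $\alpha \in [0, 2\pi]$. The preimage of $I$ under the natural map $\mathbb{R} \to \mathbb{R}/2\pi\mathbb{Z}$ is the union of closed real intervals $J_m = [\alpha + 2\pi m, \alpha + 2\pi m + \theta]$ indexed by $m \in \mathbb{Z}$. The lemma then reduces to finding $k \in \{1, \ldots, p\}$ and $m \in \mathbb{Z}$ such that $b + k\theta \in J_m$, i.e., such that $\alpha + 2\pi m \leq b + k\theta \leq \alpha + 2\pi m + \theta$.

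Dividing by $\theta$, I would reformulate this as the search for an integer $k \in \{1, \ldots, p\}$ lying in the closed unit-length interval $[s_m, s_m + 1]$, where $s_m := (\alpha + 2\pi m - b)/\theta$. The values $s_m$ form an arithmetic progression in $m$ with common difference $2\pi/\theta$, so there is a unique $m^* \in \mathbb{Z}$ with $s_{m^*} \in [0, 2\pi/\theta)$. The key use of the hypothesis $p\theta \geq 2\pi$ enters here: it gives $2\pi/\theta \leq p$, which forces $s_{m^*} \in [0, p)$.

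Since $[s_{m^*}, s_{m^*} + 1]$ has length one it contains at least one integer; I take $k^* := \lceil s_{m^*} \rceil$ when $s_{m^*} > 0$ and $k^* := 1$ when $s_{m^*} = 0$. The range check is then immediate: by construction $k^* \geq 1$, while $k^* \leq s_{m^*} + 1 < p + 1$ combined with $k^* \in \mathbb{Z}$ gives $k^* \leq p$. Hence $b + k^*\theta \in J_{m^*}$, establishing the claim.

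I expect the only subtlety to be the boundary case $s_{m^*} = 0$, where the ``natural'' choice $\lceil s_{m^*} \rceil = 0$ falls outside $\{1, \ldots, p\}$. This is resolved by observing that $s_{m^*} = 0$ means $b$ coincides with the left endpoint of $J_{m^*}$, so $b + \theta$ coincides with its right endpoint; since $J_{m^*}$ is closed, $k^* = 1$ still places $b + k^*\theta$ inside $J_{m^*}$. Aside from this edge case, the argument is a clean pigeonhole application in the lifted setting.
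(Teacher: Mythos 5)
Your proof is correct. The construction is sound: lifting the arc to $\mathbb{R}$ as the family $J_m$, reducing membership $b + k\theta \in J_{m^*}$ to the integrality condition $k \in [s_{m^*}, s_{m^*}+1]$, and using $2\pi/\theta \leq p$ to place the unique representative $s_{m^*}$ in $[0,p)$ so that $k^* = \lceil s_{m^*} \rceil$ (or $k^*=1$ in the boundary case) lands in $\{1,\dots,p\}$ -- every step checks out, including the range bound $k^* \leq s_{m^*}+1 < p+1$.

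Your route is genuinely different from the paper's. The paper works directly on the circle: it covers $[0,2\pi]$ by the $p$ consecutive half-open gaps $I_k = [x_{k-1}, x_k) \bmod 2\pi$ between orbit points (each of length exactly $\theta$, jointly covering the circle because $p\theta \geq 2\pi$), and then argues that a closed arc of length $\theta$ meeting one of these equal-length gaps must contain one of its endpoints, which is an orbit point. Your proof instead fixes the target arc, lifts to the universal cover, and explicitly names the index $k^*$ that works. What the paper's argument buys is brevity and a geometric picture; what yours buys is explicitness and a cleaner treatment of the edge cases -- in particular, the paper's final step nominally allows the shared endpoint to be $x_0 = b$ (the left endpoint of $I_1$), which is not a member of the set $\{(b+k\theta) \bmod 2\pi : k \in [p]\}$ and requires an extra word to dispatch, whereas your ceiling construction never produces $k^* = 0$ by design. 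Both are valid pigeonhole arguments; yours is the more airtight write-up of the two.
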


\begin{proof}
    Let \( x_k = (b + k \theta) \) for \( k = 1,2, \dots, p \). Define the intervals
    \[
    I_k = [x_{k-1}, x_k) \bmod 2\pi \quad \text{for} \quad k = 1, 2, \dots, p \text{ .}
    \]
    Each interval \( I_k \) has a length of \( \theta \). Since \( p \theta \geq 2\pi \) we have that 
    \[ 
        \bigcup_{k=1}^{p}I_k = [0,2\pi].
    \]
    Now, consider any closed and continuous-mod-\(2\pi\) interval \( J \subset [0, 2\pi) \) of length \( \theta \). Since \(\bigcup_{k=1}^{p} I_k = [0, 2\pi]\), the interval \( J \) must intersect at least one of the intervals \( I_k \). That is,
    \[
    J \cap I_{k^*} \neq \emptyset \quad \text{for some} \quad k^* \in \{1, 2, \dots, p\}.
    \]
    
    Since \( J \) has length \( \theta \) and overlaps with \( I_{k^*} \) (which also has length \( \theta \)), and since both are continuous-mod-\(2\pi\) 
    their intersection must include at least one of the endpoints of \( I_{k^*} \) which is in the set 
    \[ \{ ( b + \theta) \bmod {2 \pi}, (b + 2\theta ) \bmod {2 \pi}, \dots , \left ( b + p \cdot \theta \right ) \bmod {2 \pi} \}  .\]
    This completes the proof.
\end{proof}

\begin{lemma}~\label{lemma:bounded_forward_difference}
    Let \( \alpha \) be a real number within the interval \([0,1]\), and let \( n, m \) be natural numbers with \( t \) being a natural number such that \( t \geq n + m \). The absolute value of the \( m \)th forward difference of the sequence \( X = \{1, \alpha, \alpha^2, \ldots, \alpha^{t-1}\} \) at index \( n \), as defined in \cref{def:diff}, is bounded above by
    \[
        {\left (\frac{m}{n+m} \right )}^m {\left (\frac{n}{n+m} \right )}^n.
    \]
\end{lemma}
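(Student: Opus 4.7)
The plan is to combine the standard closed-form expression for forward differences of a sequence with a one-variable optimization on $[0,1]$.

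First, I would invoke (or derive by a short induction on $m$ directly from \cref{def:diff}) the closed-form identity
\[
(X^{(m)})_n \;=\; \sum_{j=0}^{m} (-1)^{m-j}\binom{m}{j}\, X_{n+j},
\]
which is already used elsewhere in the appendix (\cref{lemma:forward_diff_closed_form}). Substituting the entries of the geometric sequence $X_k = \alpha^{k}$, factoring $\alpha^n$ out of the sum, and applying the binomial theorem yields
\[
(X^{(m)})_n \;=\; \alpha^n \sum_{j=0}^{m} (-1)^{m-j}\binom{m}{j}\alpha^j \;=\; \alpha^n (\alpha-1)^m.
\]
Since $\alpha \in [0,1]$, taking absolute values gives the clean expression $|(X^{(m)})_n| = \alpha^n (1-\alpha)^m$.

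Second, I would maximize $f(\alpha) := \alpha^n(1-\alpha)^m$ over $[0,1]$. This is a textbook exercise: $f$ vanishes at both endpoints, and the first-order condition $n(1-\alpha)=m\alpha$ has a unique interior critical point $\alpha^\ast = n/(n+m)$, at which
\[
f(\alpha^\ast) \;=\; \left(\frac{n}{n+m}\right)^n \left(\frac{m}{n+m}\right)^m.
\]
An equivalent one-line argument is weighted AM--GM applied to the $n+m$ quantities consisting of $n$ copies of $\alpha$ (each weighted by $1/n$ inside the product) and $m$ copies of $1-\alpha$. Combining with the preceding display gives the stated bound.

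The argument is essentially a calculation and presents no real obstacle; the only point that needs care is the indexing convention for $X$ (whether the $k$th entry is $\alpha^{k-1}$ or $\alpha^k$), which shifts by one the exponent extracted from the binomial sum but does not affect the subsequent optimization that produces the bound.
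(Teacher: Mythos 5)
Your proposal is correct and follows essentially the same route as the paper: both reduce the forward difference of the geometric sequence to $\alpha^n(\alpha-1)^m$ (the paper by a direct induction, you via the binomial closed form, which is an immaterial difference) and then maximize $\alpha^n(1-\alpha)^m$ over $[0,1]$ by the same first-order condition at $\alpha^\ast = n/(n+m)$. Your remark about the indexing convention is the right thing to be careful about and matches the paper's parenthetical ``when we start at index $0$.''
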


\begin{proof}
    By induction, the \(m\)th forward difference of \(X\) at index \(n\) (when we start at index \(0\)), for \(n \leq t-m\), is \(\alpha^n{(\alpha - 1)}^m\). For \(\alpha \in [0,1]\), this absolute value simplifies to:
    \[
    |\alpha^n {(\alpha - 1)}^m| = \alpha^n {(1 - \alpha)}^m.
    \]
    To find the maximum value of this expression within \([0,1]\), we differentiate with respect to \( \alpha \) and set the derivative to zero:
    \[
    \frac{d}{d\alpha}(\alpha^n {(1 - \alpha)}^m) = n\alpha^{n-1} {(1-\alpha)}^m - m \alpha^n {(1-\alpha)}^{m-1} .
    \]
    Solving for \( \alpha \), we find that the critical points are at \( \alpha \in \{0,1\} \) or:
    \[
        n {(1-\alpha)} - m \alpha=0 \implies \alpha = \frac{n}{n+m}.
    \]
    A second derivative test confirms that \(\alpha = \frac{n}{n+m}\) is a maximum within \([0, 1]\). Substituting \(\alpha_{\max} = \frac{n}{n+m}\) into our original expression, we obtain:
    \[
    |{(X^{(m)})}_n| \leq {\left ( \frac{n}{n+m}\right )}^n {\left (1 - \frac{n}{n+m}\right )}^m = {\left (\frac{m}{n+m}\right )}^m {\left (\frac{n}{n+m}\right )}^n,
    \]
    thereby completing the proof.
\end{proof}

\begin{lemma}~\label{lemma:forward_diff_closed_form}
    Given a sequence \(A = \{a_0, a_1, \dots, a_{n-1}\} \) and two natural numbers \(m < n\), the \(m\)th forward difference of \(A\) (\cref{def:diff}) at index \(n\), denoted \( {\left ( A^{(m)} \right )}_n \), is given by
    \[
        {\left ( A^{(m)} \right )}_n = \sum_{j=0}^{m} a_{n+j} {(-1)}^{m-j} \binom{m}{j}
    \]
\end{lemma}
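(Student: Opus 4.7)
The plan is to prove the identity by induction on $m$. For the base case $m=1$, the claim reduces to $(A^{(1)})_n = a_{n+1} - a_n$, which is exactly the definition of the first forward difference given in \cref{def:diff}, and it matches $\sum_{j=0}^{1} a_{n+j}(-1)^{1-j}\binom{1}{j} = -a_n + a_{n+1}$.

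For the inductive step, I would assume the closed form holds for some $m$ and derive it for $m+1$. The key is the recursive definition $A^{(m+1)} = (A^{(m)})^{(1)}$, which yields
\[
(A^{(m+1)})_n = (A^{(m)})_{n+1} - (A^{(m)})_n.
\]
Applying the inductive hypothesis to both terms and shifting the index in the first sum by setting $j' = j+1$, I would collect contributions by the common index $j$ of the coefficient $a_{n+j}$. The boundary indices $j=0$ and $j=m+1$ each receive a single contribution with coefficient $1 = \binom{m+1}{0} = \binom{m+1}{m+1}$, while for every $1 \le j \le m$ the two sums together produce $(-1)^{m+1-j}\bigl[\binom{m}{j-1} + \binom{m}{j}\bigr]$.

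The final step is to apply Pascal's identity $\binom{m}{j-1} + \binom{m}{j} = \binom{m+1}{j}$ to merge the two binomial contributions, producing the desired expression $\sum_{j=0}^{m+1} a_{n+j}(-1)^{m+1-j}\binom{m+1}{j}$.

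There is no real obstacle here: the lemma is a standard identity in finite difference calculus, and the only care needed is bookkeeping the reindexing so that the boundary terms at $j=0$ and $j=m+1$ are correctly accounted for before invoking Pascal's identity on the interior indices.
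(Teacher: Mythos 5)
Your proposal is correct and follows essentially the same route as the paper: induction on $m$ via the recursion $(A^{(m+1)})_n = (A^{(m)})_{n+1} - (A^{(m)})_n$, reindexing, and Pascal's identity, with only the immaterial difference that the paper starts the induction at $m=0$ rather than $m=1$.
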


\begin{proof}
    We will show this by induction over \( m \). For \( m = 0 \) This is obvious. Let's assume that the Lemma is true for \( m = k \) and prove that it is also true for \( m = k + 1 < n \). 

    Indeed 
    \begin{align*}
        { \left ( A^{(m+1)} \right ) }_n & = { \left ( A^{(m)} \right ) }_{n+1} - { \left ( A^{(m)} \right ) }_{n} \\
        &= \sum_{j=0}^{m} {(-1)}^{m-j} a_{n+j+1} {\binom{m}{j}} - \sum_{j=0}^{m} {(-1)}^{m-j} a_{n+j} {\binom{m}{j}} \\
        &= a_{n+m+1} - {(-1)}^{m} a_{n} + \sum_{j=0}^{m-1} {(-1)}^{m-j} a_{n+j+1} \left ( {\binom{m}{j}} + {\binom{m}{j+1} }\right ) \\
        &= a_{n+m+1} - {(-1)}^{m} a_{n} + \sum_{j=0}^{m-1} {(-1)}^{m-j} a_{n+j+1} {\binom{m+1}{j+1} } \\
        &= a_{n+m+1} - {(-1)}^{m} a_{n} + \sum_{j=1}^{m} {(-1)}^{m-j+1} a_{n+j} {\binom{m+1}{j} } \\
        &= \sum_{j=0}^{m+1} {(-1)}^{m+1-j} a_{n+j} {\binom{m+1}{j} }
    \end{align*}
    as needed.
\end{proof}

\begin{lemma}~\label{lemma:apprximation-error_by_forward_diff}
    Given two sequences \( A = \{a_0, a_1, \dots, a_{n-1}\} \) and \( B = \{b_0, b_1, \dots, b_{n-1}\} \) of length \( n \), and for any natural number \(m < n\), it holds that
    \[ \| A - B \|_{\infty} \geq \frac{\| A^{(m)} - B^{(m)} \|_{\infty}}{2^m} \]
    where \( A^{(m)} \) and \( A^{(m)} \) are the \(m\)th forward difference (\cref{def:diff}) of \( A \) and \( B \) respectively
\end{lemma}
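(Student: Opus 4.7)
The plan is to first reduce the claim to a statement about a single sequence by linearity. Let $C = A - B$, and observe that the forward difference operator defined in \cref{def:diff} is linear, so $C^{(m)} = A^{(m)} - B^{(m)}$. The inequality to prove thus becomes $\|C^{(m)}\|_\infty \leq 2^m \|C\|_\infty$, a general statement about how much the $m$th forward difference can amplify a sequence's sup-norm.

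From here, I see two natural routes. The cleanest is to invoke the closed-form expression for iterated forward differences established in \cref{lemma:forward_diff_closed_form}: for every valid index $n$,
\[
(C^{(m)})_n = \sum_{j=0}^{m} c_{n+j} \, (-1)^{m-j} \binom{m}{j} \text{\,.}
\]
Applying the triangle inequality and bounding each $|c_{n+j}| \leq \|C\|_\infty$ yields
\[
|(C^{(m)})_n| \leq \|C\|_\infty \sum_{j=0}^{m} \binom{m}{j} = 2^m \, \|C\|_\infty \text{\,,}
\]
using the well-known binomial identity $\sum_{j=0}^{m}\binom{m}{j}=2^m$. Taking the max over $n$ completes the argument.

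Alternatively, one can induct on $m$. The base case $m=0$ is trivial. For the inductive step, note that $C^{(m)}$ is by definition the forward difference of $C^{(m-1)}$, so each of its entries is the difference of two entries of $C^{(m-1)}$, giving $\|C^{(m)}\|_\infty \leq 2\|C^{(m-1)}\|_\infty$. Chaining this with the induction hypothesis $\|C^{(m-1)}\|_\infty \leq 2^{m-1}\|C\|_\infty$ gives the desired bound. I would present the closed-form proof as the main argument since it is one line and directly reuses an already-proved lemma.

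I do not expect any serious obstacle here; the lemma is essentially a sharp triangle-inequality estimate dressed up as a statement about forward differences. The only subtlety is ensuring that the index range is valid, i.e., that the entries $c_n, c_{n+1}, \dots, c_{n+m}$ appearing in the closed form lie in the defined range of $C$, which is guaranteed by the hypothesis $m < n$ (interpreting $n$ as the length of the sequence) together with the convention in \cref{def:diff} that $C^{(m)}$ has length $n - m$.
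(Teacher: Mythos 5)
Your proposal is correct. The paper's own proof is precisely the induction argument you describe as your alternative route: it verifies the case $m=1$ via the triangle inequality, namely $|(A^{(1)}-B^{(1)})_i| = |a_{i+1}-a_i-b_{i+1}+b_i| \leq |a_{i+1}-b_{i+1}| + |a_i-b_i| \leq 2\|A-B\|_\infty$, and then chains $\|A^{(m)}-B^{(m)}\|_\infty \leq 2\|A^{(m-1)}-B^{(m-1)}\|_\infty \leq \cdots \leq 2^m\|A-B\|_\infty$. Your preferred main route---passing to $C = A - B$ by linearity, invoking the closed form $(C^{(m)})_n = \sum_{j=0}^{m} (-1)^{m-j}\binom{m}{j}\,c_{n+j}$ from \cref{lemma:forward_diff_closed_form}, and bounding by $\|C\|_\infty \sum_{j=0}^{m}\binom{m}{j} = 2^m\|C\|_\infty$---is equally valid and arguably tidier, since it reuses an already-proved lemma and avoids an explicit induction (the induction having already been done once inside \cref{lemma:forward_diff_closed_form}). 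Both arguments yield the same constant $2^m$ and neither is sharper than the other; the only point to watch in your version is the clash between $n$ as sequence length and $n$ as the evaluation index, which you already flag and resolve.
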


\begin{proof}
    We aim to demonstrate that
    \[ 2^m \| A - B \|_{\infty} \geq \| A^{(m)} - B^{(m)} \|_{\infty} \]
    which directly supports the statement of the lemma.

    It suffices to verify this for \(m = 1\), as the general case can then be established by induction, showing that:
    
    \[ \| A^{(m)} - B^{(m)} \|_{\infty} \leq 2\| A^{(m-1)} - B^{(m-1)} \|_{\infty} \leq \cdots \leq 2^m\| A - B \|_{\infty}\]

    Indeed, for \(m=1\), we have that for any \( i < n-1 \)
    \[ |{(A^{(1)} - B^{(1)})}_i| = |a_{i+1}-a_i - b_{i+1} + b_i | \leq |a_{i+1} - b_{i+1}| + |a_i - b_i| \leq 2\| A - B \|_{\infty} \]
\end{proof}

\begin{lemma}~\label{lemma:approximation_in_odd_even_indices}
    Let \( \phi_{\rn , \left( \rA , \rB , \rC \right)} \left( \Ibf \right)_{: t} \) be a truncated impulse response of a real SSM denoted by \( \Ybf \). Each of the following two sequences
    \[
        \Ybf|_{\text{odd}} = \left ( \Ybf_1, \Ybf_3, \dots \right )
    \] 
    and
    \[
    \Ybf|_{\text{even}} = \left ( \Ybf_2, \Ybf_4, \dots \right )
    \] 
    are impluse responses of some other real SSM of dimension \( \rn \) with parameters \((\tilde{\rA}, \tilde{\rB}, \tilde{\rC})\) where \(\tilde{\rA}\) is non-negative, and
    \[
    \|\rC^T \odot \rB\|_\infty \ge \|\tilde{\rC}^T \odot \tilde{\rB}\|_\infty
    \]
\end{lemma}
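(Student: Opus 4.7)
The plan is to use the closed-form expression for the impulse response of a diagonal real SSM and then reparameterize so that the new transition matrix has non-negative diagonal entries. Specifically, by \cref{eq:ssm_ir} and diagonality of $\rA$, for every $k \in [t]$ one has $\Ybf_k = \sum_{i=1}^{\rn} \rB_i \rC_i \rA_{i,i}^{k-1}$ (the real-part operator is trivial here since $\rA$, $\rB$ and $\rC$ are all real). The two subsequences $\Ybf|_{\text{odd}}$ and $\Ybf|_{\text{even}}$ are then handled separately by relabeling the power of $\rA_{i,i}$.

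For the odd case, substituting $k \mapsto 2k-1$ and grouping factors gives $\Ybf|_{\text{odd},k} = \sum_{i=1}^{\rn} \rB_i \rC_i \, (\rA_{i,i}^2)^{k-1}$. I would define $\tilde{\rA}_{i,i} := \rA_{i,i}^2$, $\tilde{\rB}_i := \rB_i$, $\tilde{\rC}_i := \rC_i$. The resulting $\tilde{\rA}$ is real, diagonal, and has non-negative diagonal entries (squares of real numbers); stability of the original SSM ensures $|\rA_{i,i}| < 1$, hence $|\tilde{\rA}_{i,i}| < 1$ as well, so $(\tilde{\rA}, \tilde{\rB}, \tilde{\rC})$ is a valid stable diagonal real SSM of dimension $\rn$ whose impulse response matches $\Ybf|_{\text{odd}}$. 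Moreover $\tilde{\rC}^\top \odot \tilde{\rB} = \rC^\top \odot \rB$, so the desired infinity-norm inequality holds with equality.

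For the even case I would peel off one extra factor of $\rA_{i,i}$ and absorb it into $\tilde{\rB}$: the identity $\Ybf|_{\text{even},k} = \sum_{i=1}^{\rn} (\rA_{i,i} \rB_i)\, \rC_i \, (\rA_{i,i}^2)^{k-1}$ motivates setting $\tilde{\rA}_{i,i} := \rA_{i,i}^2$, $\tilde{\rB}_i := \rA_{i,i} \rB_i$, $\tilde{\rC}_i := \rC_i$. Again $\tilde{\rA}$ is diagonal, real and non-negative with spectral radius below one, and the impulse response of $(\tilde{\rA}, \tilde{\rB}, \tilde{\rC})$ equals $\Ybf|_{\text{even}}$. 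The one step that genuinely uses the stability hypothesis is the infinity-norm bound: since $|\rA_{i,i}| < 1$ for every $i$, we get $|\tilde{\rC}_i \tilde{\rB}_i| = |\rC_i \rB_i| \cdot |\rA_{i,i}| \leq |\rC_i \rB_i|$, and therefore $\|\tilde{\rC}^\top \odot \tilde{\rB}\|_\infty \leq \|\rC^\top \odot \rB\|_\infty$.

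The argument is essentially a direct computation with no deep obstacle; the only subtlety is choosing where to absorb the leftover $\rA_{i,i}$ factor in the even case. Folding it into $\tilde{\rB}$ (as opposed to into $\tilde{\rA}$, which would destroy non-negativity) is what makes the construction work, and the resulting infinity-norm inequality is preserved precisely because of the stability assumption on the original SSM.
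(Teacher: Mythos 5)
Your proof is correct and follows essentially the same route as the paper: both square the diagonal transition matrix to obtain a non-negative $\tilde{\rA}$ and absorb the leftover factor of $\rA$ in the even case into one of the other parameter vectors (you fold it into $\tilde{\rB}$, the paper into $\tilde{\rC}$, which is immaterial since only $\tilde{\rC}^\top \odot \tilde{\rB}$ enters the bound). The stability-based justification of the infinity-norm inequality is likewise identical.
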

\begin{proof}
We observe that (see \cref{sec:prelim:lti})
\[
    (\Ybf|_{\text{odd}})_i =  \rC {\rA}^{2(i-1)} {\rB}
\]
setting  \((\tilde{\rA}, \tilde{\rB}, \tilde{\rC})\) =  \((\rA^2, {\rB}, {\rC})\) yields that \(\Ybf|_{\text{odd}}\) is the impulse response of the real SSM \( \phi_{\rn , \left( \tilde{\rA}, \tilde{\rB}, \tilde{\rC} \right) } \).

For the even case:
\[
    (\Ybf|_{\text{even}})_i =  \rC {\rA}^{2i-1} {\rB} 
\]
setting \((\tilde{\rA}, \tilde{\rB}, \tilde{\rC})\) =  \((\rA^2, {\rB}, {\rC} \rA)\) yields that \(\Ybf|_{\text{even }}\) is the impulse response of the real SSM \( \phi_{\rn , \left( \tilde{\rA}, \tilde{\rB}, \tilde{\rC} \right) } \).
In both cases, $\tilde{\rA}$ is non-negative, and 
\[
    \|\rC^T \odot \rB\|_\infty \ge \|\tilde{\rC}^T \odot \tilde{\rB}\|_\infty
\]
since in the even case \((\tilde{\rB}, \tilde{\rC})\) = \(({\rB}, {\rC})\) and in the odd case $\abs{\tilde{\rC}_j} \le \abs{{\rC}_j}$ because $\abs{{\rA}_{j, j}} \le 1$ for all $j$.
\end{proof}

\begin{lemma}~\label{lemma:2n choose n}
    For any positive integer \(n\), the following inequality holds:
    \[
    \binom{2n}{n} \ge \frac{2^{2n}}{2\sqrt{n}}
    \]
\end{lemma}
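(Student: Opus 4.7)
The plan is to prove the inequality by induction on $n$, after rescaling the quantity so that the target constant becomes $1$. Specifically, define
\[
a_n \;:=\; \binom{2n}{n} \cdot \frac{2\sqrt{n}}{4^n},
\]
so that the desired inequality is equivalent to $a_n \geq 1$ for all positive integers $n$.

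For the base case $n = 1$, a direct computation gives $a_1 = 2 \cdot \frac{2 \cdot 1}{4} = 1$, establishing the inequality with equality. For the inductive step, I would use the standard identity
\[
\binom{2n+2}{n+1} = \frac{(2n+2)(2n+1)}{(n+1)^2}\binom{2n}{n} = \frac{2(2n+1)}{n+1}\binom{2n}{n},
\]
which yields
\[
\frac{a_{n+1}}{a_n} = \frac{2(2n+1)}{n+1}\cdot \frac{1}{4}\cdot \sqrt{\frac{n+1}{n}} = \frac{2n+1}{2\sqrt{n(n+1)}}.
\]
To conclude that $a_{n+1} \geq a_n$, I would square and compare: $(2n+1)^2 = 4n^2 + 4n + 1 > 4n^2 + 4n = 4n(n+1)$, so the ratio strictly exceeds $1$. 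Combined with the base case, this proves $a_n \geq 1$ for all $n \geq 1$.

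There is essentially no obstacle here; the only thing to be careful about is the algebraic manipulation of the ratio $a_{n+1}/a_n$, and in particular the observation $(2n+1)^2 > 4n(n+1)$, which is the key inequality driving the induction. An alternative route would be to invoke Stirling's approximation directly, but the inductive proof avoids any asymptotic estimates and gives the bound with an explicit constant matching the one in the statement.
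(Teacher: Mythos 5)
Your proof is correct, but it takes a genuinely different route from the paper. The paper proves this lemma by quoting Stirling's formula with explicit error terms, bounding $\binom{2n}{n} > \frac{2^{2n}}{\sqrt{\pi n}\,e^{1/(6n)}}$ and then checking that $\sqrt{\pi}\,e^{1/(6n)} < 2$ for $n \ge 2$ (with equality handled separately at $n=1$). You instead normalize to $a_n = \binom{2n}{n}\cdot 2\sqrt{n}/4^n$ and run an induction, showing $a_{n+1}/a_n = \frac{2n+1}{2\sqrt{n(n+1)}} > 1$ via $(2n+1)^2 > 4n(n+1)$; all the algebra checks out, and the base case $a_1 = 1$ shows the constant $2$ in the denominator is exactly tight at $n=1$. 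Your argument is entirely elementary and self-contained — it needs no asymptotic machinery and in fact proves the stronger statement that $a_n$ is strictly increasing — whereas the paper's Stirling route is shorter to write given the quoted formula and exposes the asymptotically sharper constant $\sqrt{\pi n}$ in place of $2\sqrt{n}$. Either proof fully suffices for the way the lemma is used downstream (in \cref{result:r4delay,result:r4random}).
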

\begin{proof}
    Using the Stirling's formula for factorials, we know that \(n!\) is bounded by:
    \[
        \sqrt{2 \pi n} \left(\frac{n}{e}\right)^n e^{\left(\frac{1}{12n} - \frac{1}{360n^3}\right)} < n! < \sqrt{2 \pi n} \left(\frac{n}{e}\right)^n e^{\frac{1}{12n}}.
    \]
    which can be simplified to:
    \[
        \sqrt{2 \pi n} \left(\frac{n}{e}\right)^n < n! < \sqrt{2 \pi n} \left(\frac{n}{e}\right)^n e^{\frac{1}{12n}}.
    \]
    We can plug the bound in the factorial form of the binomial coefficients:
    \begin{align*}
        \binom{2n}{n} = \frac{(2n)!}{n!\cdot n!} > \frac{ \sqrt{2 \pi 2n} \left(\frac{2n}{e}\right)^{2n}}{ (\sqrt{2 \pi n})^2 \left(\frac{n}{e}\right)^{2n} e^{\frac{1}{6n}}} = \frac{2^{2n}}{\sqrt{\pi n}e^{\frac{1}{6n}}} > \frac{2^{2n}}{2\sqrt{n}}
    \end{align*}
    Where the last inequality is due to \(\sqrt{\pi} e^{\frac{1}{6n}} < 2\) for all $n \ge 2$. To complete the proof for all positive integers notice we get equality for $n=1$.
\end{proof}
\begin{lemma}~\label{lemma:n_choose_half}
    For any positive integer \(n\), the following inequalitie hold:
    \[ \binom{n}{\lfloor \frac{n}{2} \rfloor} \geq \frac{2^{n}}{4\sqrt{2n}} \]
\end{lemma}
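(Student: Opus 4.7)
The plan is to reduce everything to Lemma~\ref{lemma:2n choose n} (the bound on $\binom{2n}{n}$), which is the only non-trivial input we need, and then split on the parity of $n$.

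First, suppose $n$ is even, say $n=2k$. Then $\lfloor n/2 \rfloor = k$, so $\binom{n}{\lfloor n/2 \rfloor} = \binom{2k}{k}$, and Lemma~\ref{lemma:2n choose n} gives $\binom{2k}{k} \geq \frac{2^{2k}}{2\sqrt{k}} = \frac{2^n}{2\sqrt{n/2}} = \frac{2^n}{\sqrt{2n}}$, which is at least $\frac{2^n}{4\sqrt{2n}}$, as required. This case is essentially immediate.

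The odd case is the mild obstacle. For $n = 2k+1$, Lemma~\ref{lemma:2n choose n} does not apply directly, so I would use Pascal's identity together with the symmetry of the binomial coefficients: since $\binom{2k+1}{k} = \binom{2k+1}{k+1}$ and their sum is $\binom{2k+2}{k+1}$, we obtain the reduction $\binom{2k+1}{k} = \tfrac{1}{2}\binom{2k+2}{k+1}$. Applying Lemma~\ref{lemma:2n choose n} to $\binom{2k+2}{k+1}$ then yields $\binom{n}{\lfloor n/2 \rfloor} \geq \frac{2^{2k+1}}{2\sqrt{k+1}} = \frac{2^n}{2\sqrt{(n+1)/2}}$.

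It then remains to verify the elementary inequality $\frac{2^n}{2\sqrt{(n+1)/2}} \geq \frac{2^n}{4\sqrt{2n}}$, which is equivalent to $16 \cdot 2n \geq 4 \cdot 2(n+1)/2$, i.e.\ $16 n \geq n+1$, trivially true for all $n \geq 1$. Combining the two cases concludes the proof. The only place any care is needed is in the odd case, where the half-factor introduced by Pascal's identity is absorbed by the factor of $4$ (rather than $2$) in the statement.
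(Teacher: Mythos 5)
Your proof is correct and follows essentially the same route as the paper's: both split on the parity of $n$, apply \cref{lemma:2n choose n} directly when $n$ is even, and reduce the odd case to an adjacent central binomial coefficient. The only cosmetic difference is that in the odd case you pass \emph{up} to $\binom{n+1}{(n+1)/2}$ via Pascal's identity (gaining an exact factor of $\tfrac{1}{2}$), whereas the paper passes \emph{down} to $\binom{n-1}{(n-1)/2}$ via monotonicity; in both versions the resulting slack is absorbed by the factor $4$ in the statement.
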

\begin{proof}
    Utilizing ~\cref{lemma:2n choose n}
  
    we have
    \[ \binom{2m}{m} \geq \frac{2^{2m}}{2\sqrt{m}} .\]
    To validate the lemma, we consider cases based on the parity of \(n\). For even \(n\), 
    \[ 
        \binom{n}{\lfloor \frac{n}{2} \rfloor} = \binom{n}{\frac{n}{2}} \geq  \frac{2^{n}}{2\sqrt{\frac{n}{2}}} > \frac{2^{n}}{4\sqrt{2n}},
    \]
    and for odd \(n\),
    \[ 
        \binom{n}{\lfloor \frac{n}{2} \rfloor} = \binom{n}{\frac{n-1}{2}} \geq \binom{n-1}{\frac{n-1}{2}} \geq \frac{2^{n-2}}{2\sqrt{\frac{n-1}{2}}} \geq \frac{2^{n}}{4\sqrt{2n}},
    \]
    thereby proving the lemma as required.
\end{proof}

\begin{lemma}~\label{lemma:bounds_on_multiplication_of_parts}
    For any positive integers \( n \) and \( m \), it holds that
    \[ 
        {\left(\frac{m}{n+m}\right)}^m {\left(\frac{n}{n+m}\right)}^n \leq \frac{1}{2^{2\min(m,n)}}
        \]
\end{lemma}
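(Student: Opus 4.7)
My plan is to exploit the $(m,n)$-symmetry of the left-hand side to reduce the two-variable inequality to a one-variable one. Direct inspection shows that $\bigl(\tfrac{m}{n+m}\bigr)^m \bigl(\tfrac{n}{n+m}\bigr)^n$ is unchanged under swapping $m \leftrightarrow n$, so I may assume without loss of generality that $m \le n$, in which case $\min(m,n) = m$ and the goal becomes
\[
\left(\frac{m}{n+m}\right)^m \left(\frac{n}{n+m}\right)^n \;\le\; 4^{-m}.
\]
Taking the $m$th root and writing $k := n/m \ge 1$ (so that $m/(n+m) = 1/(k+1)$ and $n/(n+m) = k/(k+1)$), the desired bound is equivalent to the single-parameter claim
\[
\frac{k^k}{(k+1)^{k+1}} \;\le\; \frac{1}{4} \qquad \text{for all real } k \ge 1.
\]

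Next I would verify this real-analytic inequality by monotonicity. Define $g(k) := k\log k - (k+1)\log(k+1)$, so that $e^{g(k)} = k^k/(k+1)^{k+1}$. A short computation gives $g'(k) = \log k + 1 - \log(k+1) - 1 = \log\bigl(\tfrac{k}{k+1}\bigr) < 0$, so $g$ is strictly decreasing on $[1,\infty)$. At the boundary, $g(1) = -2\log 2$, giving $k^k/(k+1)^{k+1}\big|_{k=1} = 1/4$. Hence $k^k/(k+1)^{k+1} \le 1/4$ for all $k \ge 1$, and raising back to the $m$th power yields the desired bound. Composing with the symmetry reduction handles the complementary case $n \le m$, and the consistency check $m = n$ (which gives $(1/2)^{2m} = 4^{-m}$) shows the bound is tight.

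I do not anticipate a real obstacle. The only mild subtlety is that $k = n/m$ need not be an integer, but the derivative argument works for all real $k \ge 1$, so this causes no issue. The symmetry step is immediate from inspection, and the monotonicity of $g$ is a one-line calculation, so the proof reduces to these two elementary observations.
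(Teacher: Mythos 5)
Your proof is correct, but it takes a genuinely different route from the paper's. The paper argues purely algebraically: since both $\frac{m}{n+m}$ and $\frac{n}{n+m}$ lie in $(0,1)$, lowering each exponent to $\min(m,n)$ can only increase the product, which collapses the left-hand side to $\bigl(\tfrac{mn}{(n+m)^2}\bigr)^{\min(m,n)}$; the AM--GM inequality $(n+m)^2 \geq 4mn$ then finishes in one line. You instead exploit the $m \leftrightarrow n$ symmetry to assume $m \le n$, take the $m$th root, substitute $k = n/m$, and prove the one-variable bound $k^k/(k+1)^{k+1} \le 1/4$ for real $k \ge 1$ by showing $g(k) = k\log k - (k+1)\log(k+1)$ has negative derivative and value $-2\log 2$ at $k=1$. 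Your calculation checks out (in particular the $m$th root of the left-hand side is indeed $\tfrac{1}{k+1}\bigl(\tfrac{k}{k+1}\bigr)^k = k^k/(k+1)^{k+1}$, and the non-integrality of $k$ is harmless as you note). The paper's argument is shorter and avoids calculus entirely; yours buys a sharper structural picture --- strict monotonicity in $k = n/m$ and an explicit identification of where equality holds ($m=n$) --- at the cost of a derivative computation. Both are complete and valid.
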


\begin{proof}
We establish this inequality through direct analysis:
\begin{align*}
    {\left(\frac{m}{n+m}\right)}^m {\left(\frac{n}{n+m}\right)}^n
    & \leq {\left(\frac{m}{n+m}\right)}^{\min(m,n)} {\left(\frac{n}{n+m}\right)}^{\min(m,n)} \\
    & = {\left(\frac{mn}{{(n+m)}^2}\right)}^{\min(m,n)} \\
    & \leq {\left(\frac{mn}{4nm}\right)}^{\min(m,n)} \\
    &= \frac{1}{2^{2\min(m,n)}}
\end{align*}
This last inequality leverages the fact that \( {(n+m)}^2-4nm = {(n-m)}^2 > 0 \).
\end{proof}

\begin{lemma}~\label{lemma:bilinear-is-bounded}
    For any complex number \( x = a + bi \) with \( a \leq 0 \), it holds that 
    \[
        \left|\frac{1}{1-x}\right| \leq 1.
    \]
\end{lemma}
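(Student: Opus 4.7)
The plan is to prove the bound by a direct computation of $|1-x|^2$ and using the sign assumption on the real part $a$.

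First, I would write $1 - x = (1 - a) - bi$, which immediately gives
\[
|1-x|^2 = (1-a)^2 + b^2.
\]
Since $a \leq 0$, we have $1 - a \geq 1$, hence $(1-a)^2 \geq 1$. Combining this with the trivial bound $b^2 \geq 0$ yields $|1-x|^2 \geq 1$, so $|1-x| \geq 1$. Taking reciprocals (which is permissible since $1-x \neq 0$ when $a \leq 0$) gives $\left|\frac{1}{1-x}\right| = \frac{1}{|1-x|} \leq 1$, as required.

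There is no real obstacle here: the lemma is a one-line consequence of the elementary inequality $(1-a)^2 \geq 1$ for $a \leq 0$. The only subtlety worth noting is confirming that $1-x \neq 0$, which follows immediately since $\Re(1-x) = 1 - a \geq 1 > 0$, so division is well-defined.
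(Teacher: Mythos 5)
Your proof is correct and follows essentially the same route as the paper's: both reduce the claim to $|1-x|\geq 1$ by writing $|1-x|^2=(1-a)^2+b^2$ and using $a\leq 0$ to get $1-a\geq 1$. Your extra remark that $1-x\neq 0$ is a harmless (and slightly more careful) addition the paper omits.
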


\begin{proof}
    It suffices to show that \( |1-x| \geq 1 \). Indeed, we have
    \begin{align*}
        |1-x| = \sqrt{(1-a)^2 + b^2} \geq \sqrt{(1-a)^2} = |1-a| = 1-a \geq 1,
    \end{align*}
    as required.
\end{proof}

	\section{Implementation Details} \label{app:impl}

\cref{app:impl:theory,app:impl:world,app:impl:select} below present implementation details omitted from \cref{sec:exp:theory,sec:exp:world,sec:exp:select}, respectively. Source code for reproducing the results of \cref{sec:exp:theory,sec:exp:select} can be found at \url{https://github.com/edenlum/SSMComplexParamBenefits}. The results of \cref{sec:exp:world} were obtained using the official S4 repository, available at \url{https://github.com/state-spaces/s4} (Apache-2.0 license). All experiments were conducted on a single NVIDIA A6000 GPU.

\subsection{Theoretically Analyzed Setting} \label{app:impl:theory}

In all experiments, we parameterized state transition matrices in a manner that ensures stability, similarly to the LRU architecture~\cite{orvieto2023resurrecting}. For real SSMs, we performed a grid search for each optimizer, varying learning rates and initialization schemes. Namely, we evaluated learning rates of \(1 \times 10^{-4}, 1 \times 10^{-5}\) and \(1 \times 10^{-6}\), and randomly initialized the diagonal elements of \( \rA \) uniformly in \([-1, 1]\) or in \([-1, 0.99] \cup [0.99, 1]\). For complex SSMs, we used a learning rate of \(1 \times 10^{-5}\) and initialized the diagonal elements of \( \cA \) similarly to~\cite{orvieto2023resurrecting}, by sampling uniformly from the complex ring with radii \(0.99\) to \(1\). For all SSMs, we employed a cosine learning rate scheduler~\cite{loshchilov2017sgdrstochasticgradientdescent} and trained for half a million steps.

\subsection{Real-World Setting \label{app:impl:world}}

Our implementation is based on the official S4 repository, available at \url{https://github.com/state-spaces/s4} (Apache-2.0 license).
Unless stated otherwise, repository hyperparameters (pertaining to the neural network architecture and its training) were kept at their default values.
The SSM powering the architecture was adapted to a diagonal structure for the state transition matrix~\( A \). For real SSMs, we began with the same default configuration as for complex SSMs, and extended the configuration by performing a grid search over different optimizers (Adam~\cite{kingma2017adam}, AdamW~\cite{loshchilov2019decoupledweightdecayregularization}, SGD~\cite{robbins1951stochastic}) and initialization schemes (diagonal-real, diagonal-linear, and legs, as defined in the official S4 repository). 
We ran three random seeds with the default configuration, and a single random seed with all others.

\subsection{Selectivity} \label{app:impl:select}

\begin{figure}[t!]
    \centering
    \begin{tikzpicture}[scale=0.8]
        \definecolor{copycolor}{RGB}{173,216,230}  
        \definecolor{xcolor}{RGB}{144,238,144}     
    
        \node at (-1,0.6) {\textbf{Input:}};
    
        \draw[thick] (0,0) rectangle (1.2,1.2) node[midway] {\(\cdots\)};
    
        \draw[thick, fill=copycolor] (1.2,0) rectangle (2.4,1.2) node[midway] {$c$};
    
        \draw[thick, fill=xcolor] (2.4,0) rectangle (3.6,1.2) node[midway] {$\Xbf_{1}$};
        \draw[thick, fill=xcolor] (3.6,0) rectangle (4.8,1.2) node[midway] {$\Xbf_{2}$};
        \draw[thick, fill=xcolor] (4.8,0) rectangle (6,1.2) node[midway] {$\cdots$};
        \draw[thick, fill=xcolor] (6.0,0) rectangle (7.2,1.2) node[midway] {$\Xbf_{h}$};
    
        \draw[thick] (7.2,0) rectangle (8.4,1.2) node[midway] {$\cdots$};
    
        \draw[thick, fill=copycolor] (8.4,0) rectangle (9.6,1.2) node[midway] {$c$};
    
        \draw[thick, fill=xcolor] (9.6,0) rectangle (10.8,1.2) node[midway] {$\Xbf_1$};
    
        \draw[thick, fill=xcolor] (10.8,0) rectangle (12,1.2) node[midway] {$\cdots$};
    
        \draw[thick, fill=xcolor] (12,0) rectangle (13.2,1.2) node[midway] {$\Xbf_{h-1}$};
    
        \node at (-1,-1.3) {\textbf{Output:}};
    
        \draw[thick] (0,-1.9) rectangle (8.4,-0.7) node[midway] {$\cdots$};
    
        \draw[thick, fill=xcolor] (8.4,-1.9) rectangle (9.6,-0.7) node[midway] {$\Xbf_1$};
    
        \draw[thick, fill=xcolor] (9.6,-1.9) rectangle (10.8,-0.7) node[midway] {$\Xbf_2$};
    
        \draw[thick, fill=xcolor] (10.8,-1.9) rectangle (12,-0.7) node[midway] {$\cdots$};
    
        \draw[thick, fill=xcolor] (12,-1.9) rectangle (13.2,-0.7) node[midway] {$\Xbf_h$};

        \draw[decorate,decoration={brace,amplitude=5pt},yshift=0pt]
            (0,1.3) -- (8.4,1.3) node [black,midway,yshift=0.4cm] {\footnotesize $2h$};
    
    \end{tikzpicture}
    \caption{%
    Illustration of the induction-head task.
    See \cref{app:impl:select} for details.
    }
    \label{fig:induction_head_sketch}
\end{figure}
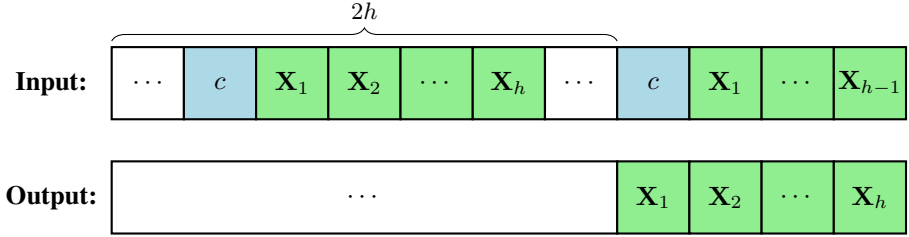

\paragraph{Copy task.} 
Our version of copy task has a delay of $t = 64$.  It entails input sequences of length \( 2t \), where tokens are sampled independently and uniformly as one-hot vectors of dimension~$16$. 
The corresponding output sequences are generated by shifting the input sequences \( t \) positions to the right, while introducing zero padding. 
That is, given an input sequence \( [a_1, a_2, \ldots, a_{2t}] \), the corresponding output sequence is \( [0, 0, \ldots, 0, a_1, a_2, \ldots, a_t] \). 
Training and evaluation (measurement of test accuracy) in this task are based on randomly generated examples, where the input is sampled from a uniform distribution. 
In training, a fresh batch of examples is generated for each iteration, and the loss corresponding to an example is the cross-entropy loss averaged over the last $t$ tokens of the output.
In evaluation, the zero-one loss is averaged over the last $t$ tokens of the output across one thousand freshly generated examples.

\paragraph{Induction-head task.} 
In the induction-head task~\cite{fu2022hungry}, the goal is to teach a model to identify and copy a specific subsequence of the input. 
Figure~\ref{fig:induction_head_sketch} illustrates this task, which in our case has an induction head size of $h = 128$. 
The task entails input sequences of length~$3h$, \ie, input sequences with $3h$ tokens. 
The first (less than~$h$) tokens in an input sequence are sampled independently and uniformly as one-hot vectors of dimension~$16$.
These tokens are followed by a special \emph{copy token}, $c = \textbf{0}$. 
After the copy token comes a sequence of~\( h \) tokens, sampled as before and denoted~\( \Xbf \). 
The \( 2 h \)th token is another copy token~$c$, after which the first to penultimate tokens of the sequence~$\Xbf$ repeat. 
The tokens between the first appearance of~$\Xbf$ and the second appearance of~$c$ are irrelevant for the task. 
The output sequence corresponding to the above-described input sequence is equal to the input sequence shifted one position to the left (with the last token of $\Xbf$ placed on the right). 
Training and evaluation (measurement of test accuracy) in this task are based on randomly generated examples, where the following aspects of the input are sampled from uniform distributions: the location of the first copy token~$c$, the sequence~$\Xbf$, and the irrelevant tokens. 
In training, a fresh batch of examples is generated for each iteration, and the loss corresponding to an example is the cross-entropy loss averaged over the last $h$ tokens of the output.
In evaluation, the zero-one loss is averaged over the last $h$ tokens of the output across one thousand freshly generated examples.

\paragraph{Hyperparameters.} 
Our implementation is based on a widely adopted Mamba repository, available at \url{https://github.com/alxndrTL/mamba.py} (MIT license).
Unless stated otherwise, repository hyperparameters (pertaining to the neural network architecture and its training) were kept at their default values.
The Mamba neural network was configured to have two layers with a hidden dimension of~$64$. 
We trained the network with a batch size of~$8$ and a learning rate of \( 1 \times 10^{-3} \).
Training comprised $1{,}000{,}000$ steps on the copy task, and $250{,}000$ steps on the induction-head task (the latter task warranted a shorter run-time since it was experimented with much more).
For complex parameterization, we changed the state transition matrix~$A$, the input matrix~$B$ and the output matrix~$C$ to be complex, while keeping the discretization parameter~$\Delta$ real.
To maintain parameter count, the dimension of SSMs with complex parameterization was half the dimension with real parameterization, namely, $8$ as opposed to~$16$.

	\ifdefined\ENABLEENDNOTES
		\theendnotes
	\fi
	

\end{document}